\DeclareMathOperator*{\Real}{Re}
\DeclareMathOperator*{\Imag}{Im}
\DeclareMathOperator*{\argmin}{arg\,min}
\DeclareMathOperator*{\argmax}{arg\,max}
\newcommand{\vertiii}[1]{{\left\vert\kern-0.25ex\left\vert\kern-0.25ex\left\vert #1 
    \right\vert\kern-0.25ex\right\vert\kern-0.25ex\right\vert}}
\DeclareFontFamily{U}{matha}{\hyphenchar\font45}
\DeclareFontShape{U}{matha}{m}{n}{
<-6> matha5 <6-7> matha6 <7-8> matha7
<8-9> matha8 <9-10> matha9
<10-12> matha10 <12-> matha12
}{}
\DeclareSymbolFont{matha}{U}{matha}{m}{n}
\DeclareFontFamily{U}{mathx}{\hyphenchar\font45}
\DeclareFontShape{U}{mathx}{m}{n}{
<-6> mathx5 <6-7> mathx6 <7-8> mathx7
<8-9> mathx8 <9-10> mathx9
<10-12> mathx10 <12-> mathx12
}{}
\DeclareSymbolFont{mathx}{U}{mathx}{m}{n}
\DeclareMathDelimiter{\vvvert} {0}{matha}{"7E}{mathx}{"17}%
\DeclarePairedDelimiterX{\normIII}[1]
{\vvvert}
{\vvvert}
{\ifblank{#1}{\:\cdot\:}{#1}}
\begin{document}

\title{Physics-informed deep learning and compressive collocation for high-dimensional diffusion-reaction equations: \\practical existence theory and numerics}

\author{\name Simone Brugiapaglia \email simone.brugiapaglia@concordia.ca \\
       \addr Department of Mathematics and Statistics\\
       Concordia University\\
       Montr\'eal, QC, Canada
       \AND
       \name Nick Dexter \email nick.dexter@fsu.edu \\
       \addr Department of Scientific Computing\\
       Florida State University\\
       Tallahassee, FL, USA
       \AND
       \name Samir Karam \email samir.karam@mail.concordia.ca \\
       \addr Department of Mathematics and Statistics\\
       Concordia University\\
       Montr\'eal, QC, Canada
       \AND
       \name Weiqi Wang \email weiqiwang@uvic.ca \\
       \addr Department of Physics \& Astronomy\\
       University of Victoria\\ 
       Victoria, BC, Canada
       }

\editor{Anima Anandkumar}

\maketitle

\begin{abstract}
On the forefront of scientific computing, Deep Learning (DL), i.e., machine learning with Deep Neural Networks (DNNs), has emerged a powerful new tool for solving Partial Differential Equations (PDEs). It has been observed that DNNs are particularly well suited to weakening the effect of the \emph{curse of dimensionality}, a term coined by Richard E. Bellman in the late `50s to describe challenges such as the exponential dependence of the sample complexity, i.e., the number of samples required to solve an approximation problem, on the dimension of the ambient space. 
However, although DNNs have been used to solve PDEs since the `90s, the literature underpinning their mathematical efficiency in terms of numerical analysis (i.e., stability, accuracy, and sample complexity), is only recently beginning to emerge. In this paper, we leverage recent advancements in function approximation using sparsity-based techniques and random sampling to develop and analyze an efficient high-dimensional PDE solver based on DL. We show, both theoretically and numerically, that it can compete with a novel stable and accurate compressive spectral collocation method for the solution of high-dimensional, steady-state diffusion-reaction equations with periodic boundary conditions. In particular, we demonstrate a new \emph{practical existence theorem}, which establishes the existence of a class of trainable DNNs with suitable bounds on the network architecture and a sufficient condition on the sample complexity, with logarithmic or, at worst, linear scaling in dimension, such that the resulting networks stably and accurately approximate a diffusion-reaction PDE with high probability.   
\end{abstract}

\begin{keywords}
  physics-informed neural networks, compressive Fourier collocation, numerical methods for high-dimensional PDEs, practical existence theorem
\end{keywords}

\section{Introduction}

PDEs over high-dimensional domains are a powerful mathematical modelling tool adopted in a variety of applications including molecular dynamics, computational finance, optimal control, and statistical mechanics. Important high-dimensional Partial Differential Equation (PDE) models in these areas are the many-electron Schr\"odinger equation, the Hamilton–Jacobi–Bellman equation, the Fokker-Planck equation and the Black-Scholes model. Analytic solutions to these equations are in general not available and, hence, it is necessary to design efficient numerical PDE solvers to approximate their solutions. A crucial challenge that immediately arises in this context is the \emph{curse of dimensionality}, see \cite{bellman1957dynamic, bellman1961adaptive}. This refers to the tendency of numerical methods for solving high-dimensional problems to exhibit a computational cost or require an amount of data that scales exponentially with the problem's dimension.

Recent work has shown that \emph{compressive sensing} and \emph{Deep Learning (DL)} are promising techniques to develop efficient high-dimensional PDE solvers and lessen the curse. This success is part of a larger research trend in the area of \emph{scientific machine learning} \citep{baker2019workshop}, where state-of-the-art techniques from machine learning are applied to solve challenging scientific computing problems, including the numerical solution of PDEs. One of the most popular recent examples in this area are \emph{Physics-Informed Neural Networks (PINNs)}, see \cite{Raissi2019} and earlier studies on the topic by \cite{Lagaris1998}, which recently gained an impressive amount of attention in the scientific computing community. In particular, DL based methods have shown great promise for high-dimensional PDEs, see \cite{han2018solving} and the recent review paper by \cite{weinan2021algorithms}, and PDEs on domains with complex geometries, see, e.g., \cite{chen2022bridging}. 

Concurrently, recent advancements involving compressive sensing in scientific computing include the adoption of sparsity-based techniques for function approximation from random samples, see \cite{rauhut2012sparse}, whose initial success was due to their application in the field of \emph{Uncertainty Quantification (UQ)} of parametric PDEs by \cite{doostan2011non}, see also \cite{adcock2022sparse} for a comprehensive review of the topic. Here we focus on \emph{Compressive Fourier Collocation} (\emph{CFC}), a method proposed in \cite{wang2022compressive} as an improvement of the compressive spectral collocation method from \cite{brugiapaglia2020compressivespectral} and able to lessen the curse of dimensionality in the number of collocation points. A detailed literature review on compressive sensing and DL methods for PDEs can be found in \S\ref{sec:literature}.

Motivated by these recent advances, in this paper we study and compare numerical solvers for high-dimensional PDEs based on compressive sensing and DL from both the theoretical and the numerical viewpoint, focusing on steady-state diffusion-reaction equations with periodic boundary conditions. The choice of the time-independent setting introduces a gap with real-world applications. Yet, it allows us to carry out a rigorous analysis while keeping the level of technical difficulty moderate and focus on the impact of the physical domain's dimension, which is the main subject of this work. Our methodological approach is inspired by the recent paper by \cite{adcock2021gap}, where a similar practical and theoretical study was made in the context of high-dimensional function approximation from pointwise samples.

\subsection{Main contributions}
Our main contributions, of both theoretical and computational nature, are summarized below.

\paragraph{i) New convergence theorem for high-dimensional periodic PINNs.} The main theoretical contribution of the paper is a new convergence theorem for periodic PINNs applied to (possibly high-dimensional) steady-state diffusion-reaction problems with periodic boundary conditions. Our theoretical guarantee is a convergence result in the form of \emph{practical existence theorem}, a theoretical approach recently developed in the context of scalar-valued function approximation via DL in \cite{adcock2021gap} and Hilbert-valued function approximation relevant to parametric PDEs and UQ in \cite{adcock2022deep}. In contrast with better-known universal approximation theorems (see, e.g., \cite{elbrachter2021} and references therein) that prove the existence of networks with suitable approximation properties, a \emph{practical} existence theorem also addresses the issues of network training and sample complexity. 
Our result, stated in Theorem~\ref{thm:PET} and proved in Section~\ref{sec:proofs}, establishes the existence of a class of trainable periodic PINNs with explicit architecture bounds and shows that networks in this class can achieve near-optimal approximation rates for PDE solutions that are sparse with respect to the Fourier basis through training using a number of samples that is only mildly affected by the curse of dimensionality (i.e., that scales logarithmically or, at worst, linearly with the domain's dimension $d$). Despite the presence of the adjective ``practical'' in the terminology used to refer to Theorem~\ref{thm:PET}, there are still gaps between this result and common implementation practice. For example, Theorem~\ref{thm:PET} is a result about networks obtained as minimizers of a certain regularized training loss. However, it does not involve any specific optimization method that may be used to approximate such minimizers (e.g., stochastic gradient descent or Adam). Further limitations of Theorem~\ref{thm:PET} are discussed extensively in \S\ref{sec:conclusion}.

\paragraph{ii) Numerical study of CFC and periodic PINNs in high dimensions.} Our second contribution is of computational nature and it is the implementation and numerical study of CFC and periodic PINNs for high-dimensional, steady-state diffusion-reaction problems on the torus $\mathbb{T}^d$, up to dimension $d=30$. We also compare these two methods by studying their accuracy as a function of the training set size (i.e., the number of collocation points). The code to reproduce our experiments can be found in the GitHub repository \url{https://github.com/WeiqiWangMath/PINN_high_dimensional_PDE}.

\paragraph{iii) Improved implementation of CFC.}  We propose a new variant of CFC, based on  \emph{adaptive lower Orthogonal Matching Pursuit (OMP)} recovery (see Algorithm~\ref{algo:lower_OMP}). This variant improves the accuracy of the CFC method from \cite{wang2022compressive} under suitable structural assumptions on the PDE solution and allows the method to scale in higher dimensions. This is due to the way it constructs the approximation support set by adaptively exploring the set of possible candidates to be added in an iterative procedure, thereby avoiding dealing with a large \emph{a priori} truncation set. 

\paragraph{iv) CFC convergence theorem for diffusion-reaction problems.} We show the convergence of CFC for steady-state, periodic diffusion-reaction problems in Theorem~\ref{thm:CFC}. Although this is mainly an auxiliary step needed to prove our main theoretical result (Theorem~\ref{thm:PET}) it is an extension of the analysis in \cite{wang2022compressive} (relying on the same techniques employed therein) of independent interest.

\subsection{Contextualization of our contributions}
\label{sec:literature}

Numerical methods for PDEs based on compressive sensing and sparse recovery have been considered in tandem with different discretization techniques. This includes Galerkin \citep{jokar2010sparse}, Petrov-Galerkin \citep{brugiapaglia2021wavelet,brugiapaglia2015compressed}, Fourier-Galerkin \citep{gross2023sparse}, isogeometric analysis \citep{brugiapaglia2020compressive}, spectral collocation approaches  \citep{brugiapaglia2020compressivespectral, wang2022compressive} and methods based on the sparse Fourier transform \citep{daubechies2007sparse}. In this paper, we are interested in methods that can be applied to high-dimensional domains. In particular, we focus on \emph{Compressive Fourier Collocation} (\emph{CFC}) \citep[see][]{wang2022compressive}, which aims to compute a sparse approximation to the PDE solution with respect to the Fourier basis from random collocation points. This method will be reviewed in detail in \S\ref{sec:CFC}. The Wavelet-Fourier CORSING method \citep{brugiapaglia2021wavelet} can in principle be implemented in domains in arbitrary dimension, although making practical numerical implementations scale in high dimensions is nontrivial. The sparse spectral method proposed in \cite{gross2023sparse}, based on Fourier-Galerkin discretization and sublinear time algorithms, can scale to extremely high-dimensional problems. 

The design of numerical PDE solvers based on neural networks dates back to the 1990s \citep{Lagaris1998,lee1990neural}. More recently, this field became extremely popular thanks to the introduction of \emph{Physics-Informed Neural Networks (PINNs)} \citep[see][]{Raissi2019,karniadakis2021physics}. Solvers based on DL have shown great promise specifically in the case of high-dimensional PDEs. In this direction, approaches proposed in the literature include the Deep Galerkin Method \citep{sirignano2018dgm}, methods based on the reformulation of high-dimensional PDEs as Stochastic Differential Equations (SDEs) \citep{han2017deep, han2018solving}, PINNs for high-dimensional problems \citep{hu2023tackling,zeng2022adaptive}, and deep genetic algorithms \citep{putri2024deep}. In this paper, we will consider PINNs combined with a periodic layer \citep{dong2021method} to solve high-dimensional PDEs on the $d$-dimensional torus. This approach will be presented in detail in \S\ref{sec:PINNs}.

Other approaches for the numerical solution of PDEs able to scale to moderately high dimensions include sparse grid methods (see, e.g., \cite{shen2010efficient, shen2012efficient}), methods based on tensor-based approximation and low rank structures \citep{bachmayr2015adaptive, bachmayr2016tensor, dahmen2016tensor}, and  sparse grid spectral methods \citep{kupka1997sparse}. 

Our main theoretical result (Theorem~\ref{thm:PET}) is a convergence theorem for PINNs over high-dimensional periodic domains. Currently, the convergence analysis of PINNs is an active research area and several studies have appeared in the literature. However, the theory is arguably far from being fully developed. The analysis in \cite{shin2020convergence} shows the convergence of PINNs for linear second-order elliptic problems, but it is based on H\"older-type regularization that is in general not implementable. The study in \cite{shin2023error} provides \emph{asymptotic} convergence results (i.e., with training set size $m\to \infty$) relying on assumptions in terms of Bernstein-type inequalities or Rademacher's complexity that could be challenging to verify in practice. The analysis in \cite{doumeche2023convergence} shows convergence results for PINNs of asymptotic type and nonasymptotic convergence rates for the expected squared $L^2$-error of the form $\log^2(m)/m^{1/2}$ for Sobolev-type regularized loss functions, where $m$ is the number of training data points used to collocate the PDE in the PINN's loss function. The work \citep{de2022error} proves generalization bounds for PINNs for Kolmogorov equations showing that PINNs can lessen the curse of dimensionality, under the assumption that the network's weights are bounded.
Compared to these results, our practical existence theorem (Theorem~\ref{thm:PET}) has the advantage of being of nonasymptotic type and the corresponding error bound leads to fast convergence rates with respect to the size of the training set when the PDE solution is sparse or compressible with respect to the Fourier basis.

Finally, the framework of practical existence theorems developed in \cite{adcock2021gap, adcock2022deep} for scalar- and Hilbert-valued high-dimensional function approximation was recently extended to Banach-valued function approximation in \cite{adcock2023near} and reduced-order modelling of parametric PDEs based on convolutional autoencoders in \cite{franco2024practical}. This framework relies on recently proposed convergence results for compressive sensing-based approximation in high dimensions by \cite{adcock2022sparse}. For a review of practical existence theory, see \cite{adcock2024learning}. All the practical existence theorems developed so far rely on the emulation of orthogonal polynomials with neural networks \citep[see][]{daws2019analysis,de-ryck2021approximation,opschoor2022exponential} and are only applied to approximating Hilbert or Banach-valued functions relevant to parametric PDEs or scalar-valued functions. In this paper, we extend the scope of practical existence theorems by emulating Fourier basis functions and considering the case of (non-parametric) PDE solvers.

\subsection{Outline of the paper}

We briefly outline the organization of the paper. We start by illustrating the model problem (a high-dimensional periodic diffusion-reaction equation), periodic PINNs and our main theoretical result (Theorem~\ref{thm:PET}) in \S\ref{sec:problem_setting}. Then, we present CFC and  adaptive lower OMP in \S\ref{sec:CFC}.  \S\ref{sec:numerics} contains an extensive numerical study of CFC and periodic PINNs. The proof of our convergence result is presented in \S\ref{sec:proofs}. Finally, we draw some conclusions and describe possible directions of future work in \S\ref{sec:conclusion}. Additional experiments investigating the numerical impact of certain components of Theorem~\ref{thm:PET} are presented in Appendix~\ref{sec:additional_numerics}.

\section{Problem setting}
\label{sec:problem_setting}

\paragraph{Notation.} We start by recalling some standard mathematical notation that will be employed throughout the paper. We denote the $d$-dimensional torus by $\mathbb{T}^d$, with $d \in \mathbb{N}$, where $\mathbb{T}:=[0,1]/\sim$ and $\sim$ is the equivalence relation on $[0,1]$ defined by $x\sim y$ if and only if $x-y \in \mathbb{Z}$. $L^2(\mathbb{T}^d)$ denotes the space of square-integrable functions, equipped with inner product $\langle v,w\rangle:= \int_{\mathbb{T}^d} v(\bm{x}) \overline{w(\bm{x})}\, \mathrm{d}\bm{x}$ and norm $\|v\|_{L^2} = \langle v, v \rangle^{1/2}$. Moreover, $L^\infty(\mathbb{T}^d)$ denotes the space of functions such that $\|v\|_{L^{\infty}}:=\text{ess}\sup_{\bm{x} \in \mathbb{T}^d}|v(\bm{x})| <\infty$. We also consider the Sobolev spaces $H^k(\mathbb{T}^d)$, $k =1,2$ equipped with norms $\|v\|_{H^1}:= (\|v\|_{L^2}^2 + \|\nabla v\|_{L^2}^2)^{1/2}$ and $\|v\|_{H^2}:= (\|v\|_{H^1}^2 + \|\nabla^2 v\|_{L^2}^2)^{1/2}$, respectively. Here $\nabla$ and $\nabla^2$ denote the gradient and the Hessian operators, respectively. Moreover, $\|\nabla v\|_{L^2}^2 = \int_{\mathbb{T}^d} \|\nabla v(\bm{x})\|_2^2 \, \mathrm{d}\bm{x}$ and $\|\nabla^2 v\|_{L^2}^2 = \int_{\mathbb{T}^d} \|\nabla^2 v(\bm{x})\|_F^2 \, \mathrm{d}\bm{x}$, where $\|\cdot\|_2$ is the discrete $2$-norm and $\|\cdot\|_{F}$ is the Frobenius norm. We will also consider the Sobolev spaces $W^{k,\infty}(\mathbb{T}^d)$, $k=1, 2$, equipped with norms $\|v\|_{W^{1,\infty}}:=\max\{\|v\|_{L^\infty}, \|\nabla v\|_{L^\infty}\}$ and $\|v\|_{W^{2,\infty}}:=\max\{\|v\|_{W^{1,\infty}}, \|\nabla^2 v\|_{L^\infty}\}$, respectively. Here $\|\nabla v\|_{L^\infty} = \mathrm{ess}\sup_{\bm{x} \in \mathbb{T}^d} \|\nabla v(\bm{x})\|_{\infty}$ and $\|\nabla^2 v\|_{L^\infty} = \mathrm{ess}\sup_{\bm{x} \in \mathbb{T}^d} \max_{i,j}|(\nabla^2 v(\bm{x}))_{ij}|$. $C^k(\mathbb{T}^d)$ denotes the space of $k$-times continuously differentiable functions over $\mathbb{T}^d$. We denote the set of first $n$ positive integers by $[n]:=\{1,\ldots,n\}$. The cardinality of a set $X$ is denoted as $|X|$. For a vector $\bm{z} \in\mathbb{C}^N$, we define its support as $\mathrm{supp}(\bm{z}) = \{j \in [N] : z_j \neq 0\}$.

\paragraph{Model problem.} Our model problem is a periodic diffusion-reaction equation over the $d$-dimensional torus $\mathbb{T}^d$. 
In this paper, we are interested in the scenario where $d \gg 1$. Moreover, we consider the following steady-state, periodic diffusion-reaction equation as our model problem:
\begin{equation}
- \nabla \cdot (a(\bm{x}) \nabla u(\bm{x})) + \rho u(\bm{x}) = f(\bm{x}), \quad \forall \bm{x} \in \mathbb{T}^d, \label{eq:diffusion_eq_periodic_1}
\end{equation}
where $u:\mathbb{T}^d \to \mathbb{R}$ is the PDE solution, and the diffusion coefficient $a:\mathbb{T}^d \to \mathbb{R}$, the reaction term $\rho \in \mathbb{R}$ and the forcing term $f:\mathbb{T}^d \to \mathbb{R}$ are assumed to satisfy 
\begin{equation}
\label{eq:suff_cond_a_rho_f} a \in C^1(\mathbb{T}^d), \quad
\min_{\bm{x} \in \mathbb{T}^d}a(\bm{x}) \geq a_{\min} >0, \quad 
\rho >0, \quad \text{and} \quad f \in L^2(\mathbb{T}^d). 
\end{equation} 
These conditions are sufficient for the problem \eqref{eq:diffusion_eq_periodic_1} to be well-posed (see, e.g., \cite[Proposition 2.1]{brugiapaglia2021wavelet}) and guarantee that its weak solutions belong to the Sobolev space $H^2(\mathbb{T}^d)$ (see, e.g., \cite[\S6.3]{evans2010partial}). Throughout the paper, we also assume $f$ to be regular enough for pointwise evaluations to be well-defined. Moreover, we define the PDE operator
\begin{equation}
\label{eq:def_operator_L}
\mathscr{L}[u](\bm{x}) := - \nabla \cdot (a(\bm{x}) \nabla u(\bm{x})) + \rho u(\bm{x}), \quad \forall \bm{x} \in \mathbb{T}^d, \quad \forall u \in H^2(\mathbb{T}^d).
\end{equation}
Despite its simplicity, \eqref{eq:diffusion_eq_periodic_1} is an interesting model problem since it shares the same second-order diffusion term with more complex PDEs such as the Black-Scholes, Schr\"odinger and Fokker-Planck models.

\subsection{Physics-Informed Neural Networks (PINNs) with periodic layer}
\label{sec:PINNs}

We start by illustrating the PINN setting adopted to solve the periodic high-dimensional diffusion-reaction problem \eqref{eq:diffusion_eq_periodic_1}.
First, we describe the Deep Neural Network (DNN) architecture employed in the method, then illustrate the training strategy. The framework presented here will encompass both our main theoretical result (presented in \S\ref{sec:main_result}) and the numerical experiments in \S\ref{sec:numerics}.

To enforce periodic boundary conditions, we consider an approach proposed in \cite{dong2021method}. This is achieved by adding a $C^{\infty}$ \emph{periodic layer}  to the DNN as the first layer. The periodic layer contains, in turn, two layers of width $dl$, for some $l\in \mathbb{N}$, denoted as $\bm{q}^{(1)} = (q_{ij}^{(1)})_{i \in [d], j \in [l]}$ and $\bm{v}^{(2)} = (v_{ij}^{(2)})_{i \in [d], j \in [l]}$, respectively. The neuron $q_{ij}^{(1)}$ operates on the $i$th component of the input vector $\bm{x} \in \mathbb{R}^d$ and applies a cosine transformation to enforce periodicity while adding a phase shift parameter $\phi_{ij}$:
\begin{equation}
\label{eq:def_periodic_layer}
q_{ij}^{(1)}(\bm{x}) = \cos(2\pi x_i+\phi_{ij}), \quad \forall \bm{x} \in \mathbb{R}^{d}, \quad \forall i \in [d], \forall j\in [l],
\end{equation}
where $\bm{\phi} = (\phi_{ij})_{i \in [d], j \in [l]} \in \mathbb{R}^{dl}$ are trainable parameters and
$l$ controls the number of neurons per dimension.  
Then, the neurons in $\bm{v}^{(2)}$ collect the outputs of the corresponding nodes in $\bm{q}^{(1)}$, and apply an affine transformation and (possibly nonlinear) activation function $\bm{\sigma}^{(2)}:\mathbb{R}^{dl} \to \mathbb{R}^{dl}$, i.e., 
\begin{equation}
\bm{v}^{(2)}(\bm{x}) = \bm{\sigma}^{(2)}\left(\text{diag}(\bm{w}^{(2)}) \bm{x}+ \bm{b}^{(2)}\right), \qquad \forall \bm{x} \in \mathbb{R}^{dl},
\end{equation}
where $\bm{w}^{(2)}$ and $\bm{b}^{(2)}$ are $dl$-dimensional vectors of trainable parameters and $\text{diag}(\bm{w}^{(2)})$ is a $dl \times dl$ matrix with $\bm{w}^{(2)}$ on the main diagonal and zeros elsewhere. 
After the periodic layer, the DNN has $h \in \mathbb{N}$ traditional hidden layers having width $w \in \mathbb{N}$. Each of these takes the output of the previous layer as input and applies a trainable affine transformation and a componentwise activation function $\sigma$ to it. These hidden layers are denoted by 
\begin{equation}
\label{eq:def_hidden_layer}
\bm{v}^{(k)}(\bm{x}) = \bm{\sigma}^{(k)}(W^{(k)} \bm{x} + \bm{b}^{(k)}), \quad \forall\bm{x} \in\begin{cases}\mathbb{R}^{dl} & \text{if } k =3\\\mathbb{R}^{w} & \text{otherwise}\end{cases}, \quad\forall k = 3, \ldots, h+2,
\end{equation}
where the weight matrix $W^{(k)}$ is $w\times dl$ if $k=3$ and $w\times w$ otherwise, $\bm{b}^{(k)}\in\mathbb{R}^w$ and $\bm{\sigma}^{(k)}:\mathbb{R}^w \to \mathbb{R}^w$. \S\ref{Sec:architecture} discusses the selection of the hyper-parameters $l$ (number of neurons per dimension in the periodic layer), $h$ and $w$ defining the DNN architecture. 
The last hidden layer of the network $\bm{v}^{(h+2)}$ activates linearly into one output neuron so that the output of the network is scalar-valued. That is, 
$$
v^{(h+3)}(\bm{x})=W^{(h+3)}\bm{x},\quad\forall\bm{x}\in\mathbb{R}^{w},
$$
where $W^{(h+3)}$ is $1 \times w$. In this paper, we consider componentwise activations $\bm{\sigma}^{(k)}$ of the form $\bm{\sigma}^{(k)}(\bm{x}) = (\sigma^{(k)}_j(x_j))$, where $\sigma^{(k)}_j$ could be either a linear activation (i.e., $\sigma^{(k)}_j(x)=x$) or a nonlinear activation of the following two types: the hyperbolic tangent (i.e., $\sigma^{(k)}_j(x)=\tanh(x)$) or the \emph{Rectified Power Unit (RePU)}, defined by
\begin{equation}
\label{eq:def_RePU}
\textnormal{RePU}_\ell(x) := \max\{0, x^\ell\},\quad \ell\in \mathbb{N}.
\end{equation}
Note that when $\ell=1$ the function $\textnormal{RePU}_\ell$ is  the Rectified Linear Unit (ReLU). In our numerical experiments (see \S\ref{sec:numerics}) we will consider tanh activations. However, RePU and linear activations will be used to derive our theoretical result (see \S\ref{sec:main_result}).

In summary, we consider DNNs $\psi:\mathbb{R}^d \to \mathbb{R}$ of the form
$$
\psi = v^{(h+3)} \circ  \bm{v}^{(h+2)}\cdots \circ\bm{v}^{(2)} \circ \bm{q}^{(1)}.
$$ 
Fig.\ref{Fig:NN_architecture} depicts the architecture of a DNN with periodic layer in dimension $d=2$.
\begin{figure}[!t]
\centering
\includegraphics[width=9 cm]{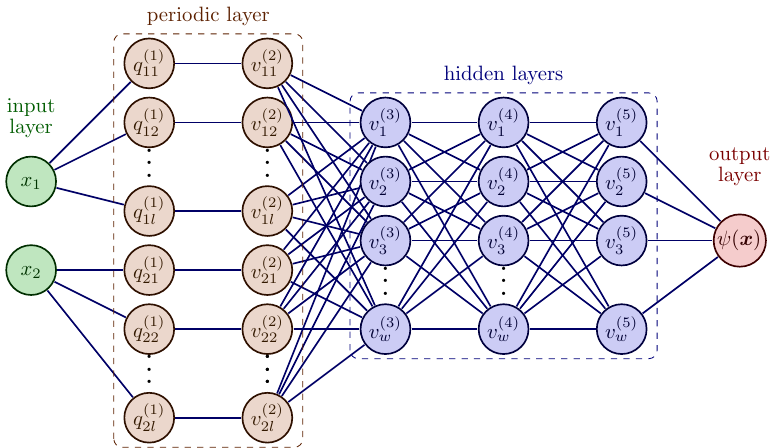}
\caption{Architecture of the neural network with the periodic layer ($d=2$). The number in superscript represents the layer number. In this case, $h = 3$ and depth$(\psi) = 7$. }  \label{Fig:NN_architecture}
\end{figure}
We also define the depth and width of the network as
\begin{equation}
\label{eq:def_width_depth}
\text{width}(\psi) = \max\{dl, w\}
\quad \text{and}\quad
\text{depth}(\psi) = h+4,
\end{equation}
where we included the input and the output layer in the depth count.

The DNN $\psi(\bm{x})$ is then trained to approximate the solution $u(\bm{x})$ of the high-dimensional PDE. Given collocation points drawn independently and uniformly at random from $\mathbb{T}^d$, i.e., 
\begin{equation}
\label{eq:def_collocation_points}
\bm{x}_1,\ldots, \bm{x}_m \stackrel{\text{i.i.d.}}{\sim} \text{Uniform}(\mathbb{T}^d),
\end{equation}
the parameters $\bm{\phi}$, $\bm{w}^{(2)}$, $W^{(k)}$, $\bm{b}^{(k)}$ of the DNN are learned by minimizing a regularized \emph{Root Mean Squared Error (RMSE)} loss, i.e., 
\begin{equation}
\label{eq:def_loss}
\min_{\psi} \sqrt{\frac{1}{m} \sum_{i=1}^{m} | \mathscr{L}[\psi](\bm{x}_i) - f(\bm{x}_i) |^2} + \lambda \mathcal{R}(\psi),
\end{equation}
where $\mathscr{L}$ is the PDE operator defined in \eqref{eq:def_operator_L}, $\lambda \geq 0$ is a tuning parameter, and $\mathcal{R}$ is a regularization term that usually involves the $\ell^2$- or $\ell^1$-norm of the networks' weights (corresponding to the \emph{weight decay} or \emph{sparse regularization} strategies, respectively). Minimizing the loss function \eqref{eq:def_loss} corresponds to finding the network $\psi$ that minimizes the PDE residual at the collocation points in a (regularized) least-squares sense. In the general PINN setting for solving stationary PDEs, the loss function usually consists of the sum of two components: the first one aims at minimizing the PDE residual (like in \eqref{eq:def_loss}) and the second component enforces boundary conditions. However, in our setting the periodic layer forces $\psi$ to be periodic, hence automatically enforcing boundary conditions. For this reason, the training loss does not contain a boundary condition term. The loss is then minimized by stochastic gradient descent methods. More technical details on the training procedure can be found in \S\ref{Sec:NN_performance}. 
We conclude by noting that other losses different from the RMSE can be considered for the PDE residual minimization. These include losses based on, e.g., $L^p$-norms \citep{wang20222} or Sobolev norms \citep{son2021sobolev}. Here we limit our attention to the regularized RMSE loss for the sake of simplicity and also because our main theoretical result holds for this loss. The presence of the regularization term $\mathcal{R}$ will be crucial for our convergence theorem, illustrated in the next subsection. In our numerical results we will train by simply minimizing the unregularized RMSE (or, equivalently, unregularized MSE) loss.

\subsection{A practical existence theorem for periodic PINNs}
\label{sec:main_result}

Before presenting our main result, namely a convergence theorem for periodic PINNs based on the framework of practical existence theory (Theorem~\ref{thm:PET}), we  need to introduce some definitions and further technical ingredients. In a nutshell, our main result shows that trained periodic PINNs are able to achieve the same accuracy as a sparse Fourier approximation of the PDE solution using a training set of collocation points whose size scales logarithmically or, at worst, linearly with the dimension $d$. This is a natural choice given the presence of periodic boundary conditions in \eqref{eq:diffusion_eq_periodic_1}. In addition, we will require some technical conditions on the PDE coefficients $a$ and $\rho$.

\paragraph{Target accuracy: sparse Fourier approximation.} The elements of the $L^2$-orthonormal Fourier basis are defined as
\begin{equation}
\label{eq:Fourier_system}
F_{\bm{\nu}}(\bm{x}) = \exp(2 \pi \mathrm{i} \, \bm{\nu} \cdot \bm{x}), \quad \forall \bm{\nu} \in \mathbb{Z}^d, \; \forall \bm{x} \in \mathbb{T}^d.
\end{equation}
In particular, we will focus on \emph{sparse} Fourier approximations supported on \emph{hyperbolic crosses} (see, e.g., \cite{dung2018hyperbolic,temlyakov2018multivariate} and references therein). The \emph{hyperbolic cross} of order $n$ is a multi-index set of $\mathbb{Z}^d$ defined as 
\begin{equation}
\label{eq:def_HC}
\Lambda^{\mathrm{HC}}_{d,n} = \left\{\bm{\nu}\in \mathbb{Z}^d : \prod_{k = 1}^d (|\nu_k| +1) \leq n  \right\}.
\end{equation}
The hyperbolic cross is a convenient choice in high-dimensional approximation since its cardinality grows moderately with respect to $d$ and $n$ when compared to other standard multi-index set choices such as the \emph{tensor product} and the \emph{total degree} sets, see, e.g., \cite[\S2.3]{adcock2022sparse}. Specifically, the following upper bound on the cardinality of the hyperbolic cross holds (see \cite{kuhn2015approximation,chernov2016new}, and also Eq.~(2.10) in \citep{wang2022compressive}):
\begin{equation}
    \label{eq:card_bound_HC}
    |\Lambda^{\mathrm{HC}}_{d,n}|\leq\min\{4n^516^d,e^2n^{2+\log_2(d)}\}.
\end{equation}

In order to leverage the CFC convergence theory needed for our practical existence theorem, we consider a rescaled version of the system $\{F_{\bm{\nu}}\}_{\bm{\nu} \in \mathbb{Z}^d}$, defined by
\begin{equation}
\label{eq:def_Psi}
\Psi_{\bm{\nu}} = \frac{1}{4\pi^2 \|\bm{\nu}\|_2^2+\rho/a_{\bm{0}}} F_{\bm{\nu}}, \quad \forall \bm{\nu} \in \mathbb{Z}^d,
\end{equation}
where $a_{\bm{0}} = \langle a, F_{\bm{0}} \rangle = \int_{\mathbb{T}^d} a(\bm{x})\,\mathrm{d}\bm{x}$. As explained in detail in \S\ref{sec:CFC_theory}, this rescaling ensures that $\{\mathscr{L}[\Psi_{\bm{\nu}}]\}_{\bm{\nu} \in \mathbb{Z}^d}$ is a \emph{bounded Riesz system} \citep{brugiapaglia2021sparse} of $L^2(\mathbb{T}^d)$ under sufficient conditions on $a$ and $\rho$, see \eqref{eq:diff_expansion} and \eqref{eq:suff_cond_PET} below, which is a crucial property needed to show the convergence of the CFC method. To gain some intuition about this rescaling, consider the simple case of constant diffusion $a \equiv 1$. Applying the PDE operator to the rescaled Fourier system yields 
$\mathscr{L}[\Psi_{\bm{\nu}}] = (-\Delta+\rho) \Psi_{\bm{\nu}} = F_{\bm{\nu}}$ for all $\bm{\nu} \in \mathbb{Z}^d$. In this simple scenario, $\{\mathscr{L}[\Psi_{\bm{\nu}}]\}_{\bm{\nu} \in \mathbb{Z}^d}$ is a bounded orthonormal system and, as such, is ideally suited for compressive sensing, \citep[see][\S12]{foucart2013mathematical}.

In this setting, we consider a finite-dimensional truncation of the solution $u$ to a finite multi-index set $\Lambda \subseteq \mathbb{Z}^d$ (expressed with respect to the rescaled Fourier basis), i.e.,
\begin{equation}
\label{eq:def_u_Lambda}
u_\Lambda = \sum_{\bm{\nu}\in\Lambda}c_{\bm{\nu}}\Psi_{\bm{\nu}}, \quad \text{with } c_{\bm{\nu}} = (4\pi^2 \|\bm{\nu}\|_2^2+\rho/a_{\bm{0}})\cdot\langle u, F_{\bm{\nu}}\rangle,
\end{equation}
and let $\bm{c}_{\Lambda} =  (c_{\bm{\nu}})_{\bm{\nu} \in \Lambda} \in \mathbb{C}^N$, where $N = |\Lambda|$. An \emph{$s$-sparse} approximation to $u$ is then obtained by only keeping $s$ terms in the expansion \eqref{eq:def_u_Lambda}. In general, we recall that a vector is said to be $s$-sparse if it has at most $s$ nonzero entries. The best possible accuracy of such an approximation is measured by the \emph{best $s$-term approximation error}, see \citep{cohen2009compressed} and references therein, defined as
$$
\sigma_s(\bm{c}_{\Lambda})_p=\min_{\bm{z}\in \mathbb{C}^N}\left\{\|\bm{c}_{\Lambda}-\bm{z}\|_p : \bm{z} \text{ is $s$-sparse}\right\}.
$$

\paragraph{Sufficient conditions on the PDE coefficients.} 
We also introduce a technical condition on the PDE coefficients $a$ and $\rho$ necessary for the convergence result.
We consider diffusion coefficients $a:\mathbb{T}^d \to \mathbb{R}$ having a sparse Fourier expansion, namely,
\begin{align}
\label{eq:diff_expansion}
a = a_{\bm{0}} + \sum_{\bm{\nu} \in T} a_{\bm{\nu}} F_{\bm{\nu}}, \quad \text{with } a_{\bm{\nu}} = \langle a, F_{\bm{\nu}}\rangle, \quad \forall \bm{\nu} \in T \cup \{\bm{0}\},
\end{align} 
for some $T \subseteq \mathbb{Z}^d \setminus \{\bm{0}\}$. In our main result, we will assume that $a$ and $\rho$ satisfy the following assumption:
\begin{align}
\label{eq:suff_cond_PET}
    &\sqrt{|T|} \cdot \|a - a_{\bm{0}}\|_{H^1} < 
    \sqrt{\left(a_{\bm{0}}+ \frac{\rho}{4\pi^2}\right)^2 + a_{\bm{0}}^2} -  \left(a_{\bm{0}} + \frac{\rho}{4\pi^2}\right).
\end{align}
Note that in the case of constant diffusion (i.e., $a\equiv a_{\bm{0}}$) the condition above is always satisfied under \eqref{eq:suff_cond_a_rho_f}. In general, condition \eqref{eq:suff_cond_PET} controls the oscillatory behaviour of $a$. We emphasize that the sparsity of $a$ and condition \eqref{eq:suff_cond_PET} are sufficient for our convergence theorem to hold, but we do not claim (nor believe) they are necessary. We also observe that our analysis does not take into account the potential sample complexity needed to estimate the term $a_{\bm{0}} = \int_{\mathbb{T}^d} a(\bm{x})\,\mathrm{d}\bm{x}$. In order to check the validity of condition \eqref{eq:suff_cond_PET}, one needs direct access to this quantity. If it is not already available, the extra samples needed to compute it via, e.g., Monte Carlo quadrature or other high-dimensional integration methods would have to be taken into account. \\

We are now in a position to state our main result, which provides the existence of a class of trained neural networks with architecture of the form described in \S\ref{sec:PINNs} able to approximate the solution to \eqref{eq:diffusion_eq_periodic_1} with accuracy comparable to that of an $s$-sparse Fourier approximation with high probability.

\begin{theorem}[Practical existence theorem for high-dimensional periodic PINNs]
\label{thm:PET}
Given a dimension $d\in \mathbb{N}$, target sparsity $s\in\mathbb{N}$, hyperbolic cross order $n\in \mathbb{N}$, RePU power $\ell \in \mathbb{N}$, with $\ell \geq 2$, and probability of failure $\varepsilon \in (0,1)$, there exist:
\begin{enumerate}[(i)]
    \item a class of neural networks $\mathcal{N}$ of the form  described in \S\ref{sec:PINNs} with $d$-dimensional input and $1$-dimensional output layers, $\textnormal{RePU}_\ell$ or linear activations, complex-valued weights and biases, and such that, for all $\psi \in \mathcal{N}$, 
\begin{align}
\label{eq:PET_width_bound}
    \textnormal{width}(\psi) & \leq c_{\ell}^{(1)} \cdot \min\left\{4n^516^d,e^2n^{2+\log_2d}\right\} \cdot d \cdot \min\{2^d, n\},\\
    \label{eq:PET_depth_bound}
    \textnormal{depth}(\psi) & \leq c^{(2)}\cdot \left(\log_2(n)+\min\{\log_2 d, n\}\right),  
\end{align}
\item a regularization function $\mathcal{R} : \mathcal{N} \to [0,\infty)$, 
\item a choice of tuning parameter $\lambda$ depending only on $a$, $\rho$ and $s$,
\end{enumerate}
such that the following holds with probability $1-\varepsilon$. For all $a:\mathbb{T}^d \to \mathbb{R}$ and $\rho \in \mathbb{R}$ satisfying \eqref{eq:suff_cond_a_rho_f}, \eqref{eq:diff_expansion} and \eqref{eq:suff_cond_PET}, let 
\begin{equation}
\label{eq:PET_sample_complexity}
    m\geq c^{(3)}_{a,\rho} \cdot s \cdot \log^2\left(c^{(4)}_{a,\rho}\cdot s\right)\cdot \left(\min\{\log(n)+d,\log(2n)\log(2d)\}+\log(\varepsilon^{-1})\right),
\end{equation}
$\Lambda =\Lambda^{\mathrm{HC}}_{d,n}$ as in \eqref{eq:def_HC}, and consider collocation points $\bm{x}_1, \ldots \bm{x}_m$ randomly and independently sampled from the uniform measure on $\mathbb{T}^d$. 
Then, every minimizer $\hat{\psi}$ of the training program 
\begin{equation}
\label{eq:MSE+regularization}
\min_{\psi \in \mathcal{N}} \sqrt{\frac 1m \sum_{i=1}^m \left|\mathscr{L}[\psi](\bm{x}_i) - f(\bm{x}_i)\right|^2} + \lambda \mathcal{R}(\psi)
\end{equation}
satisfies 
\begin{align}
\label{eq:PET_first_error_bound}
    \|u-\hat{\psi}\|_{L^2} + \|(\Delta-\rho)(u-\hat{\psi})\|_{L^2}
    & \leq C^{(1)}_{a,\rho}\cdot\frac{\sigma_s(\bm{c}_\Lambda)_1}{\sqrt{s}} + C^{(2)}_{a,d,\rho} \cdot\left(\frac{\|u-u_{\Lambda}\|_{W^{2,\infty}}}{\sqrt{s}} + \|u-u_{\Lambda}\|_{H^2}\right).
\end{align}
Moreover, if $\rho < 1$, we also have
\begin{align}
\label{eq:PET_H2_bound}
    \|u - \hat{\psi}\|_{H^2}
\leq C^{(3)}_{a,\rho}\cdot \frac{\sigma_s(\bm{c}_\Lambda)_1}{\sqrt{s}}+C^{(4)}_{a, d, \rho}\cdot \left(\frac{\|u-u_{\Lambda}\|_{W^{2,\infty}}}{\sqrt{s}}+\|u-u_{\Lambda}\|_{H^2}\right).
\end{align}
Here each constant depends only on the subscripted parameters. Moreover, the dependence of each constant on $d$ is at most linear (when present).
\end{theorem}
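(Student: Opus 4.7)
My plan is to reduce Theorem~\ref{thm:PET} to the Compressive Fourier Collocation (CFC) guarantee of Theorem~\ref{thm:CFC} by engineering the network class $\mathcal{N}$ and regularizer $\mathcal{R}$ so that the PINN training problem \eqref{eq:MSE+regularization} becomes, on $\mathcal{N}$, exactly the $\ell^1$-regularized CFC program on the hyperbolic cross $\Lambda = \Lambda^{\mathrm{HC}}_{d,n}$. The reduction requires an exact emulation of every function $\sum_{\bm{\nu}\in\Lambda} c_{\bm{\nu}}\Psi_{\bm{\nu}}$ by a network of the form described in \S\ref{sec:PINNs}. Once available, I would set $\mathcal{R}(\psi) := \|\bm{c}(\psi)\|_1$, where $\bm{c}(\psi) \in \mathbb{C}^N$ is the coefficient vector attached to $\psi$ by the emulation, and take $\lambda$ equal to the tuning parameter of Theorem~\ref{thm:CFC}. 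Then any minimizer $\hat{\psi}$ of \eqref{eq:MSE+regularization} corresponds to a minimizer $\hat{\bm{c}}$ of the $\ell^1$-regularized least-squares problem with design matrix $A_{ij} = \mathscr{L}[\Psi_{\bm{\nu}_j}](\bm{x}_i)$ and right-hand side $f(\bm{x}_i)$, and the error bounds \eqref{eq:PET_first_error_bound}--\eqref{eq:PET_H2_bound} transfer from Theorem~\ref{thm:CFC} under assumptions \eqref{eq:diff_expansion}--\eqref{eq:suff_cond_PET} with sample complexity \eqref{eq:PET_sample_complexity}.

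The emulation step is the crux. Since the periodic layer \eqref{eq:def_periodic_layer} already outputs the primitives $\cos(2\pi x_k + \phi_{kj})$, I would fix $\phi_{kj} \in \{0, -\pi/2\}$ to extract $\cos(2\pi x_k)$ and $\sin(2\pi x_k)$ for each $k\in[d]$. Higher-frequency harmonics are then obtained from the Chebyshev identities $\cos(2\pi\nu_k x_k) = T_{|\nu_k|}(\cos(2\pi x_k))$ and $\sin(2\pi\nu_k x_k) = \mathrm{sign}(\nu_k)\,\sin(2\pi x_k)\,U_{|\nu_k|-1}(\cos(2\pi x_k))$, and each multivariate mode $F_{\bm{\nu}} = \prod_k \mathrm{e}^{2\pi\mathrm{i}\nu_k x_k}$ is realized as a polynomial of controlled degree in these univariate harmonics. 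Since $\mathrm{RePU}_\ell$ networks with $\ell\geq 2$ can exactly emulate polynomials via standard multiplication/composition gadgets (see \cite{opschoor2022exponential,de-ryck2021approximation}), I would apply them in a balanced-binary-tree fashion across the $d$ coordinates. The depth bound \eqref{eq:PET_depth_bound} then arises from $\mathcal{O}(\log_2 n)$ levels for the univariate Chebyshev emulation stacked with $\min\{\log_2 d, n\}$ levels for the tree product; the width bound \eqref{eq:PET_width_bound} combines the two-sided hyperbolic-cross cardinality estimates $|\Lambda^{\mathrm{HC}}_{d,n}| \leq \min\{n^5 16^d/4,\, e^2 n^{2+\log_2 d}\}$ (see, e.g., \cite{dung2018hyperbolic}) with the per-coordinate factor $d\cdot\min\{2^d, n\}$ that accounts for the maximal Chebyshev degree $\max_{\bm{\nu}\in\Lambda}\|\bm{\nu}\|_\infty$.

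The hardest part will be realizing the two regimes of the width bound \eqref{eq:PET_width_bound} with the correct constants: the $16^d$ branch corresponds to a flat enumeration of Fourier modes that is sharp when $d$ is moderate, while the $n^{2+\log_2 d}$ branch requires a construction that genuinely exploits the hyperbolic-cross structure to avoid the exponential cost. A secondary delicate point is that the architecture in \S\ref{sec:PINNs} is scalar-valued with complex-valued weights, so the emulation must package real and imaginary parts of the exponentials so that they travel together through the network and the final output is a real scalar (consistent with the Hermitian symmetry of $\bm{c}_\Lambda$ when $u$ is real-valued). Once these are in place, translating the vector-norm error on $\hat{\bm{c}}$ delivered by Theorem~\ref{thm:CFC} into the function-space norms on the left-hand sides of \eqref{eq:PET_first_error_bound}--\eqref{eq:PET_H2_bound} is routine via the Riesz property of $\{\mathscr{L}[\Psi_{\bm{\nu}}]\}$ and elementary Sobolev estimates; the assumption $\rho<1$ in \eqref{eq:PET_H2_bound} is used precisely to recover $\|\Delta(u-\hat{\psi})\|_{L^2}$ from the residual norm on the left-hand side of \eqref{eq:PET_first_error_bound}.
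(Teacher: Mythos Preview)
Your proposal is correct and follows essentially the same route as the paper: build $\mathcal{N}$ so that every $\psi$ is exactly $\sum_{\bm{\nu}\in\Lambda} z_{\bm{\nu}}\Psi_{\bm{\nu}}$ via a periodic layer with phases $\{0,-\pi/2\}$, multi-angle (Chebyshev) formulas for higher frequencies, and RePU product subnetworks for tensorization; then set $\mathcal{R}(\psi)=\|\bm{z}\|_1$ so that \eqref{eq:MSE+regularization} coincides with the SR-LASSO problem of Theorem~\ref{thm:CFC}. One small correction on the architecture accounting: the factor $\min\{2^d,n\}$ in \eqref{eq:PET_width_bound} does not come from the maximal Chebyshev degree $\|\bm{\nu}\|_\infty$ but from expanding each complex product $\prod_{j\in\mathrm{supp}(\bm{\nu})} e^{2\pi i \nu_j x_j}$ into $2^{\|\bm{\nu}\|_0}$ real products, combined with the hyperbolic-cross bound $\|\bm{\nu}\|_0\leq\min\{d,\log_2 n\}$.
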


\begin{proof}\textbf{sketch.}
A complete proof of Theorem~\ref{thm:PET} is given in \S\ref{sec:proof_PET}. It leverages the convergence theory of CFC (presented and proved subsequently in \S\ref{sec:CFC_theory}) thanks to an \emph{ad hoc} construction of the network class $\mathcal{N}$. The idea is to construct networks $\psi\in\mathcal{N}$ so as to explicitly replicate linear combinations of Fourier functions $\{F_{\bm{\nu}}\}_{\bm{\nu}\in\Lambda}$ supported on the hyperbolic cross $\Lambda$, recall \eqref{eq:def_u_Lambda}. In this construction, only the last layer, corresponding to the coefficients of the linear combination is trained. The rest of the network is explicitly constructed to replicate (a suitably rescaled version of) the Fourier basis functions. In this setting, minimizing the regularized loss \eqref{eq:MSE+regularization} where $\mathcal{R}(\psi)$ is the $1$-norm of the weights in the last layer of the network $\psi$ is equivalent to solving a sparse regularization problem, specifically, a square-root LASSO problem \citep{adcock2019correcting, belloni2011square}, which allows to rigorously connect periodic PINNs' training with the CFC convergence theory (see Theorem~\ref{thm:CFC}).
\end{proof}

We conclude this section by highlighting some important features of Theorem~\ref{thm:PET}.
\begin{enumerate}[(i)]
\item Theorem~\ref{thm:PET} is called a \emph{practical} existence theorem since, as opposed to standard neural network existence results such as universal approximation theorems, see, e.g., \citep{elbrachter2021} and references therein, it not only guarantees the existence of neural networks (in this case, periodic PINNs) with favorable approximation properties, but also establishes that such networks can be computed by training a regularized RMSE loss and provides a sufficient condition on the minimum number of samples (i.e., collocation points) for the training process to be successful.
\item One of the key benefits of Theorem~\ref{thm:PET} is that the minimum number of training samples $m$ needed to successfully train a periodic PINN from the class $\mathcal{N}$ to solve a $d$-dimensional reaction-diffusion problem scales logarithmically (when $d \gg n$) or, at worst, linearly (when $n \gg d$) in $d$. This indicates that periodic PINNs are provably able to alleviate the curse of dimensionality.
\item As established by the error bound \eqref{eq:PET_first_error_bound}, the periodic PINN approximation accuracy guaranteed by Theorem~\ref{thm:PET} is controlled by the best $s$-term approximation error $\sigma_s(\bm{c}_\Lambda)_1$ and by the truncation error $u-u_{\Lambda}$ (measured with respect to the $W^{2, \infty}$- and the $H^2$-norm). This accuracy is inherited by the CFC convergence result that Theorem~\ref{thm:PET}'s proof relies on (see Theorem~\ref{thm:CFC}). In this paper we do not assume that $u$ belongs to a specific function class, but we note that the best $s$-term approximation and the hyperbolic cross truncation error could be bounded for functions satisfying suitable mixed regularity conditions, see \citep{dung2018hyperbolic, temlyakov2018multivariate} and the discussion in \cite[\S2.3]{wang20222}. In general, $\sigma_s(\bm{c}_\Lambda)_1$ can be estimated via Stechkin's inequality, see, e.g., \cite[Lemma 3.5]{adcock2022sparse}.
\item Theorem~\ref{thm:PET} also provides explicit bounds on the architecture of the periodic PINNs from class $\mathcal{N}$. The networks' depth scales logarithmically 
 in the dimension $d$ and the networks' width scales polynomially in $d$ (note that $n^{\log_2 d} = d^{\log_2 n}$), for sufficiently large $d$.
\end{enumerate}
The main limitations of Theorem~\ref{thm:PET} and related open problems are discussed in \S\ref{sec:conclusion}. The numerical impact of some of the theoretical ingredients of Theorem~\ref{thm:PET} is investigated in Appendix~\ref{sec:additional_numerics}.

\section{Compressive Fourier Collocation (CFC)}
\label{sec:CFC}

In this section, we illustrate the CFC method and its efficient numerical implementation via adaptive lower OMP. For more details, we refer to \cite{wang2022compressive}. Similarly to \S\ref{sec:main_result}, we consider a finite multi-index set $\Lambda \subset \mathbb{Z}^d$, the rescaled Fourier basis $\{\Psi_{\bm{\nu}}\}_{\bm{\nu}\in\mathbb{Z}^d}$ defined in \eqref{eq:def_Psi} and a finite-dimensional expansion $u_\Lambda$ of the form \eqref{eq:def_u_Lambda}.
Then, similarly to the PINN approach, we collocate the diffusion-reaction equation \eqref{eq:diffusion_eq_periodic_1} by means of Monte Carlo sampling. Hence, we randomly generate $m$ i.i.d.\ uniform points
$\bm{x}_1,\ldots,\bm{x}_m \in \mathbb{T}^d$, as in \eqref{eq:def_collocation_points}.
Letting $N=|\Lambda|$, we assume to have an ordering for the multi-indices in 
$\Lambda = \{\bm{\nu}_1, \ldots,\bm{\nu}_N\}$ (e.g., the lexicographic ordering). The PDE collocation process leads to the linear system
\begin{equation}
\label{eq:SC_system}
A \bm{z} = \bm{b},
\end{equation}
where $A \in \mathbb{C}^{m \times N}$ and $\bm{b} \in \mathbb{C}^m$ are defined by
\begin{equation}
\label{eq:def_A_b}
A_{ij}  = \frac{1}{\sqrt{m}}[-\nabla \cdot (a \nabla \Psi_{\bm{\nu}_j})+\rho\Psi_{\bm{\nu}_j}](\bm{x}_i) 
\quad \text{and} \quad 
b_{i}  = \frac{1}{\sqrt{m}} f(\bm{x}_i), \quad \forall i \in [m], j \in [N].
\end{equation}
We refer to $A$ as the CFC matrix. This collocation method is \emph{compressive} since we choose $m\ll N$. This makes the linear system \eqref{eq:SC_system} underdetermined.

\paragraph{Computing the CFC solution.}
Following the approach in \cite{wang2022compressive}, a CFC approximation $\hat{u}$ to $u$  can be computed by approximately solving the underdetermined linear system \eqref{eq:SC_system} via sparse recovery techniques. Therefore, we need to determine (i) a suitable truncation multi-index set $\Lambda$ and (ii) a sparse recovery method to approximately solve the linear system \eqref{eq:SC_system}. In this paper, the truncation set $\Lambda$ is chosen as a \emph{hyperbolic cross} of order $n$, i.e., $\Lambda = \Lambda^{\mathrm{HC}}_{d,n}$, recall \eqref{eq:def_HC}.
The hyperbolic cross offers a twofold advantage. First, as discussed in \S\ref{sec:main_result}, the cardinality of $\Lambda^{\mathrm{HC}}_{d,n}$ grows moderately in $n$ and $d$ when compared to other standard multi-index families (e.g., tensor product and total degree set). Second, a hyperbolic cross can be characterized as the union of \emph{lower} (or, equivalently, \emph{downward closed} or \emph{monotone}) sets of a given cardinality (\emph{cp}.\ Definition~\ref{def:lower}). 

After fixing $\Lambda$, we compute an approximate solution $\hat{\bm{c}}=(\hat{c}_{\bm{\nu}})_{\bm{\nu} \in \Lambda} \in \mathbb{C}^N$ to the underdetermined linear system \eqref{eq:SC_system}. To do so, we employ sparse recovery techniques, such as \emph{Orthogonal Matching Pursuit} (\emph{OMP}) or $\ell^1$ minimization \citep{foucart2013mathematical}. Finally, we define the CFC approximation as
\begin{equation}
\label{eq:def_uhat}
\hat{u} = \sum_{\bm{\nu} \in \Lambda} \hat{c}_{\bm{\nu}} \Psi_{\bm{\nu}}.
\end{equation}
In \cite{wang2022compressive} it was shown that $\hat{u}$ is an accurate and stable approximation to $u$ for high-dimensional diffusion equations, under sufficient conditions on the diffusion term and for a number of collocation points that scales only logarithmically with the ambient dimension $d$. In Theorem~\ref{thm:CFC}, we will extend the CFC convergence analysis from \cite{wang2022compressive} to diffusion-reaction problems. From the computational viewpoint, if one wants to compute a very sparse CFC approximation to the solution $u$, then OMP typically offers a faster reconstruction than solving an $\ell^1$ minimization problem via a convex optimization solver.

\subsection{Adaptive lower Orthogonal Matching Pursuit (OMP)}
\label{sssec:adaptive_lower_OMP}

For very high-dimensional domains (say, $d > 20$), the cardinality of the hyperbolic cross, despite being moderate with respect to other multi-index choices, becomes considerably large. This can make storing $A$ and computing $\hat{\bm{c}}$ genuinely challenging. To deal with higher dimensions, we need a more efficient recovery procedure that does not rely on choosing a large \emph{a priori} truncation set $\Lambda$, but constructs it iteratively.
This can be achieved by considering a reconstruction strategy called \emph{adaptive lower Orthogonal Matching Pursuit (OMP)}, that we now illustrate. The adaptive lower OMP algorithm presented here is based on analogous techniques employed in adaptive high-dimensional approximation, in particular in sparse grids \citep{gerstner2003dimension} and least squares methods \citep{migliorati2014adaptive,migliorati2019adaptive}.

First, we define the notion of lower set. Lower sets are an important class of multi-index sets in approximation theory and we refer to, e.g., \cite{cohen2018multivariate} or \cite[\S1.5]{adcock2022sparse} and references therein for further reading. Note that for polynomial approximations, lower sets are typically defined in $\mathbb{N}_0^d$. However, since we employ the complex Fourier basis here we consider lower sets in $\mathbb{Z}^d$. Before defining lower sets, we introduce a convenient notation to compare multi-indices of $\mathbb{Z}^d$. For $\bm{\mu},\,\bm{\nu} \in \mathbb{Z}^d$, we use  $\bm{\mu}\preceq\bm{\nu}$ to indicate that $|\mu_i| \leq |\nu_i|$ for every $i \in [d]$. Moreover,  if $\bm{\mu} \preceq \bm{\nu}$ and there exists an $i \in [d]$ such that $|\mu_i|<|\nu_i|$, then we say that $\bm{\mu}\prec\bm{\nu}$.
\begin{definition}[Lower set of $\mathbb{Z}^d$]
\label{def:lower}
A multi-index set $\Lambda \subseteq \mathbb{Z}^d$ is said to be \emph{lower} if the following holds for all $\bm{\nu}\in \Lambda$ and $\bm{\mu} \in \mathbb{Z}^d$: if  $\bm{\mu}\preceq \bm{\nu}$, then $\bm{\mu}\in \Lambda$.
\end{definition}
An example of a lower set is given in Fig.~\ref{fig:greedy_step} (blue dots). An equivalent condition for $\Lambda \subseteq \mathbb{Z}^d$ to be a lower set is the following: if $\bm{\nu} \in \Lambda$, then the box $\prod_{k=1}^d[-|\nu_k|, |\nu_k|] \subseteq \Lambda$. Note that lower sets of $\mathbb{Z}^d$ are symmetric with respect to all coordinate hyperplanes $\{\bm{\nu}\in \mathbb{Z}^d : \nu_k = 0\}$, with $k \in [d]$. This symmetry can be justified as follows. For real-valued solutions $u :\mathbb{T}^d \to \mathbb{R}$ one could employ a real Fourier expansion with respect to basis functions of the form $\prod_{k=1}^d \xi_k(2 \pi \nu_k x_k)$, where $\xi_k \in \{\sin, \cos\}$ and $\bm{\nu} \in \mathbb{N}_0^d$. It is a simple exercise to verify that the complex Fourier expansion of each of these real-valued trigonometric functions is supported on a set symmetric with respect to every coordinate hyperplane. 

In order to create a mechanism able to iteratively enlarge a multi-index set while preserving the lower set structure, we introduce the notion of reduced margin.
\begin{definition}[Reduced margin]
\label{def:margin}
The \emph{reduced margin} of a (nonempty) lower set $\Lambda \subseteq \mathbb{Z}^d$ is the multi-index set
$
\mathcal{R}(\Lambda) := \{\bm{\nu} \in \mathbb{Z}^d :\,\bm{\nu} \notin \Lambda \text{ and } \forall \bm{\mu} \prec \bm{\nu}, \, \bm{\mu} \in \Lambda \}.
$
Moreover, we let $\mathcal{R}(\emptyset) := \{\bm{0}\}$.
\end{definition}
We provide an illustration of the reduced margin in Fig.~\ref{fig:greedy_step} (red dots).
A key property of the reduced margin is that if $\Lambda \subseteq \mathbb{Z}^d$ is a lower set and $\bm{\nu} \in \mathcal{R}(\Lambda)$ then $\Lambda \cup \{\bm{\mu}\in \mathbb{Z}^d :  |\mu_i| = |\nu_i|, \; \forall i \in [d]\}$ is also a lower set.

The adaptive lower OMP algorithm is a structured variant of OMP, see, e.g., \citep[\S3.2]{foucart2013mathematical} and references therein, with the only difference that the greedy search is restricted to iteratively enlarged lower sets. The algorithm generates a nested sequence of lower sets $\emptyset=\Lambda^{(0)} \subseteq \Lambda^{(1)} \subseteq \ldots \subseteq \Lambda^{(K)}$ and then computes a sequence of least-squares solutions $\bm{z}^{(n)}$ to the linear system \eqref{eq:SC_system} such that $\mathrm{supp}(\bm{z}^{(n)}) \subseteq \Lambda^{(n)}$. At each iteration, adaptive lower OMP extends the existing multi-index set $\Lambda_n$ by picking an element $\bm{\nu}^{(n)} \in \mathcal{R}(\Lambda^{(n)})$ corresponding to the largest absolute residual $\left|(A^*(\bm{b}-A \bm{z}^{(n)}))_{\bm{\nu}}\right|$.
Then, it also adds all reflections of $\bm{\nu}^{(n)}$ with respect to the coordinate hyperplanes $\{\bm{\nu} \in \mathbb{Z}^d : \nu_k = 0\}$ to preserve the lower structure. 
The greedy selection criterion of adaptive lower OMP (Line~\ref{step:greedy} in Algorithm~\ref{algo:lower_OMP}) is illustrated in Fig.~\ref{fig:greedy_step}.
\begin{figure}
\centering
    \includegraphics[width = 7cm]{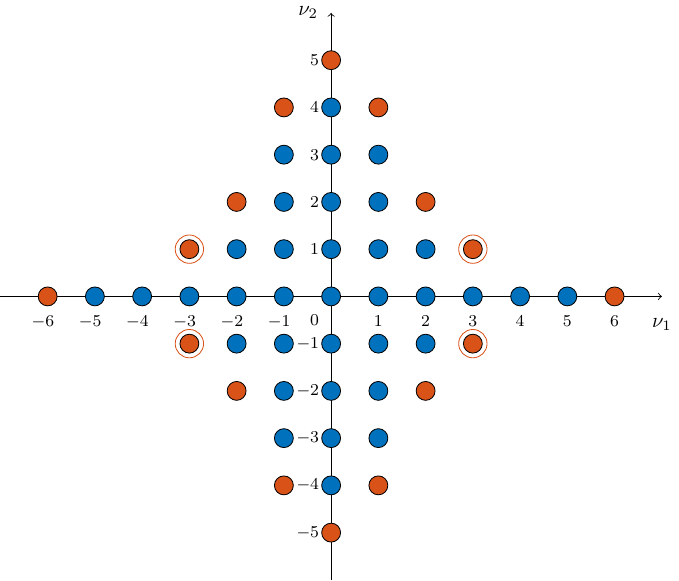}
    \caption{\label{fig:greedy_step} Illustration of the greedy selection criterion of adaptive lower OMP (Line~\ref{step:greedy} in Algorithm~\ref{algo:lower_OMP}). Considering a given lower set $\Lambda^{(n)} \subset \mathbb{Z}^2$ (blue), its reduced margin $\mathcal{R}(\Lambda^{(n)})$ is drawn in red. If the quantity $|A^*(\bm{b}-A\bm{z}^{(n)})_{\bm{\nu}}|$ is minimized at $\bm{\nu} = (3,1)$, then $\Lambda^{(n+1)}$ is constructed by adding the red circled dots to  $\Lambda^{(n)}$.}
\end{figure}
The method is summarized in Algorithm~\ref{algo:lower_OMP}.
\begin{algorithm}[H] 
\caption{Adaptive lower Orthogonal Matching Pursuit (OMP)\label{algo:lower_OMP}} 
\begin{algorithmic}[1] 
\REQUIRE{$A\in \mathbb{C}^{m \times N}$, $\bm{b}\in \mathbb{C}^m$, and number of iterations $K \in \mathbb{N}$}
\ENSURE{A $K$-sparse vector $\hat{\bm{c}} \in \mathbb{C}^N $, approximately solving \eqref{eq:SC_system}}
\STATE{$\Lambda^{(0)}=\emptyset,\ \bm{z}^{(0)}=\bm{0},$}
\FOR{$n=0,\dots,K-1$}
    \STATE{Compute the columns of $A$ corresponding to $\mathcal{R}(\Lambda^{(n)})$ (see Definition~\ref{def:margin})}
    \STATE{$d_{\bm{\nu}} \gets 1/\sqrt{\sum^m_{i=1}|A_{i\bm{\nu}}|^2}$, $A_{i\bm{\nu}} \gets A_{i\bm{\nu}}d_{\bm{\nu}}$, for all $\bm{\nu} \in \mathcal{R}(\Lambda^{(n)})$ ($\ell^2$-normalize the new columns of $A$)}
    \STATE{$\bm{\nu}^{(n)} \gets \displaystyle \argmax_{\bm{\nu} \in  \mathcal{R}(\Lambda^{(n)})} \left|(A^*(\bm{b}-A \bm{z}^{(n)}))_{\bm{\nu}}\right| $ (greedy multi-index selection) \label{step:greedy}}
    \STATE{$\Lambda^{(n+1)} \gets  \ \Lambda^{(n)} \cup \{\bm{\mu} \in \mathbb{Z}^d :  |\mu_i| = |\nu_i^{(n)}|, \, \forall i \in [d]\}$}
    \STATE{$\bm{z}^{(n+1)} \gets \displaystyle\arg\min_{\bm{z}\in \mathbb{C}^N} \left\{ \left\| \bm{b} - A \bm{z}\right\|_2^2,\,\mbox{supp}(\bm{z}) \subseteq \Lambda^{(n+1)} \right\}$}
\ENDFOR
\STATE{$\hat{\bm{c}} \gets D \bm{z}^{(K)}$, with $D = \text{diag}(\bm{d})$} 
\end{algorithmic}
\end{algorithm}

\section{A numerical study of CFC and periodic PINNs}
\label{sec:numerics}

In this section, we present numerical experiments on the high-dimensional periodic diffusion-reaction equation  \eqref{eq:diffusion_eq_periodic_1}. After illustrating the numerical setup in \S\ref{Sec:numerical_setting}, we conduct tests on PINNs with periodic layer in \S\ref{Sec:NN_performance} and the (adaptive) CFC method in \S\ref{Sec:LowerOMP_result}. Then, we compare the two approaches in \S\ref{Sec:CS_vs_NN}.

\subsection{Numerical setup} \label{Sec:numerical_setting}

We start by introducing the setup of our numerical experiments.

\paragraph{Measurement of errors.} In all numerical experiments, we use the relative $L^2$-error to measure the approximation error. It is defined as
$$
\text{relative}\ L^2\text{-error} = \frac{\|u-\hat{u}\|_{L^2}}{\|u\|_{L^2}},
$$
where $\hat{u}$ is the computed approximation to an exact solution $u$. Computing the $L^2$-norms is a challenge in high dimension, therefore the norms $\|\cdot\|_{L^2}$ are approximated using Monte Carlo integration, i.e.,
$$
\|u\|_{L^2} \approx \sqrt{\frac{1}{M}\sum^{M}_{i=1} |u(\bm{x}_i)|^2},
$$
where the $\bm{x}_1, \ldots, \bm{x}_M$ are $M \gg 1$ random independent points uniformly distributed in ${\mathbb{T}^d}$. Throughout this section, we choose $M=10000$.

\paragraph{Diffusion coefficient.}
As shown in \cite{wang2022compressive}, the choice of the type of diffusion coefficient (e.g., constant, sparse, or non-sparse) does not impact the numerical results for the compressive Fourier collocation method solving high-dimensional PDEs. Thus, in all experiments, we use the following diffusion coefficient:
$$
a(\bm{x}) = 1+ 0.25\sin{(2\pi x_1)}\sin{(2\pi x_2)}.
$$

\paragraph{Exact solutions.}
In our experiments, we will consider the problems having the following three high-dimensional functions as their solution:
\begin{align}
u_{1}(\bm{x}) & = \sin(4\pi x_1) \sin(2\pi x_2), & \text{(Example 1)} \label{eq:exact1}
\\
u_{2}(\bm{x}) & = \exp \left(\sin(2\pi x_1)+ \sin(2\pi x_2)\right), & \text{(Example 2)} \label{eq:exact2}
\\
u_{3}(\bm{x}) &= \exp \left(  \sum^d_{k=1} \frac{1}{k^2}\sin(2\pi x_k)\right).
& \text{(Example 3)} \label{eq:exact3}
\end{align}
In the following, we will use Examples 1, 2, and 3 to refer to diffusion-reaction equations with exact solutions given by \eqref{eq:exact1}, \eqref{eq:exact2}, and \eqref{eq:exact3}, respectively. Note that for any given coefficients $a$ and $\rho$, we will enforce the PDE to have a prescribed exact solution by suitable choice of the forcing term $f$.
Here, $u_{1}$ is defined in terms of only one real-valued trigonometric function. This corresponds to a 4-sparse solution with respect to the complex Fourier system. Hence, in this case we have $\sigma_s(\bm{c}_{\Lambda})=0$ and $u-u_{\Lambda}=0$ as soon as the truncation set $\Lambda$ contains the four multi-indices $(\pm 2,\pm 1,0,0,\dots)$. Note, though, that these four multi-indices do not form a lower set.
The solution $u_{2}$ is a smooth periodic function active only in the variables $x_1$ and $x_2$. Hence, it exhibits a highly anisotropic behaviour.
Finally, $u_{3}$
is another smooth solution defined using all variables from $x_1, \ldots, x_d$. The behaviour of this solution is also anisotropic and the solution is uniformly bounded by $\exp (\sum_{k=1}^\infty 1/{k^2})=\exp(\pi^2/6)$.

\paragraph{Randomization of experiments and visualization.} Due to the random nature of collocation points in the CFC and the periodic PINN method, we consider 25 random runs for each test. However, for the experiments in \S\ref{Sec:architecture}, where we examine the effect of the architecture on performance, we use 10 random runs for each width-depth combination. In our plots, the main curves represent the sample geometric mean of the error, and the size of the lightly shaded areas corresponds to its corrected geometric standard deviation. Note that calculating the geometric mean and standard deviation of the errors is equivalent to compute the classical mean and standard deviation of the log transformed errors. For more details on the visualization strategy, we refer to \cite[\S A.1.3]{adcock2022sparse}.

\subsection{Physics-Informed Neural Networks} \label{Sec:NN_performance}
We run tests using PINNs to solve the PDE defined in \eqref{eq:diffusion_eq_periodic_1} with the various exact solutions from \eqref{eq:exact1}-\eqref{eq:exact3}. For the experiments in \S\ref{sssec:perf_num_samples} and \S\ref{sssec:perf_dimension}, we use DNNs with $l=11$ nodes in the periodic layer, depth $h=3$ hidden layers, and width to depth ratio $r=10$ so that the networks have $30$ nodes per hidden layer. Moreover, we consider a tanh activation in every layer. In \S\ref{Sec:architecture}, we examine the impact of the architecture on performance by comparing results obtained by varying the number of nodes on the periodic layer and on the hidden layers of the fully connected part of the network, see Fig.~\ref{Fig:NN_architecture}. In this section, we train periodic PINNs without regularization, i.e., we let $\lambda =0$ in \eqref{eq:def_loss}. Moreover, we simply minimize the MSE loss (which is equivalent to minimizing the RMSE in the non-regularized setting). Additional experiments involving a regularized loss function can be found in Appendix~\ref{sec:additional_numerics}.

\subsubsection{Performance and the impact of the number of samples}
\label{sssec:perf_num_samples}

We test the PINNs with the exact solutions defined in \eqref{eq:exact1}-\eqref{eq:exact3} and refer to them as Example 1, 2, 3, respectively. Fig.\,\ref{Fig:PINN_number_samples} presents the performance of the PINNs over 30000 epochs for each exact solution, with parameters $d=6$, $\rho =0.5$. 
\begin{figure}[!t]
\centering
  \includegraphics[width=.32\textwidth]{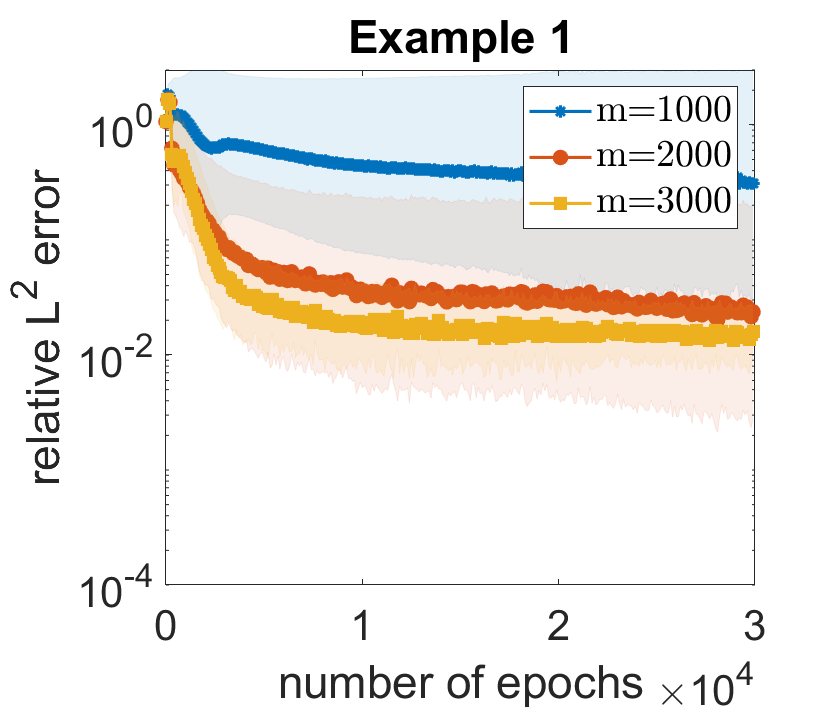}
  \includegraphics[width=.32\textwidth]{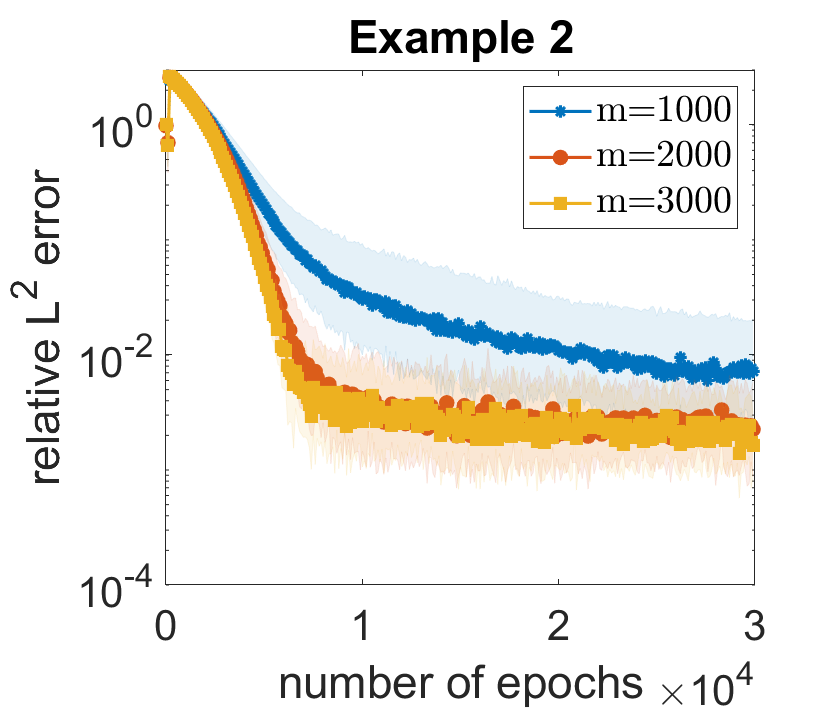}
  \includegraphics[width=.32\textwidth]{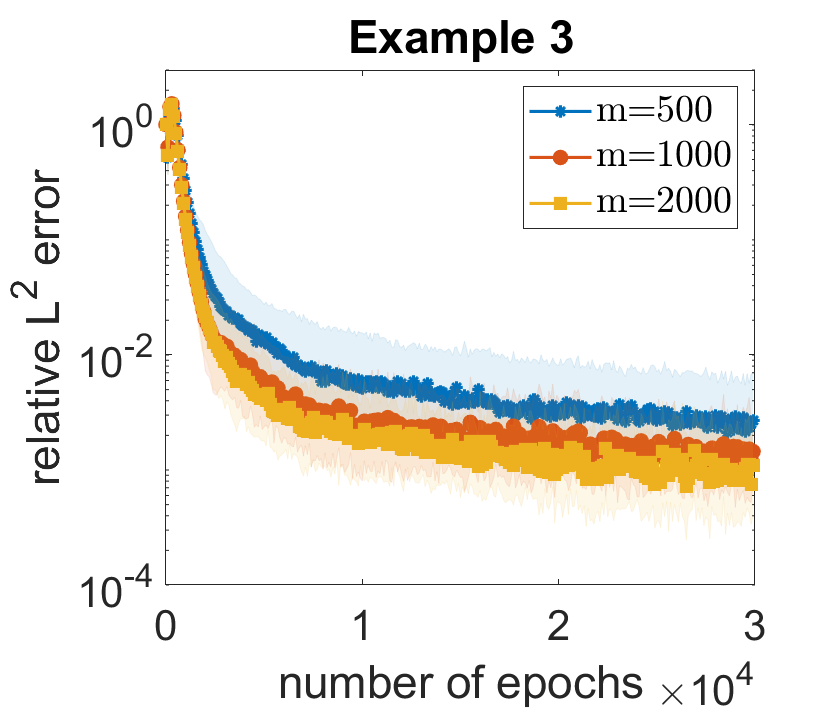}
\caption{(Impact of the number of samples) Relative $L^2$-error versus the number of epochs for approximating the exact solutions defined in \eqref{eq:exact1}-\eqref{eq:exact3} with $d=6$, where $m$ is the number of sample points.} \label{Fig:PINN_number_samples}
\end{figure}
The PINNs have 11 nodes on the periodic layer and three non-periodic layers with 30 nodes each. We choose the commonly used $\tanh$ activation function and the Adam optimizer \citep{kingma2017adam} with early stopping. We use this neural network to conduct all numerical tests except those in \S\ref{Sec:architecture}. As shown in Fig.~\ref{Fig:PINN_number_samples}, the PINN model approximates all three exact solutions with $L^2$-error on the order of $10^{-2}$ to $10^{-3}$ within the $30000$ epoch training budget. Notably, the error decreases rapidly in the first $10000$ epochs and can be seen to saturate for the remaining epochs. 
Also, in Examples 1 and 3, we see some small improvement in the error as the size of the training set is increased, while for Example 2, we note that moving from $m=2000$ to $m=3000$ points does not noticeably improve the error.

These differences in results obtained for our examples illustrate that the number of samples needed to reach a minimum error varies depending on the complexity of the exact solution. This difference in behaviour is not directly explained by Theorem~\ref{thm:PET}. In fact, Example~1 is the the easiest one to approximate in the Fourier basis as it is only a linear combination of 4 Fourier basis functions, but it corresponds to the worst performance for periodic PINNs. The difficulty in approximating Example~1 could be heuristically explained by the presence of a higher frequency $4 \pi$ (as opposed to $2\pi$) associated with the variable $x_1$. Yet, a rigorous explanation of this phenomenon remains an open problem.

\subsubsection{Impact of the dimension}
\label{sssec:perf_dimension}

We now consider the performance of PINNs in higher-dimensional settings. We present the results for exact solutions with $d=6, 10, 20$ in Fig.\,\ref{Fig:PINN_dimensionality}. 
\begin{figure}[!t]
\centering
  \includegraphics[width=.35\textwidth]{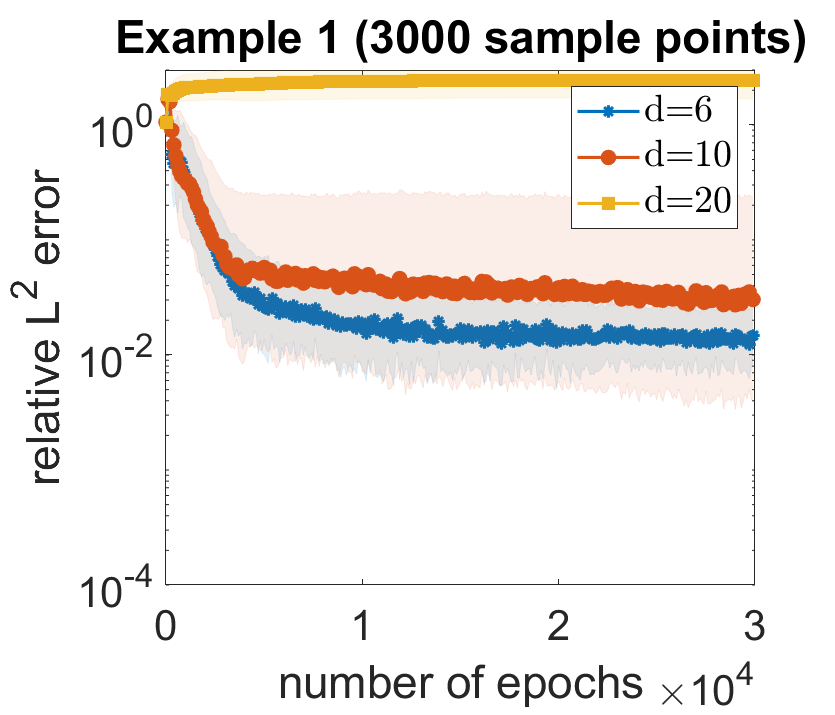}
  \includegraphics[width=.35\textwidth]{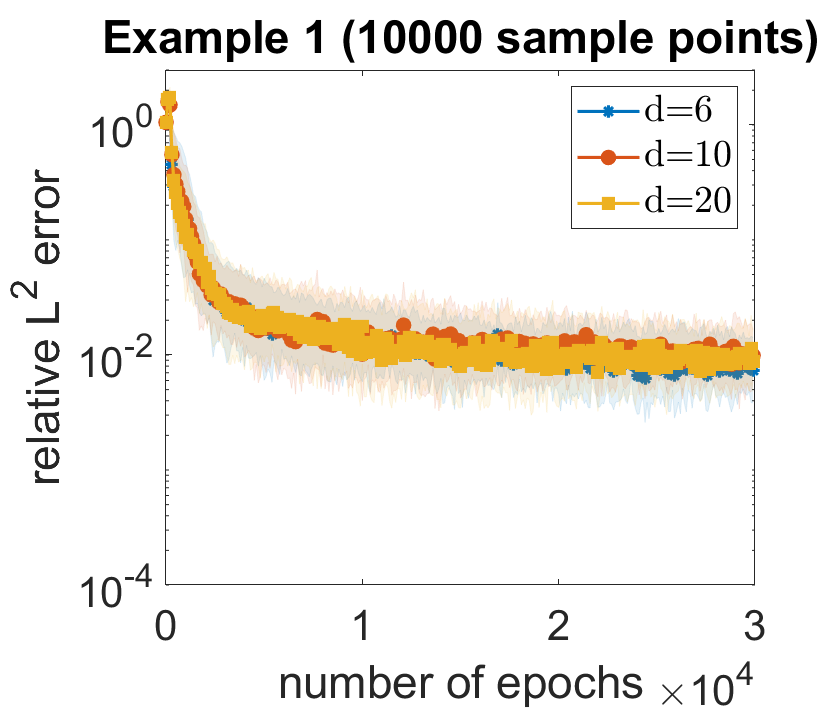}\\
  \includegraphics[width=.35\textwidth]{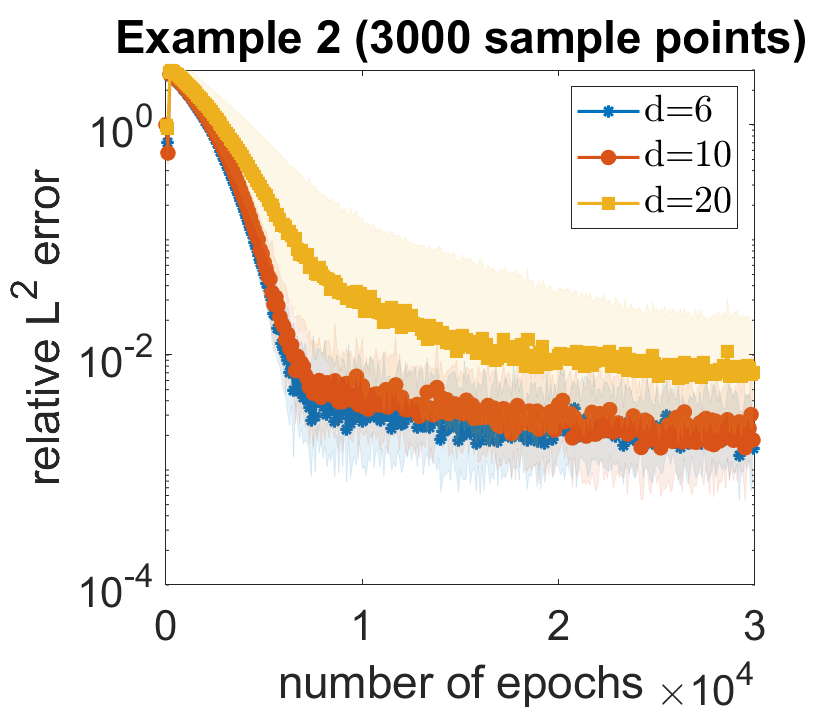}
  \includegraphics[width=.35\textwidth]{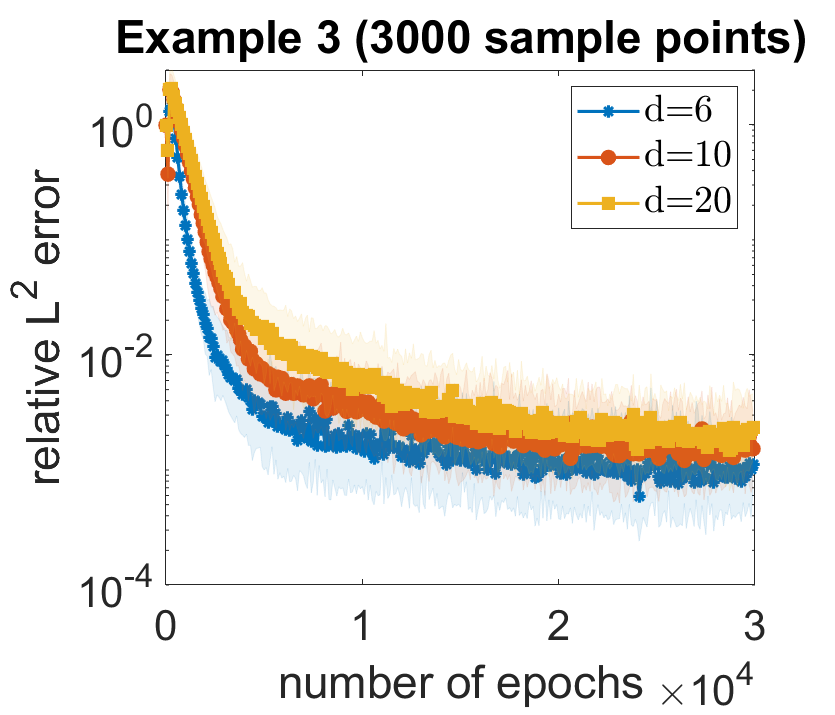}
\caption{(Impact of the dimension) Relative $L^2$-error 
versus the number of epochs for approximating the exact solutions defined in \eqref{eq:exact1}-\eqref{eq:exact3} with (top left, bottom left, bottom right) $m=3000$ and (top right) $m=10000$ samples, in $d = 6, 10, 20$ dimensions.} \label{Fig:PINN_dimensionality}
\end{figure}
With 3000 sample points, the PINNs approximate Example 1's exact solution with relative $L^2$-error on the order of $10^{-2}$ when $d=6$.
However, we observe that the error increases with increasing dimension with the error for $d=10$ above that for $d=6$, while for $d=20$ the PINNs do not converge at all within the budget of epochs. 
Increasing the number of samples to 10000 (result in the top right of Fig.\,\ref{Fig:PINN_dimensionality}), the PINNs can be seen to be converging to the exact solution, saturating at an identical error around $10^{-2}$ in dimensions $d=6,10$, and $20$ after 30000 epochs.
This highlights the dependence on the problem dimension in the number of samples required to achieve a given accuracy.
This can also be observed for Examples 2 and 3, where we see that with fixed $m=3000$ samples the error increases as the dimension increases.

As discussed in the introduction, we seek methods to solve high-dimensional PDEs which overcome or substantially mitigate the curse of dimensionality. The experiments in Fig.\,\ref{Fig:PINN_sample_dimension} aim to illuminate the relationship between the dimension of the problem and the number of samples required to achieve a given error. 
\begin{figure}[!t]
\centering
  \includegraphics[width=.32\textwidth]{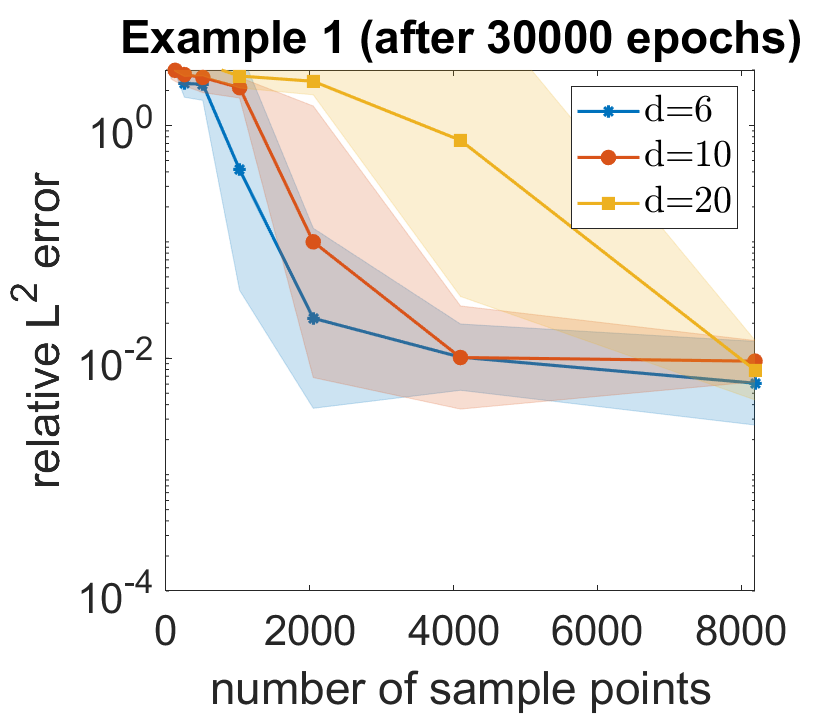}
  \includegraphics[width=.32\textwidth]{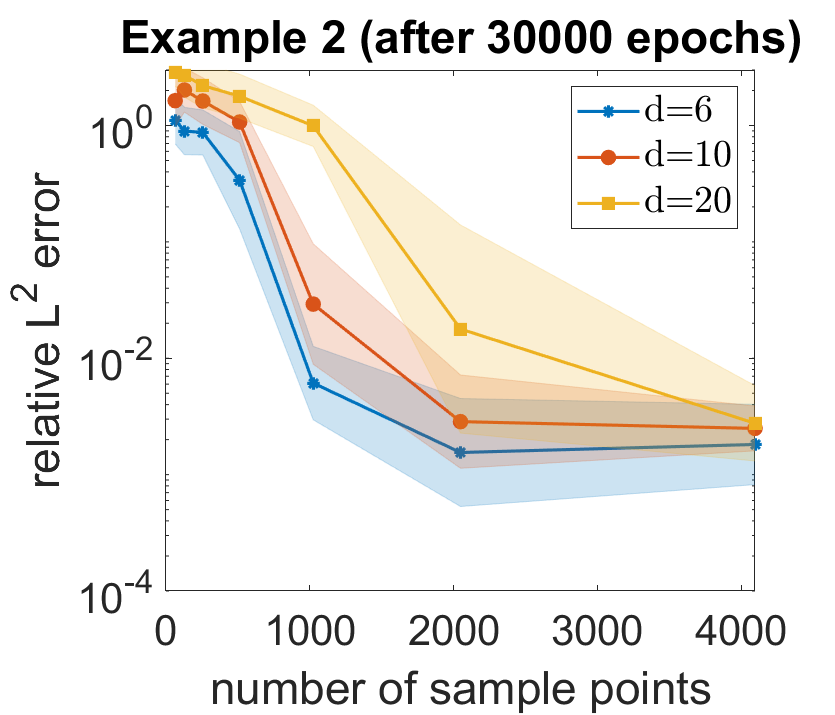}
  \includegraphics[width=.32\textwidth]{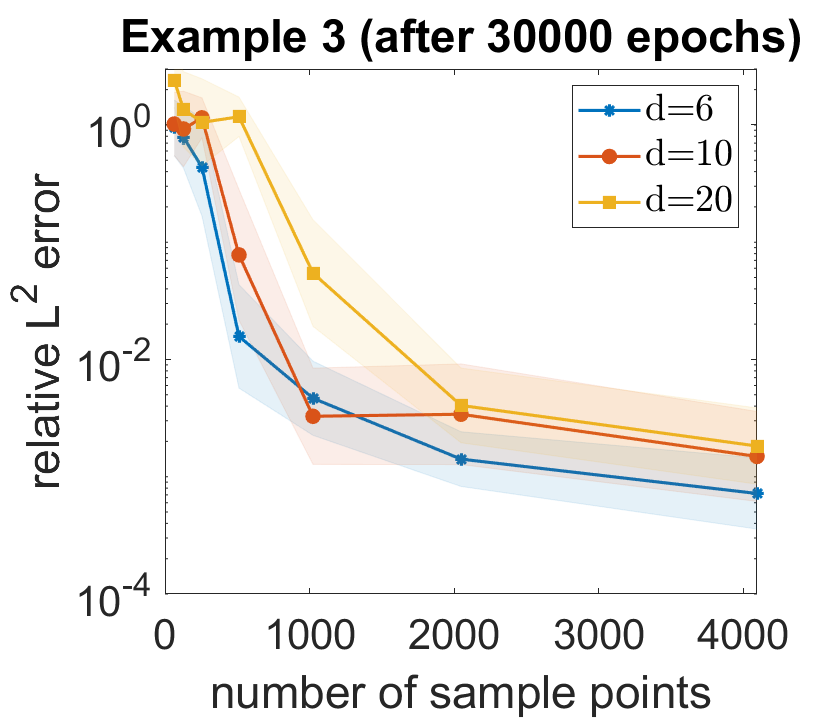}

\caption{(Dimensionality and number of samples) Relative $L^2$-error after 30000 epochs versus number of sample points $m$ for the exact solutions defined in Eq.\,\eqref{eq:exact1}-\eqref{eq:exact3}, where $d= 6, 10, 20$ are the dimension of the problem. Left: Example 1. Middle: Example 2. Right: Example 3.} \label{Fig:PINN_sample_dimension}
\end{figure}
There we plot the relative $L^2$-error vs.\ the number of sample points in $d=6,10$, and $20$ dimensions for each of our examples. We observe in all three cases that the number of samples required to achieve a given error increases with the dimension. However, focusing on the point at which the PINNs begin to saturate in their relative $L^2$-error, this scaling does not appear to be exponential in the dimension which would certainly be the case if the method was not capable of mitigating the curse. Rather, the scaling in the required number of samples appears to be approximately linear. Comparing the results for all three examples when $d=6, 10$ and $20$, we observe that the number of samples required for the PINNs to reach saturation at relative $L^2$-error around $10^{-2}$ roughly doubles moving from $6$ to $10$ dimensions and again moving from $10$ to $20$ dimensions. This empirical observation is in accordance with Theorem~\ref{thm:PET}, which establishes that $m$ scales at worst linearly in $d$ (see \eqref{eq:PET_sample_complexity}). It is however worth stressing that PINNs are not able to reach very high accuracy (an $L^2$-error of around $10^{-2}$ might not be considered fully satisfactory by numerical PDE experts). This fact could depend on the optimizer or the choice of architecture (one could consider more sophisticated options, such as networks where hidden layers do not have constant width, or networks with residual layers).

\subsubsection{Impact of the architecture} \label{Sec:architecture}
We also examine the impact of the neural network's architecture on performance. Specifically, we consider varying the number of nodes in the periodic layer and the width-depth ratio $r = w/h$ of the hidden layers.
In Fig.\,\ref{Fig:PINN_number_samples} and \ref{Fig:PINN_dimensionality}, we see that the neural network approach produces accurate convergence results in all three examples. 
We then naturally raise a question: can a better choice of the hyper-parameters $l$, $h$, and $r=w/h$ further improve the accuracy of the neural network approach? To answer this question, we design experiments by changing these hyper-parameters and testing the performance in Example 3. We change the number of nodes in the periodic layer of the neural network in the left plot of Fig.\,\ref{Fig:PINN_architecture}. 
\begin{figure}[!t]
\centering
  \includegraphics[width=.32\textwidth]{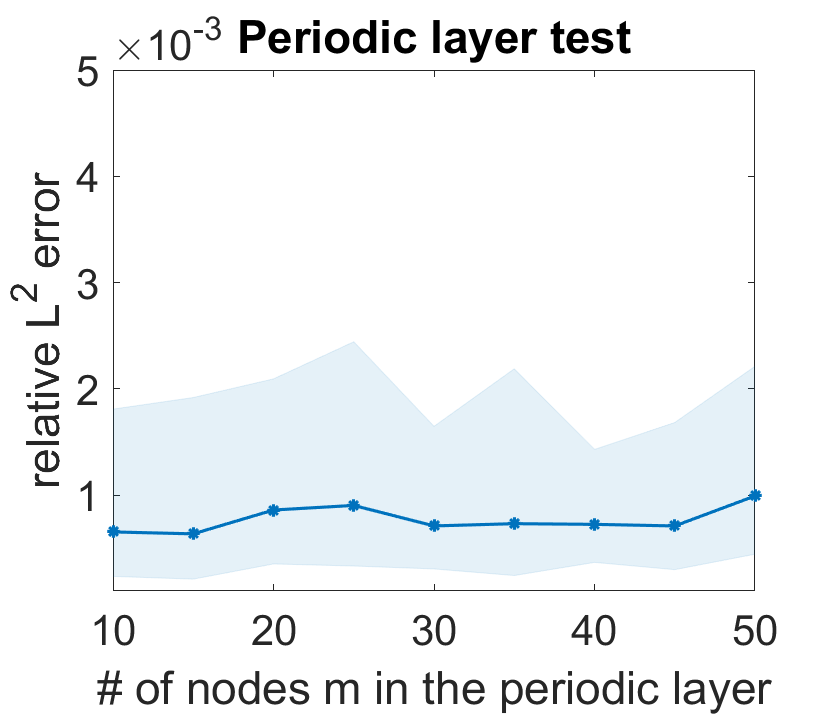}
  \includegraphics[width=.32\textwidth]{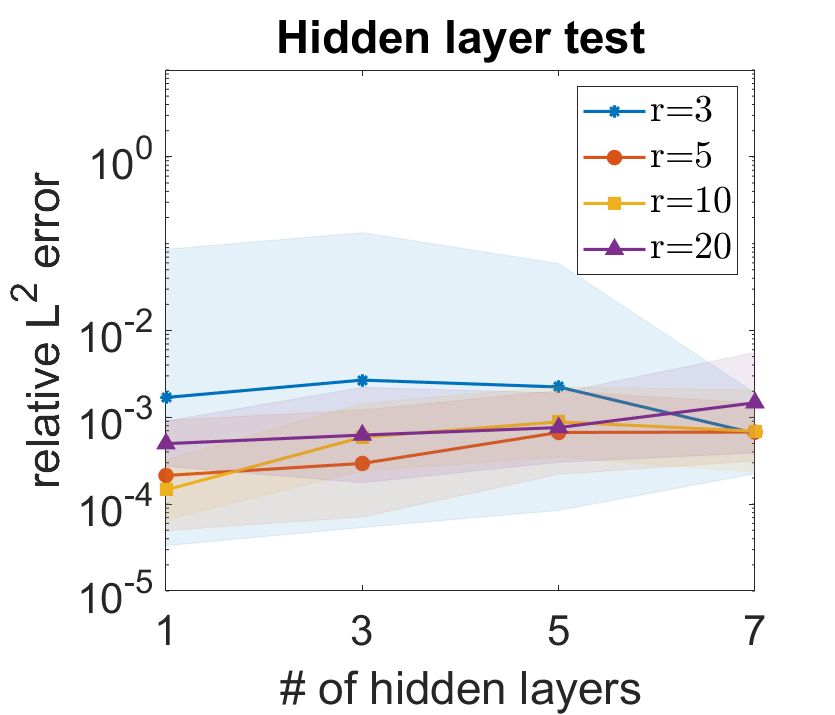}
\caption{(Impact of the architecture) Left: Relative $L^2$-error versus number of nodes in the periodic layer. Right: Relative $L^2$-error versus the number of hidden layers. Different curves represent the different width-depth ratios of the hidden layers. For both figures, we test the neural networks on Example 3, in $d=6$ dimensions using 5000 sample points, and the relative $L^2$-error are measured after 30000 epochs.} \label{Fig:PINN_architecture}
\end{figure}
The results show that increasing the number of nodes beyond 10 in the first layer does not improve the $L^2$-error. The results for tuning the number of layers and the number of nodes per layer for non-periodic layers are illustrated in the right plot of Fig.\,\ref{Fig:PINN_architecture}. We observe that adding more hidden layers to the fully-connected part of the network does not improve the accuracy of the periodic-NN solution. However, we do observe larger standard deviation for the choice of $r=3$ for the width-depth ratio in comparison to the choices of $r=5,10,20$.
We also observe diminishing returns in further increasing the ratio to $r=20$ as the cost of training increases due to the larger width of the network. Since we observe that the relative $L^2$-error is not sensitive to these hyper-parameters, we choose hyper-parameters as stated at the beginning of \S\ref{Sec:NN_performance}. Analogous results hold for higher dimensions (we have run experiments in dimensions $d = 10$ and $20$), but they are omitted for the sake of brevity.

\subsection{Adaptive lower OMP results}
\label{Sec:LowerOMP_result}
In this subsection, we represent numerical results for the adaptive lower OMP method (Algorithm~\ref{algo:lower_OMP}), including the convergence results and a comparison with the non-adaptive Fourier compressive collocation method described in \cite{wang2022compressive}.
\subsubsection{Number of iterations}
The lower OMP method is an adaptive method. On each iteration, the method adaptively changes the size of the lower index set and outputs an approximation to the PDE solution. As in the previous sections, we use the relative $L^2$-error to measure accuracy. Here the size of the index set directly relates to the computational cost of the lower OMP method. Hence, we study the relative $ L^2$-error and the cardinality of the lower index set over the number of iterations to measure the performance of the method. From the numerical experiments in Fig.\,\ref{Fig:Adaptive_OMP_error_card}, we observe  that for the different examples considered, the convergence rate and the increments in the size of the index set vary substantially.
\begin{figure}
\centering
  \includegraphics[width=.32\textwidth]{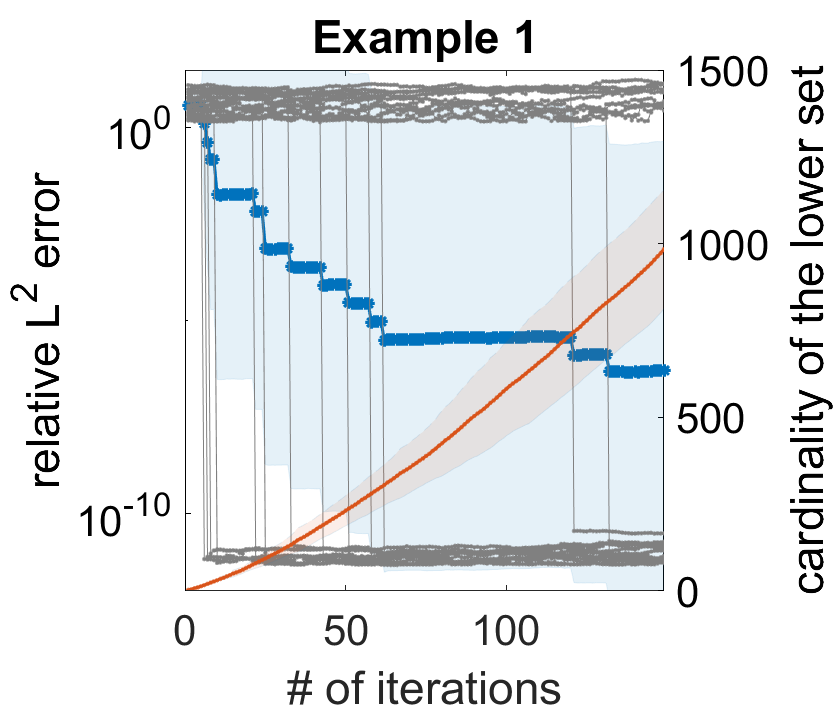}
  \includegraphics[width=.32\textwidth]{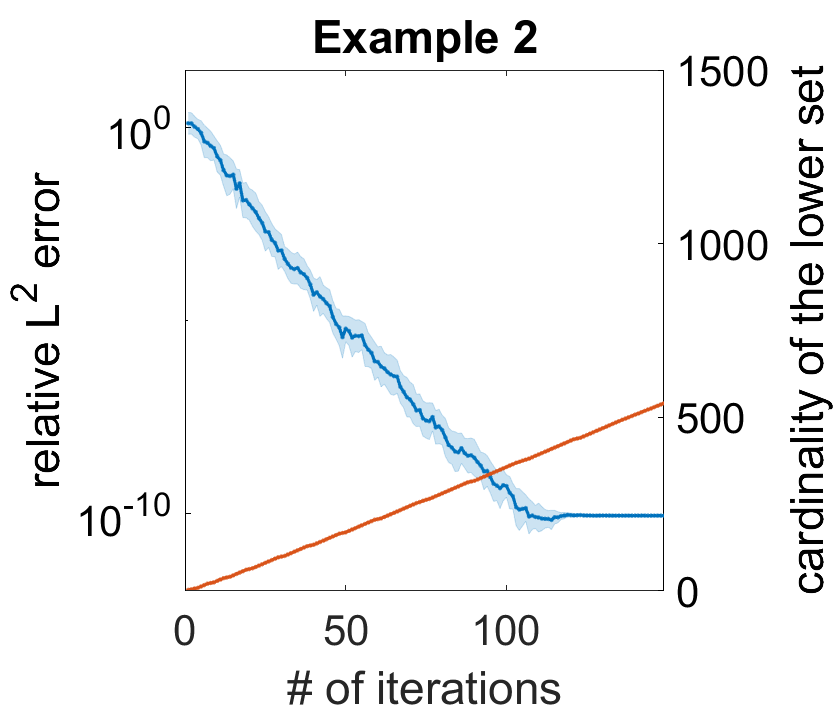}
  \includegraphics[width=.32\textwidth]{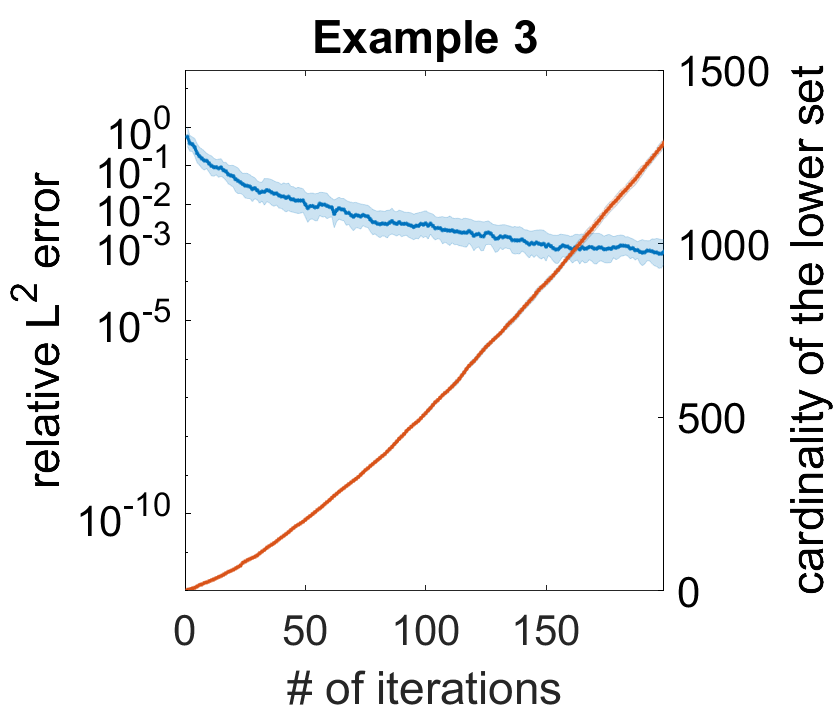}
\caption{(Number of iterations) Relative $L^2$-error (blue curve, left y-label) and cardinality (red curve, right y-label) of the lower set versus the number of iterations for the exact solutions defined in Eq.\,\eqref{eq:exact1}-\eqref{eq:exact3}. Other parameters include dimension $d= 6$ and sample points $m=3000$. } \label{Fig:Adaptive_OMP_error_card}
\end{figure}

In the case of Example 1, once the algorithm discovers the index for the complex Fourier basis function $\sin{4\pi x_1} \sin{2\pi x_2}$, then the $L^2$-error decays to less than $10^{-10}$, otherwise, the approach does not converge. In this example, the visualization using sample geometric mean can not fully describe the convergence behavior, so we also plot each run's relative $L^2$-error using black data points. As the number of iterations increases, we observe the algorithm is more likely to discover the correct Fourier basis function (i.e., more runs reach relative $L^2$-error below $10^{-10}$). However, even after 150 iterations, not all trials have converged to the exact solution as we see that some data points are still near the top of the figure. This suboptimal behaviour is not surprising since the multi-index set corresponding to the basis function $\sin{4\pi x_1} \sin{2\pi x_2}$, i.e., $\{\pm2 \bm{e}_1 \pm \bm{e}_2\}$, is not lower. Hence, Example 1 is not ideally suited for the adaptive lower OMP setting.

The situation is different for the other two examples. Example 2 has only two activated variables, and the coefficients satisfy a lower structure. The adaptive lower OMP method successfully finds the appropriate two-dimensional lower set and converges to the exact solution after 100 iterations for all trials. The cardinality of the lower set increases at a slower rate in this example because the lower set only expands in the direction of $x_1$ and $x_2$. For Example 3, the $L^2$-error converges to approximately $10^{-3}$ within 200 iterations. In this example we also observe that the index set's cardinality significantly increases because the index set is being adaptively and anisotropically extended in all active dimensions.

\subsubsection{Comparison with traditional OMP method}

We now compare the performance for both the lower OMP and traditional OMP methods in moderate dimension $d=6$. We note that the traditional OMP method is not applicable in higher dimensional problems due to the requirement of using an enormous index set. On the other hand, the lower OMP method can produce results in problems as high as $d=30$ dimensions because it adaptively searches for the best index set (see \S\ref{Sec:CS_vs_NN} for more lower OMP results with $d=30$). Fig.\,\ref{Fig:Comparison_OMP} compares the $L^2$-error of the traditional and the lower OMP methods with the same solution sparsity. Traditional OMP has a better performance when the Fourier expansion of the exact solution is fully captured by the traditional OMP ambient space (see, e.g., the left plot for Example 1). Otherwise, the lower OMP method adaptively extends the index set using the existing residual data and achieves better accuracy after more iterations (see, e.g., the middle and right plots for Examples 2 and 3). 
\begin{figure}[!t]
\centering
  \includegraphics[width=.32\textwidth]{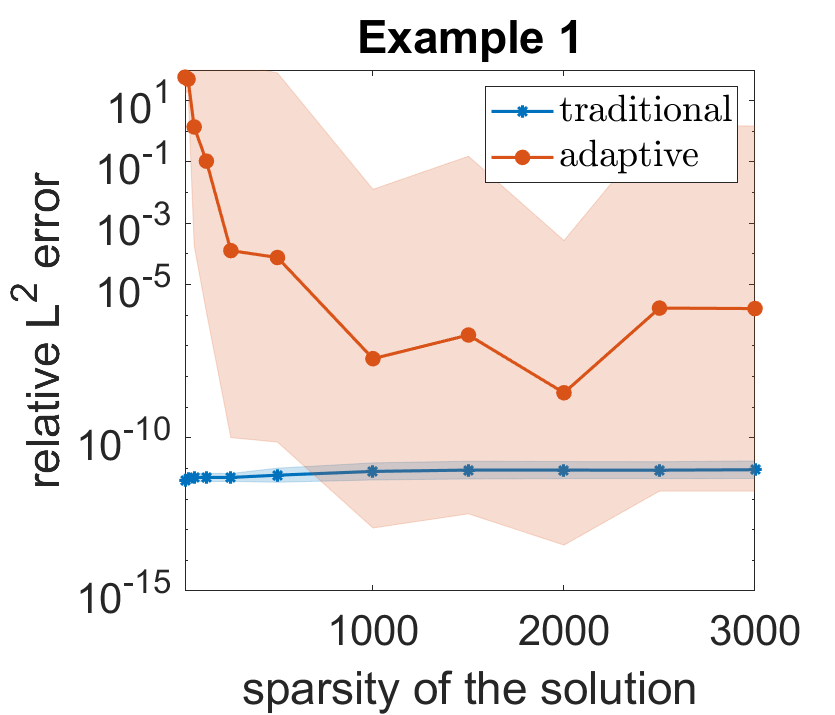}
  \includegraphics[width=.32\textwidth]{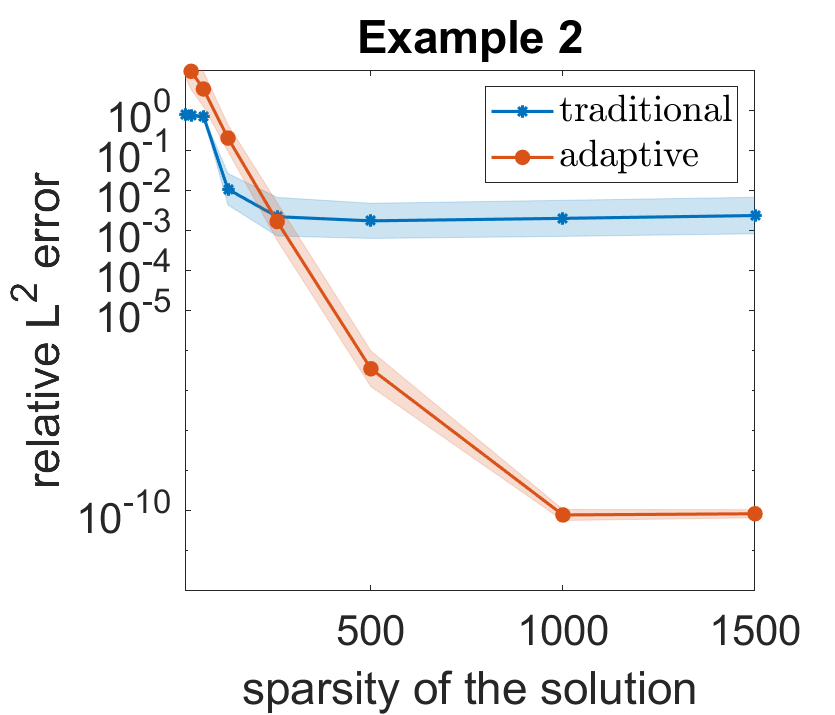}
  \includegraphics[width=.32\textwidth]{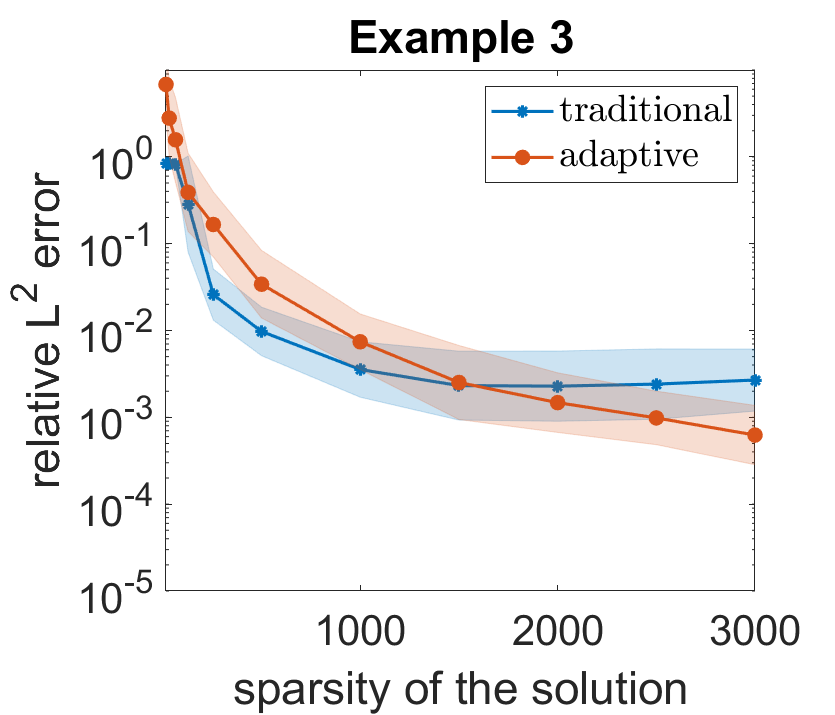}
\caption{(Comparison OMP) Relative $L^2$-error versus the sparsity of the OMP solution for the exact solutions defined in Eq.\,\eqref{eq:exact2}-\eqref{eq:exact3}, Here dimension $d= 6$, and we choose the hyperbolic cross multi-index set $\Lambda^{\mathrm{HC}}_{d,n}$ with $n=18$ (cardinality $=3418$) as ambient space for the traditional OMP. }\label{Fig:Comparison_OMP}
\end{figure}

\subsection{Periodic PINNs vs.\ CFC} \label{Sec:CS_vs_NN}

In this section we compare the performance of periodic PINNs and CFC with adaptive lower OMP recovery on high-dimensional test problems. 
To make a fair comparison, we use identical sets of sample points as the training data for both methods. 
For the neural network, we use 30000 epochs to examine the performance of the PINNs, which we empirically observe allows for the networks to saturate on the training data. In the adaptive lower OMP algorithm, the stopping criterion is set to be the size of the index set (which we recall is also an indicator of the computational cost). We choose this size to be greater than half of the number of sample points. As illustrated in Fig.\,\ref{Fig:Comparison}, the adaptive lower OMP method has advantages in discovering the underlying anisotropy of the high-dimensional functions, see, e.g., the results in Examples 2 and 3 where the function is inactive in most variables or when the coefficients satisfy a lower structure. 
The adaptive lower OMP method captures the most important terms in Example 2 as the number of samples reaches 1000. In all three cases, the neural network accurately approximates the solution as the number of sample points increases and reaches error approximately $10^{-2}$ (Example 1) and $10^{-3}$ (Examples 2 and 3). In Example 3, the error of the periodic PINN reaches a plateau after approximately 2500 samples (see also \S\ref{Sec:NN_performance}). However, similar to Example 1 in Fig.\,\ref{Fig:Comparison_OMP}, the lower OMP method struggles to converge for Example 1, while the periodic neural network provides slightly better average relative $L^2$-error at 8000 sample points with tighter spread in the standard deviation over the trials. In summary, CFC can achieve much higher accuracy than periodic PINNs (gaining from 1 to 8 orders of magnitude), but its performance depends on the sparsity properties of $u$. On the other hand, periodic PINNs are able to achieve a consistent accuracy level ($10^{-2}$ relative $L^2$-error) on all examples. Note that this phenomenon is not explained by Theorem~\ref{thm:PET}. While a possible cause for this behavior is the use of stochastic optimizers for PINNs' training, a rigorous understanding of it remains an interesting open problem.
\begin{figure}[!t]
\centering
  \includegraphics[width=.32\textwidth]{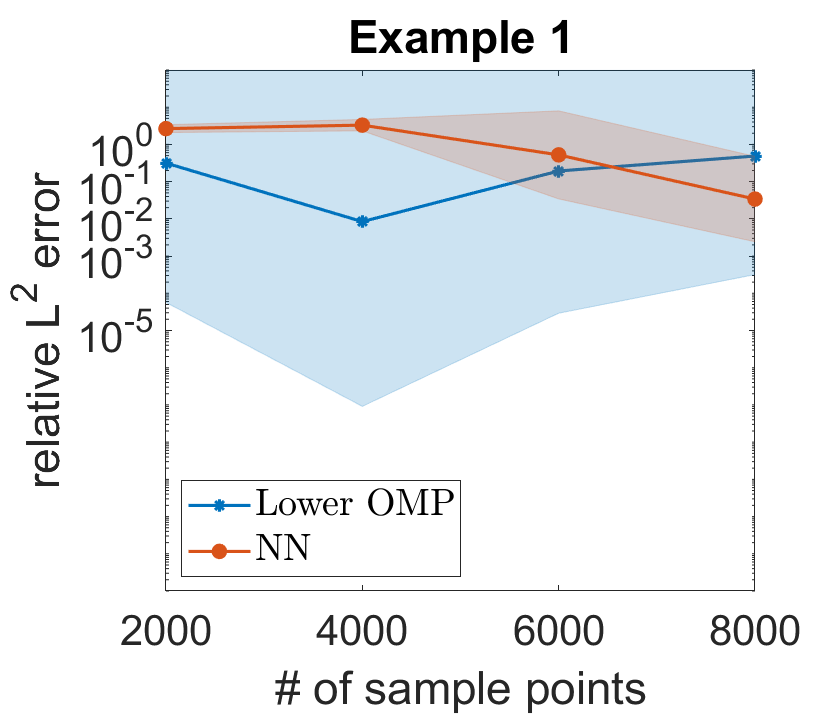}
  \includegraphics[width=.32\textwidth]{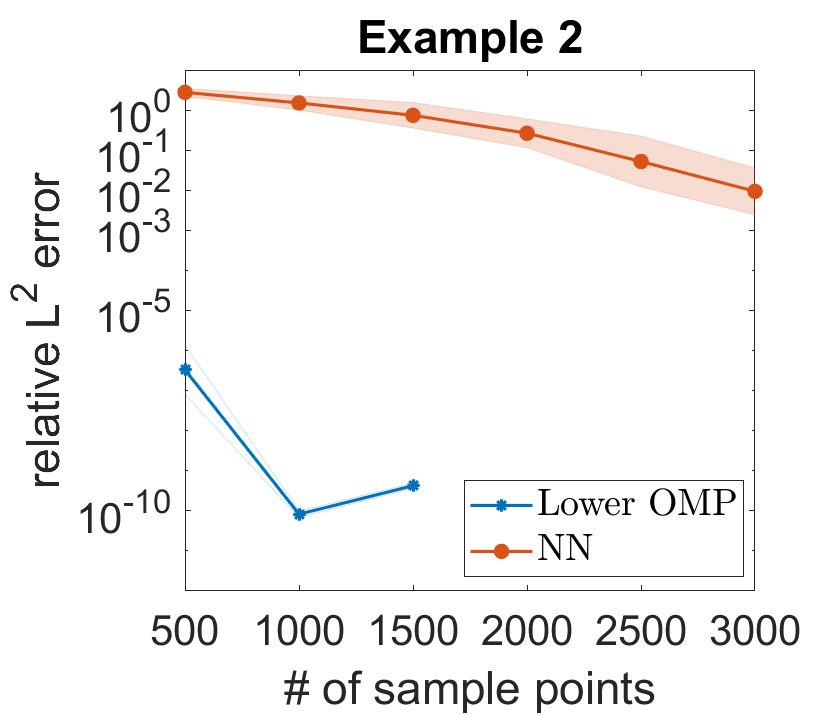}
  \includegraphics[width=.32\textwidth]{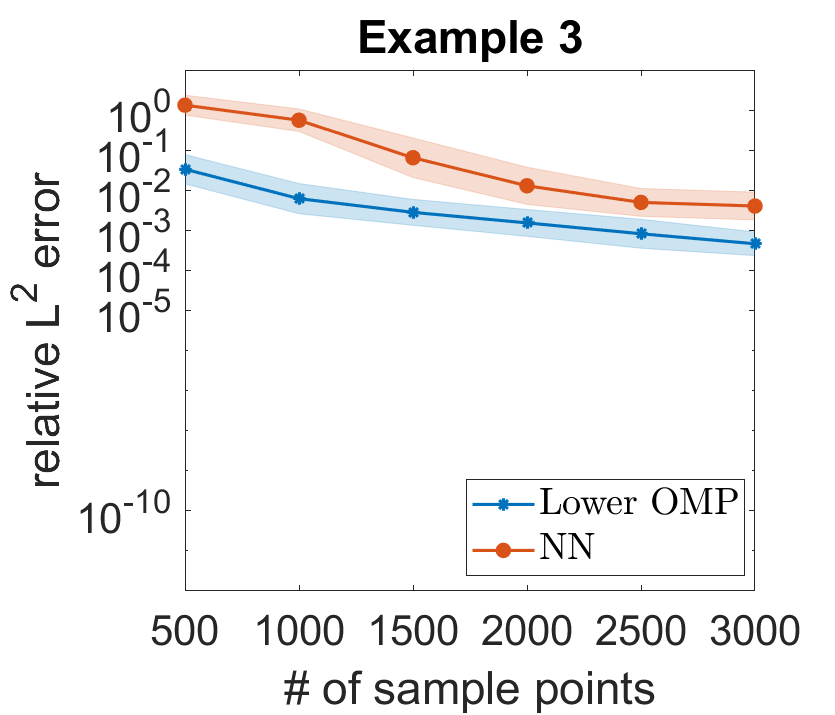}
\caption{(Comparison) Relative $L^2$-error versus the number of sample points $m$ for the exact solutions defined in Eq.\,\eqref{eq:exact2}-\eqref{eq:exact3}, Here dimension $d= 30$, number of sample points $m=3000$. The neural networks have the same structure as in \S\ref{Sec:NN_performance} and the relative error is measured after 30000 epochs. The cardinality of the index sets in the Lower adaptive OMP is constrained to be less than or equal to $m/2$} \label{Fig:Comparison}
\end{figure}

\section{Proof of the practical existence theorem}
\label{sec:proofs}
In order to prove Theorem \ref{thm:PET}, we first extend the CFC Theorem of \cite{wang2022compressive} to handle periodic diffusion equations with a reaction term in \S\ref{sec:CFC_theory}. This requires first outlining the theory of bounded Riesz systems and forming a connection with the CFC matrix and its corresponding Gram matrix. After illustrating this CFC convergence result, we will show a proof of Theorem~\ref{thm:PET} in \S\ref{sec:proof_PET}.

\subsection{Compressive Fourier collocation for diffusion-reaction problems}
\label{sec:CFC_theory}

In this subsection we extend the CFC convergence analysis of \cite{wang2022compressive} from diffusion to diffusion-reaction problems of the form \eqref{eq:diffusion_eq_periodic_1}. The general proof strategy is to present sufficient conditions for the PDE coefficients $a$ and $\rho$ such that the CFC matrix \eqref{eq:def_A_b} is a random sampling matrix from a bounded Riesz system \citep{brugiapaglia2021sparse}. This step is the most technical aspect of the proof. The rest uses the existing framework of sparse recovery in bounded Riesz systems whereby a sufficient condition on the sampling complexity is chosen such that the CFC satisfies the robust null space property with high-probability. From compressive sensing theory, we then obtain recovery guarantees for the CFC approximation.

\subsubsection{Some preliminary facts}
\label{sec:facts_about_bounded_riesz_systems}

We start by recalling the definition of the bounded Riesz system. We restrict our attention to Riesz systems in $L^2(\mathbb{T}^d)$, although the definition can be extended to general Hilbert spaces. Note in the following definition that $\ell^2(\Lambda;\mathbb{C})$ is the set of $\ell^2$-integrable complex sequences, indexed by $\Lambda$, i.e.,
$
\ell^2(\Lambda;\mathbb{C})=\left\{\bm{z}=(z_{\bm{\nu}})_{\bm{\nu}\in\Lambda} : \sum_{\bm{\nu}\in\Lambda}|z_{\bm{\nu}}|^2<\infty\right\}.
$

\begin{definition}[Bounded Riesz System]
\label{def:Riesz}
Let $\Lambda \subseteq \mathbb{Z}^d$, $0<b_\Phi \leq B_\Phi < \infty$, and let $\ell^2(\Lambda;\mathbb{C})$ denote the space of sequences $\bm{z}=(z_{\bm{\nu}})_{\bm{\nu}\in\Lambda}$ with $\|\bm{z}\|_2<\infty$. A set of functions $\{\Phi_{\bm{\nu}}\}_{\bm{\nu} \in \Lambda} \subset L^2(\mathbb{T}^d)$ is a \emph{Riesz system} with constants $b_{\Phi}$ and $B_{\Phi}$ if 
$$
b_\Phi\|\bm{z}\|^2_2 
\leq \left\| \sum_{\bm{\nu} \in \Lambda} z_{\bm{\nu}}
\Phi_{\bm{\nu}}\right\|^2_{L^2} 
\leq B_\Phi \|\bm{z}\|^2_2, 
\quad \forall \bm{z}=(z_{\bm{\nu}})_{\bm{\nu} \in \Lambda} \in \ell^2(\Lambda;\mathbb{C}).
$$
The constants $b_{\Phi}$ and $B_{\Phi}$ are called \emph{lower and upper Riesz constants}, respectively. Moreover, the system $\{\Phi_{\bm{\nu}}\}_{\bm{\nu} \in \Lambda}$ is \emph{bounded} if there exists a constant $0< K_{\Phi} < \infty$ such that
$$
\|\Phi_{\bm{\nu}}\|_{L^\infty} \leq K_{\Phi}, \quad \forall \bm{\nu} \in \Lambda.
$$
\end{definition}
Note that any $L^2$-orthonormal system is a Riesz system with $b_{\Phi} = B_{\Phi} = 1$. In particular, the Fourier system $\{F_{\bm{\nu}}\}_{\bm{\nu}\in\mathbb{Z}^d}$ is a bounded Riesz system with $b_{\Phi} = B_{\Phi} = K_{\Phi} = 1$

Recalling the definition \eqref{eq:def_operator_L} of $\mathscr{L}$, we define 
\begin{equation}
\label{eq:def_Phi}
\Phi_{\bm{\nu}} = \mathscr{L}[\Psi_{\bm{\nu}}], \quad \forall  \bm{\nu} \in \Lambda,
\end{equation}
where $\{\Psi_{\bm{\nu}}\}_{\bm{\nu}\in\mathbb{Z}^d}$ is the renormalized Fourier system given in \eqref{eq:def_Psi}. To show that the system $\{\Phi_{\bm{\nu}}\}_{\bm{\nu} \in \Lambda}$ defined in \eqref{eq:def_Phi} is a Riesz system, it is convenient to consider its \emph{Gram matrix} $G \in \mathbb{C}^{N\times N}$, where $N = |\Lambda|$, defined by
\begin{equation}
\label{eq:def_G}
    G_{\bm{\nu}\bm{\mu}}=\left< \Phi_{\bm{\nu}}, \Phi_{\bm{\mu}}\right>, \quad \forall \bm{\nu},\bm{\mu}\in \Lambda.
\end{equation}
We note in passing that, thanks to the normalization factor $1/\sqrt{m}$ in \eqref{eq:def_A_b}, we have $\mathbb{E}[A^*A] = G$ (this follows from a direct computation and the fact that the random collocation points $\bm{y}_1,\ldots,\bm{y}_m$ are independently and uniformly distributed over $\mathbb{T}^d$). The significance of the Gram matrix $G$ relies on the fact that it yields the following norm equivalence:
\begin{equation}
\label{eq:norm_equivalence}    
\left\| \sum_{\bm{\nu} \in \Lambda} c_{\bm{\nu}} \Phi_{\bm{\nu}} \right\|_{L^2}^2 
= \left< \sum_{\bm{\nu} \in \Lambda} c_{\bm{\nu}} \Phi_{\bm{\nu}},\sum_{\bm{\nu} \in \Lambda} c_{\bm{\nu}} \Phi_{\bm{\nu}}\right>
=\bm{c}^{T}G\bm{c}, \quad \forall \bm{c} \in \mathbb{C}^N.
\end{equation}
Note that $G$ is a Hermitian positive semidefinite matrix. Hence, it has only real nonnegative eigenvalues. The Courant–Fischer–Weyl min-max principle implies that, if $0 < b_{\Phi} \leq B_{\Phi} <\infty$ are such that
\begin{equation}
\label{eq:two-sided_bound}
    b_\Phi \leq \lambda_{\min}(G)\leq \lambda_{\max}(G) \leq B_\Phi,
\end{equation}
then $\{\Phi_{\bm{\nu}}\}_{\bm{\nu}\in\Lambda}$ is a Riesz system with constants $b_{\Phi}$ and $B_{\Phi}$.
Hence, estimating the lower and upper Riesz constants of $\{\Phi_{\bm{\nu}}\}_{\bm{\nu}\in\Lambda}$ corresponds to finding two-sided spectral bounds for the Gram matrix $G$. To obtain this type of spectral bounds, we will employ Gershgorin's circle theorem, see, e.g., \citep[Theorem 6.1.1]{horn2012matrix}, of which Lemma~\ref{lem:expansion_G} is a direct consequence. Here we state a useful corollary of the Gershgorin circle theorem.

\begin{lemma}[Gershgorin's circle theorem for Hermitian matrices]
\label{lem:Gershgorin_Hermitian_mat}
Let $A\in\mathbb{C}^{N\times N}$ be a Hermitian matrix. Then, all eigenvalues of $A$ lie in the real interval 
\begin{equation*}
    \left[\min_{i\in[N]}\left\{A_{ii}-\sum_{j\neq i}|A_{ij}|\right\},\max_{i\in[N]}\left\{A_{ii}+\sum_{j\neq i}|A_{ij}|\right\}\right].
\end{equation*}
\end{lemma}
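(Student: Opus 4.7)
\textbf{sketch.}
The plan is to deduce this statement directly from the classical Gershgorin circle theorem together with the reality of the spectrum of Hermitian matrices. First I would invoke the standard Gershgorin disc theorem, which states that every eigenvalue $\lambda$ of an arbitrary $A\in\mathbb{C}^{N\times N}$ lies in at least one of the closed discs
\begin{equation*}
D_i := \left\{z\in\mathbb{C} : |z-A_{ii}|\leq R_i\right\}, \qquad R_i := \sum_{j\neq i}|A_{ij}|, \qquad i\in[N].
\end{equation*}
Hence $\mathrm{spec}(A)\subseteq \bigcup_{i\in[N]}D_i$.

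Next I would use the hypothesis that $A$ is Hermitian. This implies two things: the diagonal entries $A_{ii}$ are real (since $A_{ii}=\overline{A_{ii}}$), and every eigenvalue of $A$ is real. Consequently, each eigenvalue actually lies in $\bigcup_{i\in[N]}(D_i\cap\mathbb{R})$, and $D_i\cap\mathbb{R}$ is precisely the closed real interval $[A_{ii}-R_i,\,A_{ii}+R_i]$.

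Finally, I would observe the trivial inclusion
\begin{equation*}
\bigcup_{i\in[N]}\left[A_{ii}-R_i,\,A_{ii}+R_i\right] \;\subseteq\; \left[\,\min_{i\in[N]}\{A_{ii}-R_i\},\; \max_{i\in[N]}\{A_{ii}+R_i\}\,\right],
\end{equation*}
which gives the claimed containment. There is no real obstacle here; the only non-routine input is the classical Gershgorin theorem itself, which we take as known. The statement is essentially a bookkeeping corollary: Hermiticity collapses each two-dimensional disc onto an interval on the real line, and then enclosing the finite union of intervals by a single interval is immediate.
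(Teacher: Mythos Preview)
Your proof is correct and matches the paper's approach: the paper does not give a separate proof but simply presents the lemma as a corollary of the classical Gershgorin circle theorem (citing Horn--Johnson, Theorem~6.1.1), which is exactly the derivation you outline.
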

Using the Fourier expansion \eqref{eq:diff_expansion} of $a$, it is possible to compute an explicit formula for the entries of the Gram matrix $G$.

\begin{lemma}[Explicit formula for the Gram matrix]\label{lem:expansion_G}
Let $\Lambda\subseteq\mathbb{Z}^d$, $r_{\bm{\nu}} \in \mathbb{C}$, with $\bm{\nu}\in\Lambda$, be generic rescaling constants for the Fourier system such that $\Psi_{\bm{\nu}}=r_{\bm{\nu}}F_{\bm{\nu}}$, and consider a diffusion coefficient $a\in C^1(\mathbb{T}^d)$ having Fourier expansion \eqref{eq:diff_expansion} with $T = \mathbb{Z}^d\setminus\{\bm{0}\}$. Then, the elements of $G$ admit the following explicit formula in terms of the Fourier coefficients $(a_{\boldsymbol{\tau}})_{\boldsymbol{\tau}\in\mathbb{Z}^d}$ of a and the reaction term $\rho\in \mathbb{R}$:
\begin{align*}
    G_{\bm{\nu}\bm{\mu}} 
    &  = r_{\bm{\nu}}\bar{r}_{\bm{\mu}}\bigg(16\pi^4\sum_{\bm{\tau}\in \mathbb{Z}^d}\sum_{\bm{\tau'}\in \mathbb{Z}^d}  (\bm{\tau} \cdot \bm{\nu} + \| \bm{\nu} \|_2^2) (\bm{\tau'} \cdot \bm{\mu} + \| \bm{\mu} \|_2^2) a_{\bm{\tau}} \bar{a}_{\bm{\tau'}}
    \delta_{\bm{\tau}+\bm{\nu},\bm{\tau'}+\bm{\mu}} + \rho ^2
    \delta_{\bm{\nu},\bm{\mu}}\\
    & \quad + 4\pi^2\rho \sum_{\bm{\tau}\in \mathbb{Z}^d}  (\bm{\tau} \cdot \bm{\nu} +  \| \bm{\nu} \|_2^2) a_{\bm{\tau}} 
    \delta_{\bm{\tau}+\bm{\nu},\bm{\mu}} 
    + 4\pi^2 \rho \sum_{\bm{\tau'}\in \mathbb{Z}^d}  (\bm{\tau'} \cdot \bm{\mu} +  \| \bm{\mu} \|_2^2)  \bar{a}_{\bm{\tau'}} 
    \delta_{\bm{\nu},\bm{\tau'}+\bm{\mu}}\bigg),
\end{align*}
for each $\bm{\nu}, \bm{\mu}$ in $\Lambda\setminus\{\bm{0}\}$ and where $\delta_{\bm{\nu},\bm{\mu}}$ denotes the Kronecker delta. Moreover, 
\begin{equation}
\label{eq:G_00_G_nu0_G_0nu}
    G_{\bm{0}\bm{0}}=  r_{\bm{0}}^2\rho ^2 \quad \text{and} \quad G_{\bm{\bm{\nu}}\bm{0}} = G_{\bm{\bm{0}\bm{\nu}}}=0, \quad \forall \bm{\nu} \in \Lambda\setminus\{\bm{0}\}.
\end{equation}
\end{lemma}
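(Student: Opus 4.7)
The plan is a direct computation carried out in two steps: first express $\mathscr{L}[\Psi_{\bm{\nu}}]$ as an explicit Fourier series, then evaluate the inner product $G_{\bm{\nu}\bm{\mu}}=\langle\mathscr{L}[\Psi_{\bm{\nu}}],\mathscr{L}[\Psi_{\bm{\mu}}]\rangle$ using the orthonormality of the Fourier basis $\{F_{\bm{\nu}}\}_{\bm{\nu}\in\mathbb{Z}^d}$.

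For the first step, I would start from $\nabla F_{\bm{\nu}}=2\pi\mathrm{i}\,\bm{\nu}\,F_{\bm{\nu}}$ and apply the product rule $\nabla\cdot(a\nabla F_{\bm{\nu}})=(\nabla a)\cdot\nabla F_{\bm{\nu}}+a\,\Delta F_{\bm{\nu}}$. Substituting the Fourier expansion $a=\sum_{\bm{\tau}\in\mathbb{Z}^d}a_{\bm{\tau}}F_{\bm{\tau}}$ (per \eqref{eq:diff_expansion} with $T=\mathbb{Z}^d\setminus\{\bm{0}\}$), using $F_{\bm{\tau}}F_{\bm{\nu}}=F_{\bm{\tau}+\bm{\nu}}$, and collecting terms, I should obtain the key intermediate identity
\[
\mathscr{L}[\Psi_{\bm{\nu}}]=4\pi^2\, r_{\bm{\nu}}\sum_{\bm{\tau}\in\mathbb{Z}^d} a_{\bm{\tau}}\bigl(\bm{\tau}\cdot\bm{\nu}+\|\bm{\nu}\|_2^2\bigr)F_{\bm{\tau}+\bm{\nu}}+\rho\, r_{\bm{\nu}}F_{\bm{\nu}}.
\]
The two contributions $-2\pi\mathrm{i}\, r_{\bm{\nu}} F_{\bm{\nu}}(\bm{\nu}\cdot\nabla a)$ and $4\pi^2 r_{\bm{\nu}}\|\bm{\nu}\|_2^2 a F_{\bm{\nu}}$ merge cleanly into the single parenthesized factor $\bm{\tau}\cdot\bm{\nu}+\|\bm{\nu}\|_2^2$, which is where the compact shape of the lemma comes from.

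For the second step, I expand $G_{\bm{\nu}\bm{\mu}}$ by sesquilinearity of the inner product into four pieces corresponding to the diffusion--diffusion, diffusion--reaction, reaction--diffusion, and reaction--reaction contributions. Each piece is a (single or double) sum over $\bm{\tau}$ and $\bm{\tau}'$ whose only spatial content sits inside $\langle F_{\bm{\alpha}},F_{\bm{\beta}}\rangle=\delta_{\bm{\alpha},\bm{\beta}}$. Reading off the resulting Kronecker deltas $\delta_{\bm{\nu}+\bm{\tau},\bm{\mu}+\bm{\tau}'}$, $\delta_{\bm{\nu}+\bm{\tau},\bm{\mu}}$, $\delta_{\bm{\nu},\bm{\mu}+\bm{\tau}'}$, $\delta_{\bm{\nu},\bm{\mu}}$, with the scalar prefactors $16\pi^4 r_{\bm{\nu}}\bar{r}_{\bm{\mu}}$, $4\pi^2 r_{\bm{\nu}}\bar{r}_{\bm{\mu}}\rho$, $4\pi^2 r_{\bm{\nu}}\bar{r}_{\bm{\mu}}\rho$, and $r_{\bm{\nu}}\bar{r}_{\bm{\mu}}\rho^2$ respectively, recovers the stated formula for $\bm{\nu},\bm{\mu}\in\Lambda\setminus\{\bm{0}\}$.

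For the boundary cases in \eqref{eq:G_00_G_nu0_G_0nu}, I specialize the intermediate identity at $\bm{\nu}=\bm{0}$: the diffusion sum vanishes since $\bm{\tau}\cdot\bm{0}+\|\bm{0}\|_2^2=0$ for every $\bm{\tau}$, leaving $\mathscr{L}[\Psi_{\bm{0}}]=\rho\, r_{\bm{0}}F_{\bm{0}}$. Hence $G_{\bm{0}\bm{0}}=\rho^2 r_{\bm{0}}^2\|F_{\bm{0}}\|_{L^2}^2=\rho^2 r_{\bm{0}}^2$, and for $\bm{\nu}\neq\bm{0}$ I write $G_{\bm{\nu}\bm{0}}=\rho\,\bar{r}_{\bm{0}}\langle\mathscr{L}[\Psi_{\bm{\nu}}],F_{\bm{0}}\rangle$; in this inner product the reaction term contributes $\rho r_{\bm{\nu}}\langle F_{\bm{\nu}},F_{\bm{0}}\rangle=0$, while the diffusion sum picks out only $\bm{\tau}=-\bm{\nu}$, which carries the zero factor $-\|\bm{\nu}\|_2^2+\|\bm{\nu}\|_2^2=0$; Hermitian symmetry gives $G_{\bm{0}\bm{\nu}}=0$ as well. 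There is no real conceptual obstacle; the only care points are the clean combination of the gradient-dot-product and Laplacian terms in Step~1, and the justification of exchanging the infinite Fourier summation with the $L^2$-inner product, which is legitimate because $a\in C^1(\mathbb{T}^d)$ yields absolute summability of $(a_{\bm{\tau}})$ against the polynomially growing weights $\bm{\tau}\cdot\bm{\nu}+\|\bm{\nu}\|_2^2$ for fixed $\bm{\nu}$, so the series for $\mathscr{L}[\Psi_{\bm{\nu}}]$ converges in $L^2(\mathbb{T}^d)$.
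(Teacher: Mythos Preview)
Your proposal is correct and follows essentially the same route as the paper: both rely on the identities $\nabla F_{\bm{\nu}}=2\pi\mathrm{i}\,\bm{\nu}\,F_{\bm{\nu}}$, $\Delta F_{\bm{\nu}}=-4\pi^2\|\bm{\nu}\|_2^2 F_{\bm{\nu}}$, $F_{\bm{\tau}}F_{\bm{\nu}}=F_{\bm{\tau}+\bm{\nu}}$, and orthonormality to reduce the four diffusion/reaction cross terms to Kronecker deltas, and treat the $\bm{\nu}=\bm{0}$ case separately via the vanishing factor $(-\bm{\nu})\cdot\bm{\nu}+\|\bm{\nu}\|_2^2=0$. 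The only organizational difference is that you first isolate the Fourier expansion of $\mathscr{L}[\Psi_{\bm{\nu}}]$ and then take the inner product, whereas the paper expands directly inside $\langle\cdot,\cdot\rangle$; the content is the same.
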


\begin{proof} Before proving the identity, we note that gradients and Laplacians of the Fourier basis functions defined in \eqref{eq:Fourier_system} can be easily computed as
\begin{equation}
\label{eq:diff_Fourier}
\nabla F_{\bm{\nu}} = (2\pi i\bm{\nu}) F_{\bm{\nu}} \quad \text{and} \quad
\Delta F_{\bm{\nu}} = - 4\pi^2 \|\bm{\nu}\|_2^2 F_{\bm{\nu}}, \quad 
\forall  \bm{\nu} \in \mathbb{Z}^d.
\end{equation}
Moreover, 
\begin{equation}
\label{eq:prod_Fourier}
    F_{\bm{\nu}}F_{\bm{\mu}} = F_{\bm{\nu} +\bm{\mu}},\quad\forall \bm{\nu},\bm{\mu} \in \mathbb{Z}^d.
\end{equation}

To prove the desired formula for $G_{\bm{\nu},\bm{\mu}}$, we expand the inner product in \eqref{eq:def_G}. Using the above properties, the $L^2$-orthonormality of the Fourier basis $\{F_{\bm{\nu}}\}_{\bm{\nu} \in \mathbb{Z}^d}$, and recalling the expansion \eqref{eq:diff_expansion} of $a$, if $\rho$ is a constant, we see that
\begin{align*}
G_{\bm{\nu} \bm{\mu}} 
& = \langle \nabla a \cdot \nabla \Psi_{\bm{\nu}}
+ a \Delta \Psi_{\bm{\nu}} - \rho \Psi_{\bm{\nu}},
 \nabla a \cdot \nabla \Psi_{\bm{\mu}}
+ a \Delta \Psi_{\bm{\mu}} - \rho \Psi_{\bm{\mu}}
\rangle \\
& = \langle \nabla a \cdot \nabla \Psi_{\bm{\nu}}
+ a \Delta \Psi_{\bm{\nu}}, \nabla a \cdot \nabla\Psi_{\bm{\mu}}
+ a \Delta \Psi_{\bm{\mu}} \rangle + 
 \rho ^2 \langle \Psi_{\bm{\nu}} , \Psi_{\bm{\mu}} \rangle \\
& \quad - \rho \langle \nabla a \cdot \nabla \Psi_{\bm{\nu}}
+ a \Delta \Psi_{\bm{\nu}}, \Psi_{\bm{\mu}} \rangle  - \rho \langle \Psi_{\bm{\nu}} , \nabla a \cdot \nabla\Psi_{\bm{\mu}}
+ a \Delta \Psi_{\bm{\mu}} \rangle\\
& = \sum_{\bm{\tau}}\sum_{\bm{\tau'}}
\langle((2 i \pi \bm{\tau}) \cdot (2 i \pi \bm{\nu}) - 4 \pi^2
\| \bm{\nu} \|_2^2)  a_{\bm{\tau}} F_{\bm{\tau}}\Psi_{\bm{\nu}},  ((2 i \pi \bm{\bm{\tau}}') \cdot (2 i \pi \bm{\mu}) - 4 \pi^2 \| \bm{\mu} \|_2^2) 
a_{\bm{\tau'}} F_{\bm{\tau'}}\Psi_{\bm{\mu}}\rangle \\
& \quad + \rho ^2 \langle \Psi_{\bm{\nu}} , \Psi_{\bm{\mu}}\rangle - \rho \sum_{\bm{\tau}}
\langle((2 i \pi \bm{\tau}) \cdot (2 i \pi \bm{\nu}) - 4 \pi^2
\| \bm{\nu} \|_2^2)  a_{\bm{\tau}} F_{\bm{\tau}}\Psi_{\bm{\nu}}, \Psi_{\bm{\mu}} \rangle \\
&\quad - \rho \sum_{\bm{\tau'}}
\langle \Psi_{\bm{\nu}},  ((2 i \pi \bm{\bm{\tau}}') \cdot (2 i \pi \bm{\mu}) - 4 \pi^2 \| \bm{\mu} \|_2^2) 
a_{\bm{\tau'}} F_{\bm{\tau'}}\Psi_{\bm{\mu}}\rangle  \\
& = 16\pi^4r_{\bm{\nu}}\bar{r}_{\bm{\mu}}\sum_{\bm{\tau}}\sum_{\bm{\tau'}}  (\bm{\tau} \cdot \bm{\nu} + \| \bm{\nu} \|_2^2) (\bm{\tau'} \cdot \bm{\mu} + \| \bm{\mu} \|_2^2) a_{\bm{\tau}} \bar{a}_{\bm{\tau'}}
\langle  F_{\bm{\tau}} F_{\bm{\nu}},  F_{\bm{\tau'}} F_{\bm{\mu}}\rangle + \rho ^2 r_{\bm{\nu}}\bar{r}_{\bm{\mu}}
\langle F_{\bm{\nu}}, F_{\bm{\mu}} \rangle\\
& \quad + 4\pi^2\rho r_{\bm{\nu}} \bar{r}_{\bm{\mu}} \sum_{\bm{\tau}}  (\bm{\tau} \cdot \bm{\nu} +  \| \bm{\nu} \|_2^2) a_{\bm{\tau}} 
\langle F_{\bm{\tau}} F_{\bm{\nu}},  F_{\bm{\mu}}\rangle 
+ 4\pi^2\rho r_{\bm{\nu}} \bar{r}_{\bm{\mu}} \sum_{\bm{\tau'}}  (\bm{\tau'} \cdot \bm{\mu} +  \| \bm{\mu} \|_2^2)  \bar{a}_{\bm{\tau'}} 
\langle  F_{\bm{\nu}},  F_{\bm{\tau'}} F_{\bm{\mu}}\rangle \\
& = r_{\bm{\nu}}\bar{r}_{\bm{\mu}}\bigg(16\pi^4\sum_{\bm{\tau}}\sum_{\bm{\tau'}}  (\bm{\tau} \cdot \bm{\nu} + \| \bm{\nu} \|_2^2) (\bm{\tau'} \cdot \bm{\mu} + \| \bm{\mu} \|_2^2) a_{\bm{\tau}} \bar{a}_{\bm{\tau'}}
\delta_{\bm{\tau}+\bm{\nu},\bm{\tau'}+\bm{\mu}} + \rho ^2
\delta_{\bm{\nu},\bm{\mu}}\\
& \quad + 4\pi^2\rho \sum_{\bm{\tau}}  (\bm{\tau} \cdot \bm{\nu} +  \| \bm{\nu} \|_2^2) a_{\bm{\tau}} 
\delta_{\bm{\tau}+\bm{\nu},\bm{\mu}} 
+ 4\pi^2 \rho\sum_{\bm{\tau'}}  (\bm{\tau'} \cdot \bm{\mu} +  \| \bm{\mu} \|_2^2)  \bar{a}_{\bm{\tau'}} 
\delta_{\bm{\nu},\bm{\tau'}+\bm{\mu}}\bigg),
\end{align*}
where all the summations are over $\bm{\tau},\bm{\tau}' \in \mathbb{Z}^d$, and both $\bm{\nu}, \bm{\mu}$ not equal to $\bm{0}$. Moreover, for any $\bm{\nu} \neq \bm{0}$, we have
\begin{align*}
G_{\bm{\nu} \bm{0}} 
& = \langle \nabla a \cdot \nabla \Psi_{\bm{\nu}}
+ a \Delta \Psi_{\bm{\nu}} - \rho \Psi_{\bm{\nu}},
  - \rho \Psi_{\bm{0}} \rangle \\
& = \sum_{\bm{\tau}}
\langle((2 i \pi \bm{\tau}) \cdot (2 i \pi \bm{\nu}) - 4 \pi^2
\| \bm{\nu} \|_2^2)  a_{\bm{\tau}}  F_{\bm{\tau}}\Psi_{\bm{\nu}}, 
- \rho \Psi_{\bm{0}} \rangle \\
& = 4\pi^2\rho\sum_{\bm{\tau}} (\bm{\tau} \cdot \bm{\nu} +  \| \bm{\nu} \|_2^2)a_{\bm{\tau}} \rho\langle F_{\bm{\tau}} F_{\bm{\nu}} , F_{\bm{0}}\rangle \\
& = 4\pi^2\rho\sum_{\bm{\tau}} (\bm{\tau} \cdot \bm{\nu} +  \| \bm{\nu} \|_2^2)a_{\bm{\tau}} \rho\delta_{\bm{\tau}+\bm{\nu},\bm{0}} = 0.
\end{align*}
Similarly, one can show that $G_{\bm{0}\bm{\mu}} = 0$, and $G_{\bm{0}\bm{0}} = \rho^2\langle r_{\bm{0}}F_{\bm{0}}, r_{\bm{0}}F_{\bm{0}} \rangle = r_{\bm{0}}^2\rho ^2$.
\end{proof}

We also need an auxiliary result about norm equivalencies in $H^2(\mathbb{T}^d)$ that will be necessary to derive the final error bound of Theorem \ref{thm:CFC}. Letting $\vertiii{u}^2 :=\|u\|_{L^2}^2+\|\Delta u\|_{L^2}^2$, the following lemma shows that this norm is equivalent to the canonical $H^2$-norm. 
\begin{lemma}
\label{lem:H2_norm_equivalence}
    The norms $\normIII{\cdot}$ and $\|\cdot\|_{H^2}$ are equivalent. Specifically, $\sqrt{2/3}\|u\|_{H^2}\leq \normIII{u} \leq \|u\|_{H^2}$ for every $u \in H^2(\mathbb{T}^d)$.
\end{lemma}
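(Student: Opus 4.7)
The plan is to work in the Fourier domain. For $u\in H^2(\mathbb{T}^d)$, write $u=\sum_{\bm{\nu}\in\mathbb{Z}^d}\hat{u}_{\bm{\nu}} F_{\bm{\nu}}$ and use the differentiation identities in \eqref{eq:diff_Fourier} together with the $L^2$-orthonormality of the Fourier basis to rewrite each of the four norms in question as weighted $\ell^2$ sums over $\bm{\nu}$:
\begin{equation*}
\|u\|_{L^2}^2=\sum_{\bm{\nu}}|\hat{u}_{\bm{\nu}}|^2,\quad
\|\nabla u\|_{L^2}^2=\sum_{\bm{\nu}}4\pi^2\|\bm{\nu}\|_2^2|\hat{u}_{\bm{\nu}}|^2,
\end{equation*}
and, since $\|\nabla^2 F_{\bm{\nu}}\|_F^2=\sum_{i,j}(2\pi)^4\nu_i^2\nu_j^2=16\pi^4\|\bm{\nu}\|_2^4=|\Delta F_{\bm{\nu}}|^2$, we get the key identity
\begin{equation*}
\|\nabla^2 u\|_{L^2}^2=\sum_{\bm{\nu}}16\pi^4\|\bm{\nu}\|_2^4|\hat{u}_{\bm{\nu}}|^2=\|\Delta u\|_{L^2}^2.
\end{equation*}
This is the crux of the argument: on the torus the $L^2$-norm of the Hessian coincides with that of the Laplacian (no cross terms survive thanks to integration by parts / Fourier orthogonality).

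Given this identity, $\normIII{u}^2=\|u\|_{L^2}^2+\|\nabla^2 u\|_{L^2}^2$, whereas $\|u\|_{H^2}^2=\normIII{u}^2+\|\nabla u\|_{L^2}^2$. The upper bound $\normIII{u}\leq \|u\|_{H^2}$ then follows at once from $\|\nabla u\|_{L^2}^2\geq 0$.

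For the lower bound, I would reduce the claim $\sqrt{2/3}\|u\|_{H^2}\leq \normIII{u}$ to showing $\|\nabla u\|_{L^2}^2\leq \tfrac{1}{2}\normIII{u}^2$, since then $\|u\|_{H^2}^2=\normIII{u}^2+\|\nabla u\|_{L^2}^2\leq\tfrac{3}{2}\normIII{u}^2$. In Fourier coordinates this reduces to the pointwise-in-$\bm{\nu}$ inequality $8\pi^2\|\bm{\nu}\|_2^2\leq 1+16\pi^4\|\bm{\nu}\|_2^4$, which upon setting $t=4\pi^2\|\bm{\nu}\|_2^2\geq 0$ becomes $2t\leq 1+t^2$, i.e.\ $(1-t)^2\geq 0$. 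Summing the resulting per-mode inequality weighted by $|\hat{u}_{\bm{\nu}}|^2$ yields the desired bound and completes the proof.

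There is no real obstacle: the only subtlety is recognizing the Fourier identity $\|\nabla^2 u\|_{L^2}=\|\Delta u\|_{L^2}$ (which is specific to the periodic setting and would fail on a general domain), after which the remaining step is the elementary AM–GM-type inequality $1+t^2\geq 2t$. The constant $\sqrt{2/3}$ is sharp and is attained in the limit as $4\pi^2\|\bm{\nu}\|_2^2\to 1$.
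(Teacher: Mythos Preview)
Your proof is correct and follows essentially the same Fourier-side approach as the paper: expand in the orthonormal basis $\{F_{\bm{\nu}}\}$, express each norm as a weighted $\ell^2$ sum, and compare the weights $1+(2\pi)^2\|\bm{\nu}\|_2^2+(2\pi)^4\|\bm{\nu}\|_2^4$ and $1+(2\pi)^4\|\bm{\nu}\|_2^4$ pointwise (the paper phrases the last step as maximizing $(1+x^2+x^4)/(1+x^4)$, which is exactly your $(1-t)^2\ge 0$). One small aside: your sharpness remark is off, since on the integer lattice $4\pi^2\|\bm{\nu}\|_2^2\in\{0,4\pi^2,8\pi^2,\dots\}$ never comes close to $1$, so the constant $\sqrt{2/3}$ is not actually attained or approached in this setting.
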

\begin{proof}
Let $u(\bm{x}) = \sum_{\bm{\nu} \in \mathbb{Z}^d} {c}_{\bm{\nu}} \exp(2 \pi i \bm{\nu} \cdot \bm{x})$
and recall that $
\|u\|_{H^2}^2 = \|u\|_{L^2}^2 + \|\nabla u\|_{L^2}^2 + \|\nabla^2 u\|_{L^2}^2. 
$
Using Parseval's identity, we have
\begin{align*}
\|\nabla u\|_{L^2}^2 
 = (2 \pi)^2\int_{\mathbb{T}^d} \sum_{k=1}^d\left|\sum_{\bm{\nu} \in \mathbb{Z}^d} c_{\bm{\nu}} \nu_k \exp(2 \pi i \bm{\nu} \cdot \bm{x})\right|^2 d \bm{x}
 & = (2 \pi)^2\sum_{k=1}^d \sum_{\bm{\nu} \in \mathbb{Z}^d} |c_{\bm{\nu}}|^2 | \nu_k|^2 \\
 & = (2 \pi)^2 \sum_{\bm{\nu} \in \mathbb{Z}^d} |c_{\bm{\nu}}|^2 \| \bm{\nu}\|_2^2.
\end{align*}
Moreover,  recalling that $\|\nabla^2 u\|_{L^2}^2 = \int_{\mathbb{T}^d} \|\nabla^2u(\bm{x})\|_F^2 d \bm{x}$ and using Parseval's identity again, we obtain
\begin{align*}
\|\nabla^2 u\|_{L^2}^2 
& = (2 \pi)^4 \int_{\mathbb{T}^d}\sum_{k=1}^d \sum_{l=1}^d \left|\sum_{\bm{\nu} \in \mathbb{Z}^d} c_{\bm{\nu}}  \nu_k \nu_l \exp(2 \pi i \bm{\nu} \cdot \bm{x})\right|^2 d \bm{x}\\
& = (2 \pi)^4 \sum_{k=1}^d \sum_{l=1}^d \sum_{\bm{\nu} \in \mathbb{Z}^d} |c_{\bm{\nu}}|^2  (\nu_k \nu_l)^2 
 = (2 \pi)^4 \sum_{\bm{\nu} \in \mathbb{Z}^d} |c_{\bm{\nu}}|^2  \|\bm{\nu}\|_2^4.
\end{align*}
Now, observe that
$\|u\|_{L^2} = \|\bm{c}\|_2$.
Thus,
$$
\|u\|_{H^2}^2 = \sum_{\bm{\nu} \in \mathbb{Z}^d} |c_{\bm{\nu}}|^2 (1 + (2\pi)^2 \|\bm{\nu}\|_2^2 + (2\pi)^4 \|\bm{\nu}\|_2^4) = \bm{c}^* D_1 \bm{c},
$$
with $D_1 = \text{diag}((1 + (2\pi)^2 \|\bm{\nu}\|_2^2 + (2\pi)^4 \|\bm{\nu}\|_2^4)_{\bm{\nu} \in \mathbb{Z}^d})$. Moreover, 
\begin{align*}
\|\Delta u\|_{L^2}^2 
& = (2 \pi)^4 \int_{\mathbb{T}^d} \left|\sum_{\bm{\nu} \in \mathbb{Z}^d} c_{\bm{\nu}}  \|\bm{\nu}\|_2^2 \exp(2 \pi i \bm{\nu} \cdot \bm{x})\right|^2 d \bm{x}
 = (2 \pi)^4 \sum_{\bm{\nu} \in \mathbb{Z}^d} |c_{\bm{\nu}}|^2  \|\bm{\nu}\|_2^4.
\end{align*}
Therefore, 
$$
\normIII{u}^2 = \sum_{\bm{\nu} \in \mathbb{Z}^d} |c_{\bm{\nu}}|^2  (1 + (2 \pi)^4 \|\bm{\nu}\|_2^4)
= \bm{c}^* D_2 \bm{c},
$$
where $D_2= \text{diag}((1 + (2\pi)^4 \|\bm{\nu}\|_2^4)_{\bm{\nu} \in \mathbb{Z}^d})$.
Hence, 
$$
\frac{\|u\|_{H^2}^2}{\normIII{u}^2}
= \frac{\bm{c}^* D_1 \bm{c}}{\bm{c}^* D_2 \bm{c}}
= \frac{\bm{d}^* D_3 \bm{d}}{\bm{d}^*\bm{d}}
=\frac{\sum_{\bm{\nu} \in \mathbb{Z}^d} (D_3)_{\bm{\nu}, \bm{\nu}} |d_{\bm{\nu}}|^2}{\sum_{\bm{\nu} \in \mathbb{Z}^d} |d_{\bm{\nu}}|^2}
\leq \sup_{\bm{\nu} \in \mathbb{Z}^d}((D_3)_{\bm{\nu}, \bm{\nu}}),
$$
where we made the change of variable $\bm{d} = D_2^{\frac12} \bm{c}$ and where 
$$
D_3 = D_2^{-\frac12}D_1D_2^{-\frac12}
= \text{diag}\left(\left(\frac{1 + (2\pi)^2 \|\bm{\nu}\|_2^2 + (2\pi)^4 \|\bm{\nu}\|_2^4}{1 + (2\pi)^4 \|\bm{\nu}\|_2^4}\right)_{\bm{\nu} \in \mathbb{Z}^d}\right).
$$
In particular,
$$
\sup_{\bm{\nu} \in \mathbb{Z}^d}((D_3)_{\bm{\nu}, \bm{\nu}})
= \sup_{\bm{\nu} \in \mathbb{Z}^d}\frac{1 + (2\pi)^2 \|\bm{\nu}\|_2^2 + (2\pi)^4 \|\bm{\nu}\|_2^4}{1 + (2\pi)^4 \|\bm{\nu}\|_2^4} \leq \frac{3}{2}
$$
The last bound is obtained by maximizing $x \mapsto (1+ x^2 + x^4 )/(1 + x^4)$ over $\mathbb{R}$. 
Similarly, we obtain the second inequality in the statement of equivalence from
$$
\frac{\normIII{u}^2}{\|u\|_{H^2}^2}
\leq \sup_{\bm{\nu} \in \mathbb{Z}^d}\frac{1 + (2\pi)^4 \|\bm{\nu}\|_2^4}{1 + (2\pi)^2 \|\bm{\nu}\|_2^2 + (2\pi)^4 \|\bm{\nu}\|_2^4} \leq 1.
$$
This concludes the proof.
\end{proof} 

\subsubsection{CFC convergence}

We are now ready to illustrate our CFC convergence result. Its proof is based on the same tools employed in \cite{wang2022compressive} and it is therefore an extension of the analysis presented therein. Keeping the CFC setup of \S\ref{sec:CFC} in mind, in order to recover a compressible solution $\hat{\bm{c}}$ to the linear system \eqref{eq:SC_system}, we utilize the \emph{Square-Root LASSO} (in short, SR-LASSO), see \citep{adcock2019correcting,belloni2011square} and references therein, a modified version of the original LASSO (Least Absolute Shrinkage and Selection Operator), which lacks a power of 2 on the data fidelity term. The SR-LASSO is defined as
\begin{equation}\label{eq:SRLasso}
    \min_{\bm{z} \in \mathbb{C}^N} \left\{ \|A\bm{z}-\bm{b}\|_2 + \lambda \|\bm{z}\|_1 \right\}.
\end{equation} 
The main benefit of this formulation is that the optimal choice of the tuning parameter $\lambda>0$ is independent of the noise level, see \citep[\S6.6.2]{adcock2022sparse}.

\begin{theorem}[Convergence of CFC for diffusion-reaction problems]\label{thm:CFC}
Given a dimension $d\in \mathbb{N}$, target sparsity $s\in\mathbb{N}$, hyperbolic cross order $n\in \mathbb{N}$ and probability of failure $\varepsilon \in (0,1)$, let $\Lambda = \Lambda^{\textnormal{HC}}_{d,n}\subset\mathbb{Z}^d$ and suppose $a \in C^1(\mathbb{T}^d)$ and $\rho \in \mathbb{R}$ satisfy \eqref{eq:suff_cond_a_rho_f} and \eqref{eq:diff_expansion}--\eqref{eq:suff_cond_PET}. Then, the system $\{\Phi_{\bm{\nu}}\}_{\bm{\nu}\in\Lambda}$ defined in \eqref{eq:def_Phi} is a bounded Riesz system in the sense of Definition \ref{def:Riesz} with constants
\begin{align} 
    b_{\Phi} & = a_{\bm{0}}^2  - \left( 2  a_{\bm{0}} + \frac{\rho}{2 \pi^2} \right) \beta - \beta^2>0,\\
    B_{\Phi} & = \left\|a \right\|_{H^1}^2 + \frac{\rho^2}{16 \pi^4} + \frac{ a_{\bm{0}}\rho }{2\pi^2} + \left( 2  a_{\bm{0}} + \frac{\rho}{2 \pi^2} \right) \beta+ \beta^2,
\end{align}
where $\beta = \sqrt{|T|}\|a-a_{\bm{0}}\|_{H^1}$, and 
$
K_{\Phi}=  a_{\bm{0}} + \beta + \frac{\rho}{4\pi^2}.
$
Moreover, the CFC solution $\hat{u}$ in \eqref{eq:def_uhat} approximating the high-dimensional periodic diffusion-reaction equation \eqref{eq:diffusion_eq_periodic_1}, whose coefficients $\hat{\bm{c}}$ are computed by solving the SR-LASSO problem \eqref{eq:SRLasso} with $A$ and $\bm{b}$ defined as in \eqref{eq:def_A_b} and with tuning parameter $d^{(1)}_{a,\rho}\sqrt{B_\Phi/s}<\lambda\leq d^{(2)}_{a,\rho}\sqrt{B_\Phi/s}$, satisfies the following with probability at least $1-\varepsilon$: if
\begin{equation}
\label{eq:sample_complexity}
    m\geq c^{(3)}_{a,\rho} \cdot s \cdot \log^2\left(c^{(4)}_{a,\rho}\cdot s\right)\cdot \left(\min\{\log(n)+d,\log(2n)\log(2d)\}+\log(\varepsilon^{-1})\right),
\end{equation}
then 
\begin{align}
\label{eq:CFC_first_error_bound}
    \|u-\hat{u}\|_{L^2} + \|(\Delta-\rho)(u-\hat{u})\|_{L^2}
    & \leq C^{(1)}_{a,\rho}\cdot\frac{\sigma_s(\bm{c}_\Lambda)_1}{\sqrt{s}} + C^{(2)}_{a,d,\rho} \cdot\left(\frac{\|u-u_{\Lambda}\|_{W^{2,\infty}}}{\sqrt{s}} + \|u-u_{\Lambda}\|_{H^2}\right).
\end{align}
Moreover, if $\rho < 1$, we also have
\begin{align}
\label{eq:CFC_H2_bound}
    \|u - \hat{u}\|_{H^2}
\leq C^{(3)}_{a,\rho}\cdot \frac{\sigma_s(\bm{c}_\Lambda)_1}{\sqrt{s}}+C^{(4)}_{a, d, \rho}\cdot \left(\frac{\|u-u_{\Lambda}\|_{W^{2,\infty}}}{\sqrt{s}}+\|u-u_{\Lambda}\|_{H^2}\right).
\end{align}
Here each constant depends only on the subscripted parameters. Moreover, the dependence of each constant on $d$ is at most linear (when present).
\end{theorem}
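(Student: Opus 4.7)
\textbf{sketch (proposal).}
The plan is to first establish that $\{\Phi_{\bm{\nu}}\}_{\bm{\nu}\in\Lambda}$ is a bounded Riesz system with the stated constants, and then invoke existing compressive sensing theory for bounded Riesz systems to obtain an $\ell^2$ recovery bound for the SR-LASSO minimizer that can be translated back to function-space norms via the Riesz frame bounds.

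For the Riesz system property, I would use Lemma~\ref{lem:expansion_G} to write the entries of the Gram matrix $G$ in terms of the Fourier coefficients of $a$ and the scalar $\rho$, with the rescaling $r_{\bm{\nu}} = (4\pi^2\|\bm{\nu}\|_2^2 + \rho/a_{\bm{0}})^{-1}$. Then I would apply Lemma~\ref{lem:Gershgorin_Hermitian_mat} (Gershgorin for Hermitian matrices) to bound $\lambda_{\min}(G)$ and $\lambda_{\max}(G)$ by the diagonal entries of $G$ plus/minus the off-diagonal row sums. The diagonal entries, after the convolution sum collapses via the Kronecker deltas, should reduce to $\|a\|_{H^1}^2 + a_{\bm{0}} \rho/(2\pi^2) + \rho^2/(16\pi^4)$ (up to the chosen rescaling), while the off-diagonal sums are controlled by $\|a-a_{\bm{0}}\|_{H^1}$ and $|T|$ through a Cauchy--Schwarz argument applied to the sparse expansion of $a$ in $T$. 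The condition \eqref{eq:suff_cond_PET} is precisely what is needed to ensure $b_\Phi > 0$, i.e.\ that the quadratic in $\beta = \sqrt{|T|}\|a-a_{\bm{0}}\|_{H^1}$ has a positive constant term after subtracting the off-diagonal contribution. The boundedness constant $K_\Phi$ follows by writing $\Phi_{\bm{\nu}} = r_{\bm{\nu}}\bigl(-\nabla a\cdot \nabla F_{\bm{\nu}} - a\Delta F_{\bm{\nu}} + \rho F_{\bm{\nu}}\bigr)$, using $|F_{\bm{\nu}}| = 1$, and exploiting the rescaling to cancel the $\|\bm{\nu}\|_2^2$ growth.

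Next, the CFC matrix $A$ in \eqref{eq:def_A_b} is precisely the random sampling matrix of the bounded Riesz system $\{\Phi_{\bm{\nu}}\}_{\bm{\nu}\in\Lambda}$. I would then cite the compressive-sensing-in-bounded-Riesz-systems theory (e.g.\ \citep{BrugiapagliaSparseRecoveryiBRS} or \citep[Ch.~7, Ch.~13]{adcock2022sparse}) to conclude that, under the sample complexity \eqref{eq:sample_complexity}, $A$ satisfies the robust null space property over $\Lambda$ with probability at least $1-\varepsilon$. The min of the two logarithmic factors $\log(n)+d$ and $\log(2n)\log(2d)$ in \eqref{eq:sample_complexity} comes from two alternative bounds on $\log|\Lambda^{\mathrm{HC}}_{d,n}|$. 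The error bound for the SR-LASSO minimizer with the stated choice of $\lambda$ then yields
\begin{equation*}
\|\bm{c}_\Lambda - \hat{\bm{c}}\|_2 \lesssim \frac{\sigma_s(\bm{c}_\Lambda)_1}{\sqrt{s}} + \|A\bm{c}_\Lambda - \bm{b}\|_2,
\end{equation*}
where the residual $A\bm{c}_\Lambda - \bm{b}$ is the truncation error sampled at the collocation points; a Bernstein/Hoeffding-style argument (as in \cite{wang2022compressive}) bounds this by $\|u - u_\Lambda\|_{W^{2,\infty}}/\sqrt{s}$ plus an $H^2$ truncation contribution, explaining the form of the right-hand sides of \eqref{eq:CFC_first_error_bound}--\eqref{eq:CFC_H2_bound}.

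Finally, I would convert the $\ell^2$ coefficient bound into the stated function-space bounds. By the lower Riesz inequality, $\|\mathscr{L}[u_\Lambda - \hat u]\|_{L^2}^2 = \|\sum_{\bm{\nu}\in\Lambda}(c_{\bm{\nu}}-\hat c_{\bm{\nu}})\Phi_{\bm{\nu}}\|_{L^2}^2 \geq b_\Phi\|\bm{c}_\Lambda - \hat{\bm{c}}\|_2^2$. Adding a triangle inequality accounting for the truncation $u - u_\Lambda$, together with the identity $\mathscr{L} = -a\Delta - \nabla a \cdot \nabla + \rho$ and the assumed smoothness of $a$, yields control on $\|(\Delta - \rho)(u - \hat u)\|_{L^2}$ up to lower-order terms, and then on $\|u-\hat u\|_{L^2}$ via coercivity of $\mathscr{L}$ (which holds under \eqref{eq:suff_cond_a_rho_f}), giving \eqref{eq:CFC_first_error_bound}. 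The $H^2$ bound \eqref{eq:CFC_H2_bound} when $\rho < 1$ follows from Lemma~\ref{lem:H2_norm_equivalence}: under $\rho<1$ the left-hand side of \eqref{eq:CFC_first_error_bound} controls $\|u-\hat u\|_{L^2} + \|\Delta(u-\hat u)\|_{L^2} \gtrsim \normIII{u-\hat u} \gtrsim \|u-\hat u\|_{H^2}$. The main obstacle I anticipate is Step~1: carefully accounting for the Fourier coefficients of $a$ with the weights $(\bm{\tau}\cdot\bm{\nu}+\|\bm{\nu}\|_2^2)$ in Lemma~\ref{lem:expansion_G} to obtain the tight constants $b_\Phi, B_\Phi$, and verifying that the rescaling $r_{\bm{\nu}}$ cancels the $\|\bm{\nu}\|_2^2$ factors uniformly across the hyperbolic cross, so that the Gershgorin bounds are genuinely $d$-independent except through $\beta$.
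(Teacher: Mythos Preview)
Your proposal follows essentially the same four-part strategy as the paper: (1) Gershgorin on the Gram matrix via Lemma~\ref{lem:expansion_G} and Cauchy--Schwarz to get $b_\Phi,B_\Phi$; (2) direct $L^\infty$ bound for $K_\Phi$; (3) rNSP for $A$ from the bounded-Riesz-system theory of \cite{BrugiapagliaSparseRecoveryiBRS} under \eqref{eq:sample_complexity}, then SR-LASSO recovery for $\hat{\bm{c}}$; (4) bound the sampled truncation residual by $W^{2,\infty}$ and $H^2$ terms.

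One point where your sketch deviates from the paper and would cause trouble as written: in your final step you invoke the \emph{lower} Riesz inequality $\|\mathscr{L}[u_\Lambda-\hat u]\|_{L^2}^2\geq b_\Phi\|\bm{c}_\Lambda-\hat{\bm c}\|_2^2$, but this is the wrong direction for turning an upper bound on $\|\bm{c}_\Lambda-\hat{\bm c}\|_2$ into an upper bound on the function norm. The paper does not go through $\mathscr{L}$ at all here; instead it uses that the rescaling $r_{\bm\nu}=(4\pi^2\|\bm\nu\|_2^2+\rho/a_{\bm 0})^{-1}$ was chosen precisely so that $(-\Delta+\rho)\Psi_{\bm\nu}$ is (a fixed multiple of) $F_{\bm\nu}$, whence Parseval gives $\|(\Delta-\rho)(u_\Lambda-\hat u)\|_{L^2}=\|\bm{c}_\Lambda-\hat{\bm c}\|_2$ directly, and $\|u_\Lambda-\hat u\|_{L^2}=\|R(\bm{c}_\Lambda-\hat{\bm c})\|_2\leq (a_{\bm 0}/\rho)\|\bm{c}_\Lambda-\hat{\bm c}\|_2$. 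This is both cleaner and avoids having to peel $(\Delta-\rho)$ off of $\mathscr{L}$ via extra $a$-dependent estimates. Apart from this, your plan matches the paper's proof.
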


\begin{remark}
The above theorem holds for OMP recovery as well, under a sufficient condition on the ratio $b_{\Phi}/B_{\Phi}$, see \citep[Theorem 3]{wang2022compressive}.
\end{remark}

\begin{remark}
There is a gap between Theorem~\ref{thm:CFC} and the corresponding result in the diffusion setting \citep[Theorem 3.5]{wang2022compressive}. Namely, in the latter the diffusion coefficient $a$ can be nonsparse. Extending Theorem~\ref{thm:CFC} to allow for more general diffusion coefficients is an open problem.
\end{remark}

\begin{proof}
Given sparse diffusion and reaction terms $a$ and $\rho$ as in \eqref{eq:suff_cond_a_rho_f} and \eqref{eq:diff_expansion}--\eqref{eq:suff_cond_PET}, we determine that $\left\{\Phi_{\bm{\nu}}\right\}_{\bm{\nu}\in\Lambda}$ is a bounded Riesz system using the explicit form of the Gram matrix in Lemma \ref{lem:expansion_G}.

\paragraph{Step 1: Riesz property.} We find lower and upper Riesz constants $b_{\Phi}$ and $B_{\Phi}$ by establishing a two-sided spectral bound for the Gram matrix $G$, recall equation \eqref{eq:def_G}. First, observing that the first row (and column) of $G$ only has one nonzero entry (namely, $G_{\bm{00}}$), one eigenvalue of $G$ is $G_{\bm{00}} = a_{\bm{0}}^2$. To estimate the remaining eigenvalues, we use a special version of Gershgorin's circle theorem for Hermitian matrices, presented in Lemma \ref{lem:Gershgorin_Hermitian_mat}, see, e.g., \citep[Theorem 6.1.1]{horn2012matrix} for the general statement. Using Lemma~\ref{lem:expansion_G} with the generic normalization, the diagonal entries $G_{\bm{\nu} \bm{\nu}}$ with $\bm{\nu} \in \Lambda\setminus\{\bm{0}\}$ are given by
\begin{align*}
    G_{\bm{\nu} \bm{\nu}} & = 16\pi^4|r_{\bm{\nu}}|^2\sum_{\bm{\tau}\in T\cup\{\bm{0}\}}(\bm{\tau}\cdot\bm{\nu}+\|\bm{\nu}\|_2^2)^2|a_{\bm{\tau}}|^2+\rho^2|r_{\bm{\nu}}|^2+8\pi^2|r_{\bm{\nu}}|^2\|\bm{\nu}\|_2^2\rho a_{\bm{0}}\\
    & = 16\pi^4\|\bm{\nu}\|_2^4|r_{\bm{\nu}}|^2\left[\left(  a_{\bm{0}} + \frac{\rho}{4 \pi^2 \| \bm{\nu} \|_2^2} \right)^2  + 
    \sum_{\bm{\tau} \in T}\left( \frac{\bm{\tau} \cdot \bm{\nu} +  \| \bm{\nu} \|_2^2}{\| \bm{\nu} \|_2^2} \right)^2 \left|a_{\bm{\tau}}\right|^2\right].
\end{align*} 
Using the Cauchy-Schwarz inequality and inputting the actual normalization used in \eqref{eq:def_Psi}, we see that, for all $\bm{\nu} \in \Lambda\setminus\{\bm{0}\}$,  
$$
|r_{\bm{\nu}}|^2\left(a_{\bm{0}} +\frac{\rho}{4\pi^2\|\bm{\nu}\|_2^2}\right)^2
\leq  \frac{\lvert G_{\bm{\nu} \bm{\nu}} \rvert}{16\pi^4\|\bm{\nu}\|_2^4} 
\leq  |r_{\bm{\nu}}|^2\left[a_{\bm{0}}^2 + \frac{\rho^2}{16 \pi^4} + \frac{ |a_{\bm{0}}|\rho}{2\pi^2} + \sum_{\bm{\tau} \in T } \left(\| \bm{\tau} \|_2 + 1 \right)^2 \lvert a_{\bm{\tau}} \rvert^2\right]\nonumber 
$$
which, in turn, implies
\begin{equation}
\label{eq:G_diag}
    a_{\bm{0}}^2\leq\lvert G_{\bm{\nu} \bm{\nu}} \rvert \leq\left\| a \right\|^2_{H^1} + \frac{\rho^2}{16 \pi^4} + \frac{ |a_{\bm{0}}|\rho}{2\pi^2}.
\end{equation}
The inequality defining the upper bound can be proved as follows. Using the definition of $H^1$-norm, the differentiation properties \eqref{eq:diff_Fourier}, and the fact that the Fourier system $\{F_{\bm{\nu}}\}_{\bm{\nu} \in \mathbb{Z}^d}$ is $L^2$-orthonormal, we obtain 
\begin{align}
\label{eq:LowerBoundOnH1NormForDiffCoeff}
     \left\| a \right\|^2_{H^1} 
     & =   \left\| a \right\|^2_{L^2}
 + \sum^{d}_{l=1}  \left\| \frac{\partial a}{\partial x_l} \right\|^2_{L^2}\nonumber
     = \sum_{\bm{\tau}\in T \cup \{\bm{0}\}}|a_{\bm{\tau}}|^2 + \sum^{d}_{l=1}\sum_{\bm{\tau}\in T \cup \{\bm{0}\}}|2\pi a_{\bm{\tau}} \bm{\tau}_l|^2\nonumber\\
     & =  |a_{\bm{0}}|^2 + \sum_{\bm{\tau} \in T}\left(1+(2\pi)^2 \left\|\bm{\tau}\right\|_2^2\right)|a_{\bm{\tau}}|^2
      \geq  |a_{\bm{0}}|^2+ \sum_{\bm{\tau} \in T} \left(\| \bm{\tau} \|_2 + 1 \right)^2 \lvert a_{\bm{\tau}} \rvert^2, 
\end{align}
which proves \eqref{eq:G_diag}.
To apply Gershgorin's circle theorem, we now bound the sum of all off-diagonal entries in the $\bm{\nu}$-th row of $G$. Using Lemma~\ref{lem:expansion_G} again, the definition of the Kronecker delta, \eqref{eq:G_00_G_nu0_G_0nu}  and the fact that $|r_{\bm{\nu}}| \leq 1/(4\pi^2 \|\bm{\nu}\|_2^2) \leq 1/4\pi^2$ for every $\bm{\nu} \neq \bm{0}$, we obtain
\begin{align*}
\sum_{\bm{\mu} \in \Lambda \setminus \{\bm{\nu}\}} \lvert G_{\bm{\nu} \bm{\mu}} \rvert
& = \sum_{\bm{\mu} \in \Lambda \setminus \{\bm{\nu}\}} \bigg| r_{\bm{\nu}}\bar{r}_{\bm{\mu}}\bigg(16\pi^4\sum_{\bm{\tau}\in \mathbb{Z}^d} (\bm{\tau} \cdot \bm{\nu} + \| \bm{\nu} \|_2^2) \left((\bm{\tau} + \bm{\nu} - \bm{\mu}) \cdot \bm{\mu} + \| \bm{\mu} \|_2^2 \right) a_{\bm{\tau}} \bar{a}_{\bm{\tau} + \bm{\nu} - \bm{\mu}}
     \\
    & \quad + 4\pi^2\rho \left( (\bm{\mu} - \bm{\nu}) \cdot \bm{\nu} +  \| \bm{\nu} \|_2^2 \right) a_{\bm{\mu} - \bm{\nu}}  
    + 4\pi^2 \rho  \left((\bm{\nu} - \bm{\mu}) \cdot \bm{\mu} +  \| \bm{\mu} \|_2^2 \right)  \bar{a}_{\bm{\nu} - \bm{\mu}} 
    \bigg) \bigg| \\
    & \leq \sum_{\bm{\tau}\in \mathbb{Z}^d} \left(\frac{|\bm{\tau} \cdot \bm{\nu}|}{\| \bm{\nu} \|_2^2} + 1 \right) \lvert a_{\bm{\tau}} \rvert \sum_{\bm{\mu} \in \Lambda \setminus\{ \bm{\nu}\}} \left(\frac{ |\left(\bm{\tau} +\bm{\nu} -\bm{\mu} \right) \cdot \bm{\mu}|}{\| \bm{\mu} \|_2^2} + 1 \right) \lvert a_{\bm{\bm{\tau} +\bm{\nu} -\bm{\mu}}} \rvert \\
    & \quad + \frac{\rho}{4\pi^2}\sum_{\bm{\mu} \in \Lambda \setminus\{ \bm{\nu}\}} \left( \frac{| (\bm{\mu} - \bm{\nu}) \cdot \bm{\nu} |}{\|\bm{\nu}\|_2^2} + 1 \right)|a_{\bm{\mu} - \bm{\nu}}| \\
    &  \quad + \frac{\rho}{4\pi^2}\sum_{\bm{\mu} \in \Lambda \setminus\{ \bm{\nu}\}} \left( \frac{| (\bm{\nu} - \bm{\mu}) \cdot \bm{\mu} |}{\|\bm{\mu}\|_2^2} + 1 \right)|\bar{a}_{\bm{\nu} - \bm{\mu}}|.
\end{align*}
Using the Cauchy-Schwarz inequality $|\bm{\tau} \cdot \bm{\nu}| \leq \|\bm{\tau}\|_2 \cdot \|\bm{\nu}\|_2$ and the fact that $\|\bm{\mu}\|_2 \geq 1$ and $\|\bm{\nu}\|_2 \geq 1$ for all $\bm{\mu},\bm{\nu} \neq \bm{0}$, we see that, for all $\bm{\nu} \in \Lambda\setminus\{\bm{0}\}$,
\begin{align*}
\sum_{\bm{\mu} \in \Lambda \setminus \{\bm{\nu}\}} \lvert G_{\bm{\nu} \bm{\mu}} \rvert
& \leq \sum_{\bm{\tau} \in T \cup \{\bm{0}\}} \left(\| \bm{\tau} \|_2 + 1 \right) \lvert a_{\bm{\tau}} \rvert \sum_{\bm{\mu} \in \Lambda \setminus\{ \bm{\nu}\}} \left(\| \bm{\tau} +\bm{\nu} -\bm{\mu}\|_2 + 1 \right) \lvert a_{\bm{\bm{\tau} +\bm{\nu} -\bm{\mu}}} \rvert \\
& \quad +  \frac{\rho}{4\pi^2}\sum_{\bm{\tau} \in T} \left(\| \bm{\tau} \|_2 + 1 \right) \lvert a_{\bm{\tau}} \rvert +  \frac{\rho}{4\pi^2}\sum_{\bm{\tau} \in T} \left(\| \bm{\tau} \|_2 + 1 \right) \lvert \bar{a}_{\bm{\tau}} \rvert.
\end{align*}
Substituting $\bm{\tau}'=\bm{\tau}+\bm{\nu}-\bm{\mu}$ (which implies $\bm{\tau}' \neq \bm{\tau}$), recalling that $(a_{\bm{\nu}})_{\bm{\nu} \in \mathbb{Z}^d}$ is supported on $T \cup \{\bm{0}\}$, and separating the $a_{\bm{0}}$ term, we obtain
\begin{align*}
    \sum_{\bm{\mu} \in \Lambda \setminus\{ \bm{\nu}\}} \lvert G_{\bm{\nu} \bm{\mu}} \rvert
    & \leq \sum_{\bm{\tau} \in T \cup \{\bm{0}\}} \left(\| \bm{\tau} \|_2 + 1 \right) \lvert a_{\bm{\tau}} \rvert \sum_{\bm{\tau'}  \in T \cup \{\bm{0}\} \setminus\{ \bm{\tau}\}} \left(\| \bm{\tau'} \|_2 + 1 \right) \lvert a_{\bm{\tau'}} \rvert + \frac{\rho}{2 \pi^2}\sum_{\bm{\tau} \in T} \left(\| \bm{\tau} \|_2 + 1 \right) \lvert a_{\bm{\tau}} \rvert\\
    & =  |a_{\bm{0}}| \sum_{\bm{\tau'}  \in T } \left(\| \bm{\tau'} \|_2 + 1 \right) \lvert a_{\bm{\tau'}} \rvert + \sum_{\bm{\tau}  \in T } \left(\| \bm{\tau} \|_2 + 1 \right) \lvert a_{\bm{\tau}} \rvert \sum_{\bm{\tau'}  \in T \cup \{\bm{0}\}\setminus\{ \bm{\tau}\}} \left(\| \bm{\tau'} \|_2 + 1 \right) \lvert a_{\bm{\tau'}} \rvert \\
    & \quad + \frac{\rho}{2 \pi^2}\sum_{\bm{\tau} \in T} \left(\| \bm{\tau} \|_2 + 1 \right) \lvert a_{\bm{\tau}} \rvert\\
    & \leq 2  |a_{\bm{0}}| \sum_{\bm{\tau'}  \in T } \left(\| \bm{\tau'} \|_2 + 1 \right) \lvert a_{\bm{\tau'}} \rvert + \sum_{\bm{\tau}  \in T} \left(\| \bm{\tau} \|_2 + 1 \right) \lvert a_{\bm{\tau}} \rvert \sum_{\bm{\tau'} \in T} \left(\| \bm{\tau'} \|_2 + 1 \right) \lvert a_{\bm{\tau'}} \rvert \\
    & \quad +\frac{\rho}{2 \pi^2}\sum_{\bm{\tau} \in T} \left(\| \bm{\tau} \|_2 + 1 \right) \lvert a_{\bm{\tau}} \rvert\\
    & = \left( 2  a_{\bm{0}}+ \frac{\rho}{2 \pi^2} \right)\sum_{\bm{\tau'}  \in T } \left(\| \bm{\tau'} \|_2 + 1 \right) \lvert a_{\bm{\tau'}} \rvert + \left( \sum_{\bm{\tau}  \in T } \left(\| \bm{\tau} \|_2 + 1 \right) \lvert a_{\bm{\tau}} \rvert \right)^2.
\end{align*}
Applying the Cauchy–Schwarz inequality, denoting $t=|T|$, and separating the $a_{\bm{0}}$ term, we obtain the bound 
\begin{equation}
\label{eq:Bound_beta}
    \sum_{\bm{\tau} \in T } \left(\| \bm{\tau} \|_2 + 1 \right) \lvert a_{\bm{\tau}} \rvert 
    \leq \sqrt{t}
    \left( \sum_{\bm{\tau}\in T} \left(\| \bm{\tau} \|_2 + 1 \right)^2 \lvert a_{\bm{\tau}} \rvert ^2 \right)^{1/2}
     \leq \sqrt{t} \left( \left\| a \right\|^2_{H^1} -   a_{\bm{0}} ^2 \right)^{1/2} := \beta.
\end{equation}
Combining the above inequalities yields
\begin{equation} \label{eq:G_off_diag}
    \sum_{\bm{\mu} \in \Lambda \setminus \{\bm{\nu}\} } \lvert G_{\bm{\nu} \bm{\mu}} \rvert 
    \leq \left( 2  a_{\bm{0}} + \frac{\rho}{2 \pi^2} \right) \beta+\beta^2.
\end{equation}
Finally, applying the Gershgorin circle theorem on $G$ combining \eqref{eq:G_diag} and \eqref{eq:G_off_diag} and recalling that $\rho^2$ is an eigenvalue of $G$, we obtain the Riesz constants 
\begin{align} \label{eq:eta_sparse_conclusion}
b_{\Phi} & = a_{\bm{0}}^2  - \left( 2  a_{\bm{0}} + \frac{\rho}{2 \pi^2} \right) \beta - \beta^2>0, \\
B_{\Phi} & = \left\|a \right\|_{H^1}^2 + \frac{\rho^2}{16 \pi^4} + \frac{ a_{\bm{0}}\rho }{2\pi^2} + \left( 2  a_{\bm{0}} + \frac{\rho}{2 \pi^2} \right) \beta+ \beta^2.
\end{align}
We note that the positivity of $b_\Phi$ is required for equation \eqref{eq:sample_complexity} to be well-defined and this is assured by the sufficient condition \eqref{eq:suff_cond_PET}.

\paragraph{Step 2: Boundedness.} To bound the essential supremum of the Riesz system, we use properties \eqref{eq:diff_Fourier} and \eqref{eq:prod_Fourier} 
\begin{align*}
\| \Phi_{\bm{\nu}}\|_{L^{\infty}}
&=\|\mathscr{L}\left[\Psi_{\bm{\nu}}\right]\|_{L^{\infty}}
=\|-\nabla\cdot\left(a\nabla\Psi_{\bm{\nu}}\right)+\rho\Psi_{\bm{\nu}}\|_{L^{\infty}}\\
&\leq\|\nabla a\cdot\nabla\Psi_{\bm{\nu}}\|_{L^{\infty}}+\| a\Delta\Psi_{\bm{\nu}}\|_{L^{\infty}}+\|\rho\Psi_{\bm{\nu}}\|_{L^{\infty}}\\
&\leq\left\|\sum_{\bm{\tau}\in T\cup\{\bm{0}\}}2\pi i\bm{\tau}a_{\bm{\tau}}F_{\bm{\tau}}\right\|_{L^{\infty}}\cdot\left\|\frac{2\pi i\bm{\nu}}{4\pi^2\|\bm{\nu}\|^2_2}F_{\bm{\nu}}\right\|_{L^{\infty}}+\left\|\sum_{\bm{\tau}\in T\cup\{\bm{0}\}}a_{\bm{\tau}}F_{\bm{\tau}}\right\|_{L^{\infty}}+\frac{\rho}{4\pi^2\|\bm{\nu}\|^2_2}\\
&\leq\sum_{\bm{\tau}\in T\cup\{\bm{0}\}}\frac{\lvert\bm{\tau}\cdot\bm{\nu}\rvert}{\|\bm{\nu}\|^2_2}|a_{\bm{\tau}}|+\sum_{\bm{\tau}\in T\cup\{\bm{0}\}}|a_{\bm{\tau}}|+\frac{\rho}{4\pi^2\|\bm{\nu}\|^2_2}\\
&=\sum_{\bm{\tau}\in T\cup\{\bm{0}\}}\left(\frac{\lvert\bm{\tau}\cdot\bm{\nu}\rvert}{\|\bm{\nu}\|^2_2}+1\right)|a_{\bm{\tau}}|+\frac{\rho}{4\pi^2\|\bm{\nu}\|^2_2}
\end{align*}
Finally, applying the Cauchy inequality, equation \eqref{eq:LowerBoundOnH1NormForDiffCoeff} and the fact that $\|\bm{\nu}\|_2\geq 1$, we obtain 
\begin{align}
\label{eq:EssSup_noTail}
    \|\Phi_{\bm{\nu}}\|_{L^\infty}&\leq\sum_{\bm{\tau}\in T\cup\{\bm{0}\}}\left(\frac{|\bm{\tau}\cdot\bm{\nu}|}{\|\bm{\nu}\|^2}+1\right)|a_{\bm{\tau}}|+\frac{ \rho }{4\pi^2\|\bm{\nu}\|_2^2}\nonumber\\
    &\leq  a_{\bm{0}} +\sum_{\bm{\tau}\in T}\left(\|\bm{\tau}\|+1\right)|a_{\bm{\tau}}|+\frac{ \rho }{4\pi^2}\leq   a_{\bm{0}} + \beta + \frac{\rho}{4\pi^2}.
\end{align}
This proves that $\{\Phi_{\bm{\nu}}\}_{\bm{\nu}\in\Lambda}$ is a bounded Riesz system (Definition~\ref{def:Riesz}). Now, it remains to show accurate and stable recovery guarantees for the problem \eqref{eq:SRLasso}. The machinery for this is enabled by recent advances in sparse recovery for bounded Riesz systems \citep{brugiapaglia2021sparse}.

\paragraph{Step 3: Bounded Riesz property $\Longrightarrow$ error bound.}
The strategy is to pick a lower bound on the sample complexity, such that $A$ satisfies the robust null-space property with high probability, and then appeal to the recovery bounds already supplied by the literature surrounding SR-LASSO. We begin with a simple re-normalization of the compressive Fourier collocation matrix, by letting $\tilde{A}=A/\sqrt{B_\phi}$. We thus consider a rescaled version of the SR-LASSO problem
\begin{equation}
    \hat{\bm{c}}\in\argmin_{\bm{z} \in \mathbb{C}^N}\|A\bm{z}-\bm{b}\|_2+\lambda\|\bm{z}\|_1,
\end{equation}
defined as
\begin{equation}
     \hat{\tilde{\bm{c}}}\in 
    \argmin_{\tilde{\bm{z}}\in \mathbb{C}^N}\|\tilde{A}\tilde{\bm{z}}-\bm{b}\|_2+ \tilde{\lambda}\|\tilde{\bm{z}}\|_1,
\end{equation}
where $\tilde{\bm{z}}=\sqrt{B_{\Phi}}\bm{z}$ and $\tilde{\lambda} = \lambda/\sqrt{B_{\Phi}}$. The two minimizers are such that $\sqrt{B_{\Phi}} \hat{\bm{c}} = \hat{\tilde{\bm{c}}}$. An inspection of the proof of \cite[Theorem 2.6]{brugiapaglia2021sparse} reveals that condition 
\begin{equation}
    \label{eq:suff_cond_m}
    m\geq c_0\left(\frac{\max\{1,B_\Phi\}}{b_\Phi}\right)^2K_\Phi^2s\log^2\left(sK_\Phi^2\frac{\max\{1,B_\Phi\}}{b_\Phi}\right)\log(eN),
\end{equation}
where $c_0>0$ is a universal constant, is sufficient for $\tilde{A}$ to satisfy the rNSP with constants $\rho=1/2$ and $\gamma=2B_{\Phi}/b_{\Phi}$ and with probability at least $1-\varepsilon/2$.\footnote{Observe that the probability of failure \cite[Theorem 2.6]{brugiapaglia2021sparse} is $4\exp{-c_0'(b_{\Phi}/\max\{1,B_{\Phi}\})^2m/(sK_\Phi^2)}$, for some constant $c_0'>0$, and which is bounded by $\varepsilon$ thanks to \eqref{eq:sample_complexity}.} 

Note that condition \eqref{eq:suff_cond_m} is implied by \eqref{eq:sample_complexity} with
\begin{align*}
c^{(3)}_{a,\rho}  = c_0\left(\frac{\max\{1,B_\Phi\}}{b_\Phi}\right)^2K_\Phi^2
\quad \text{and}\quad
c^{(4)}_{a,\rho}  = K_\Phi^2\frac{\max\{1,B_\Phi\}}{b_\Phi},
\end{align*}
and thanks to the upper bound \eqref{eq:card_bound_HC} on the cardinality of the hyperbolic cross $N = |\Lambda|$.

Hence, we can apply \cite[Theorem 6.29]{adcock2022sparse} with $\bm{w}=1$ and $\bm{b}=\tilde{A}\tilde{\bm{c}}_\Lambda+\bm{e}=A\bm{c}_\Lambda+\bm{e}$, where 
$$
\bm{e} = \frac{1}{\sqrt{m}}\left(\mathscr{L}[u-u_{\Lambda}](\bm{x}_i)\right)_{i\in[m]} \in \mathbb{C}^m,
$$
yielding the following coefficient recovery guarantee:
\begin{equation}
    \|\hat{\tilde{\bm{c}}}-\tilde{\bm{c}}_\Lambda\|_2\leq c_1\frac{\sigma_s(\tilde{\bm{c}}_\Lambda)_1}{\sqrt{s}}+\frac{1}{2}\left(\frac{c_1}{\sqrt{s}\tilde{\lambda}}+c_2\right)\|\bm{e}\|_2,
\end{equation}
for constants $c_1 = 2(1+\rho)^2/(1-\rho)= 9$ and $c_2 = 2(3+\rho)\gamma/(1-\rho) = 28 B_\Phi/b_\Phi$, see also Eq.~(6.15) from \citep{adcock2022sparse}. 
Note also that Eq.~(6.47) \citep{adcock2022sparse} allows us to pick a specific range for $\tilde{\lambda}$ and, consequently, for $\lambda$:
\begin{equation}
    \sqrt{B_\Phi}\frac{d^{(1)}_{a,\rho}}{\sqrt{s}}\leq\lambda\leq\sqrt{B_\Phi}\frac{d^{(2)}_{a,\rho}}{\sqrt{s}},
\end{equation}
where $d^{(1)}_{a,\rho}$ and $d^{(2)}_{a,\rho}$ are such that $0<d^{(1)}_{a,\rho}\leq d^{(2)}_{a,\rho}\leq\frac{1+\rho}{(3+\rho)\gamma}=\frac{3b_\Phi}{14B_\Phi}$. Finally, changing back to the original variables, and inserting values for constants, we obtain
\begin{align}\label{eq:CoeffBound}
    \|\hat{\bm{c}}-\bm{c}_\Lambda\|_2&\leq\frac{c_1\sigma_s(\bm{c}_\Lambda)_1}{\sqrt{s}}+\frac{1}{2\sqrt{B_\Phi}}\left(\frac{c_1\sqrt{B_\Phi}}{\sqrt{s}\lambda}+c_2\right)\|\bm{e}\|_2\nonumber\\
    &=\frac{9\sigma_s(\bm{c}_\Lambda)_1}{\sqrt{s}}+\frac{1}{2}\left(\frac{9}{\sqrt{s}\lambda}+\frac{28\sqrt{B_\Phi}}{b_\Phi}\right)\|\bm{e}\|_2\nonumber\\
    &\leq\frac{9\sigma_s(\bm{c}_\Lambda)_1}{\sqrt{s}}+\frac{1}{2}\left(\frac{9}{\sqrt{B_{\Phi}}d_1}+\frac{28\sqrt{B_\Phi}}{b_\Phi}\right)\|\bm{e}\|_2.
\end{align}

Thanks to the rescaling used in \eqref{eq:def_Psi}, we have $(-\Delta + \rho)\Psi_{\bm{\nu}} = F_{\bm{\nu}}$ for all $\bm{\nu} \in \mathbb{Z}^d$. Hence, for any $v\in H^2(\mathbb{T}^d)$ such that $v = \sum_{\bm{\nu}\in \mathbb{Z}^d} d_{\bm{\nu}} \Psi_{\bm{\nu}}
$, using Parseval's identity, we obtain 
\begin{equation}
\label{eq:consequence_Parseval}
\|(\Delta - \rho)v\|_{L^2} = \|\bm{d}\|_2
\quad
\text{and}
\quad
\|v\|_{L^2} = \|R \bm{d}\|_2 \leq (a_{\bm{0}}/\rho) \| \bm{d}\|_2,
\end{equation}
where $R = \text{diag}((r_{\bm{\nu}})_{\bm{\nu}\in\mathbb{Z}^d})$ with $r_{\bm{\nu}}=1/(4\pi^2 \|\bm{\nu}\|_2^2 + \rho/a_{\bm{0}})$ being the rescaling factors, and where we used the fact that $r_{\bm{\nu}}\leq a_{\bm{0}}/\rho$ for every $\bm{\nu}\in\mathbb{Z}^d$.
Using the triangle inequality, we see that
\begin{align*}
\|(\Delta-\rho)(u-\hat{u})\|_{L^2} & \leq \|(\Delta-\rho)(u-u_{\Lambda})\|_{L^2} + \|(\Delta-\rho)(u_{\Lambda}-\hat{u})\|_{L^2}\\
\|u-\hat{u}\|_{L^2} & \leq \|u-u_{\Lambda}\|_{L^2} + \|u_{\Lambda}-\hat{u}\|_{L^2}.
\end{align*}
These, combined with \eqref{eq:CoeffBound} and \eqref{eq:consequence_Parseval} (with $v = \hat{u}-u_{\Lambda}$ and $\bm{d} = \hat{\bm{c}} - \bm{c}_{\Lambda}$), imply the following error bounds:
\begin{align*}
    \|(\Delta-\rho)(u-\hat{u})\|_{L^2}
    & \leq \|(\Delta-\rho)(u-u_\Lambda)\|_{L^2}+\frac{9}{\sqrt{s}}\sigma_s(\bm{c}_\Lambda)_1+\frac{1}{2}\left(\frac{9}{\sqrt{B_\Phi}d_1}+\frac{28\sqrt{B_\Phi}}{b_\Phi}\right)\|\bm{e}\|_2\\
    \|u-\hat{u}\|_{L^2}
    & \leq \|u-u_\Lambda\|_{L^2}+\frac{a_{\bm{0}}}{\rho}\left(\frac{9}{\sqrt{s}}\sigma_s(\bm{c}_\Lambda)_1+\frac{1}{2}\left(\frac{9}{\sqrt{B_\Phi}d_1}+\frac{28\sqrt{B_\Phi}}{b_\Phi}\right)\|\bm{e}\|_2\right).
\end{align*}
Summing these two inequalities and observing that, thanks to Lemma~\ref{lem:H2_norm_equivalence}, for any $v\in H^2(\mathbb{T}^d)$,
$$
\|(\Delta-\rho)v\|_{L^2} + \|v\|_{L^2} 
\leq \|\Delta u\|_{L^2} + (1 + \rho) \|v\|_{L^2} 
\leq \sqrt{2} (1 + \rho )  \vertiii{v}
\leq \sqrt{2} (1 + \rho)    \|v\|_{H^2}, 
$$
we see that
\begin{align}
\nonumber
\|u-\hat{u}\|_{L^2} + \|(\Delta-\rho)(u-\hat{u})\|_{L^2}
& \leq \sqrt{2} (1 + \rho)    \|u-u_\Lambda\|_{H^2}\\
& \quad + \left(1 + \frac{a_{\bm{0}}}{\rho}\right)\left(\frac{9}{\sqrt{s}}\sigma_s(\bm{c}_\Lambda)_1+\frac{1}{2}\left(\frac{9}{\sqrt{B_\Phi}d_1}+\frac{28\sqrt{B_\Phi}}{b_\Phi}\right)\|\bm{e}\|_2\right).
\label{eq:CFC_proof_first_error_bound}
\end{align}

\paragraph{Step 4: Bounding the truncation error $\|\bm{e}\|_2$ and further simplifications.}
Following the discussion of \cite[page 9]{wang2022compressive} and the notation of \cite[\S7.6.1]{adcock2022sparse}, we define 
\begin{equation*}
    \|\bm{e}\|_2=\sqrt{\frac{1}{m}\sum_{i=1}^m\left|\mathscr{L}[u-u_\Lambda](y_i)\right|^2}=:E_{\Lambda,\text{disc}}(\mathscr{L}[u]).
\end{equation*}
Now, \eqref{eq:sample_complexity} implies  $m\geq 2s\log(4/\varepsilon)$, giving the following bound for $E_{\Lambda,\text{disc}}(\mathscr{L}[u])$ with probability $1-\varepsilon/2$:
\begin{equation}
\label{eq:bounding_truncation_error}
    E_{\Lambda,\text{disc}}(\mathscr{L}[u])\leq\sqrt{2}\left(\frac{E_{\Lambda,\infty}(\mathscr{L}[u])}{\sqrt{s}}+E_{\Lambda,2}(\mathscr{L}[u])\right),
\end{equation}
thanks to \cite[Lemma 7.11]{adcock2022sparse}, where $E_{\Lambda,\infty}(\mathscr{L}[u])=\|\mathscr{L}[u-u_\Lambda]\|_{L^\infty}$ and $E_{\Lambda,2}(\mathscr{L}[u])=\|\mathscr{L}[u-u_\Lambda]\|_{L^2}$. Hence, \eqref{eq:sample_complexity} is sufficient so that both the rNSP and \eqref{eq:bounding_truncation_error} hold with probability $1-\varepsilon$ and we obtain
\begin{equation}
\label{eq:error_after_bounding_truncation_error}
    \|\bm{e}\|_2 \leq \sqrt{2}\left(\frac{\|\mathscr{L}[u-u_\Lambda]\|_{L^\infty}}{\sqrt{s}}+\|\mathscr{L}[u-u_\Lambda]\|_{L^2}\right),
\end{equation}
Now, we estimate the two error terms on the right-hand side. Note that, for $1\leq p \leq \infty$, and any $v\in H^2(\mathbb{T}^d)$,
\begin{align}
\label{eq:PDE_operator_trunc_bound}
    \|\mathscr{L}[v]\|_{L^p}
    & = \|-\nabla \cdot (a \nabla v) + \rho v\|_{L^p}\nonumber\\
    & = \|-\nabla a \cdot \nabla v - a \Delta v + \rho v\|_{L^p}\nonumber\\
    & \leq \|\nabla a \cdot \nabla v\|_{L^p} + \|a \Delta v\|_{L^p} + \|\rho v\|_{L^p}.
\end{align}
Using the inequality
$\|\bm{v}\|_1 \leq d^{1-\frac1p}\|\bm{v}\|_p$, for all $\bm{v} \in \mathbb{R}^d$,
we begin by analyzing the first two terms. First, we see that
\begin{align*}
    \|\nabla a \cdot \nabla v\|_{L^p}
    & =\left\|\sum_{k=1}^d\frac{\partial a}{\partial x_k}\frac{\partial v}{\partial x_k}\right\|_{L^p}
    \leq\sum_{k=1}^d\left\|\frac{\partial a}{\partial x_k}\frac{\partial v}{\partial x_k}\right\|_{L^p}
    \leq \max_{k\in\left[d\right]}\left\|\frac{\partial a}{\partial x_k}\right\|_{L^\infty}\sum_{k=1}^d\left\|\frac{\partial v}{\partial x_k}\right\|_{L^p}\\
    & =\max_{k\in\left[d\right]}\left\|\frac{\partial a}{\partial x_k}\right\|_{L^\infty}\left\|\left(\left\|\frac{\partial v}{\partial x_k}\right\|_{L^p}\right)_{k=1}^d\right\|_{1}
    \leq \|a\|_{W^{1,\infty}}\cdot d^{1-\frac{1}{p}}\left\|\left(\left\|\frac{\partial v}{\partial x_k}\right\|_{L^p}\right)_{k=1}^d\right\|_{p}\\
    & \leq d^{1-\frac{1}{p}} \|a\|_{W^{1,\infty}}\|v\|_{W^{2, p}}.
\end{align*}
Second, we have
\begin{align*}
    \|a\Delta v\|_{L^p} 
    & \leq \|a\|_{L^\infty}\left\|\sum_{k=1}^d\frac{\partial^2 v}{\partial x_k^p}\right\|_{L^p}\leq\|a\|_{L^\infty}\sum_{k=1}^d\left\|\frac{\partial^2 v}{\partial x_k^2}\right\|_{L^p}=\|a\|_{L^\infty}\left\|\left(\left\|\frac{\partial^2v}{\partial x_k^2}\right\|_{L^p}\right)_{k=1}^d\right\|_{1}\\
    & \leq \|a\|_{L^\infty}\cdot d^{1-\frac{1}{p}}\left\|\left(\left\|\frac{\partial^2v}{\partial x_k^2}\right\|_{L^p}\right)_{k=1}^d\right\|_{p}
    \leq d^{1-\frac{1}{p}}\|a\|_{L^{\infty}}\|v\|_{W^{2,p}}. 
\end{align*}
The third term is more easily estimated as
$
\|\rho v\|_{L^p}\leq \|\rho\|_{L^\infty}\|v\|_{L^p}\leq\|\rho\|_{L^\infty}\|v\|_{W^{2,p}}.
$
Altogether, when $p=\infty$, we have
\begin{equation}
\label{eq:trunc_error_infty}
    \|\mathscr{L}\left[v\right]\|_{L^\infty}\leq \|\nabla a\cdot \nabla v\|_{L^\infty}+\|a\Delta v\|_{L^\infty}+\|\rho v\|_{L^\infty}
    \leq \left(2d\|a\|_{W^{1,\infty}}+\|\rho\|_{L^\infty}\right) \|v\|_{W^{2,\infty}},
\end{equation}
and, for $p=2$,
\begin{equation}
\label{eq:trunc_error_L2}
    \|\mathscr{L}\left[v\right]\|_{L^2}
    \leq \|\nabla a\cdot \nabla v\|_{L^2}+\|a\Delta v\|_{L^2}+\|\rho v\|_{L^2}\leq \left(2\sqrt{d}\|a\|_{W^{1,\infty}}+\|\rho\|_{L^\infty}\right) \|v\|_{H^2}.
\end{equation}
Letting $v = u-u_{\Lambda}$ and combining \eqref{eq:bounding_truncation_error} with the above estimates yields
$$
\|\bm{e}\|_2 \leq \sqrt{2} \left(\|a\|_{W^{1,\infty}}+\|\rho\|_{L^\infty}\right) \left(\frac{\|u-u_{\Lambda}\|_{W^{2,\infty}}}{\sqrt{s}} + \|u-u_{\Lambda}\|_{H^2}\right).
$$
Plugging the above bound into \eqref{eq:CFC_proof_first_error_bound} leads to the recovery guarantee \eqref{eq:CFC_first_error_bound},  
where 
\begin{align*}
    C^{(1)}_{a, \rho} & = 9\left(1 + \frac{a_{\bm{0}}}{\rho}\right)\\
    C^{(2)}_{a,d,\rho} & = \frac{1}{2}\left(1 + \frac{a_{\bm{0}}}{\rho}\right)\left(\frac{9}{\sqrt{B_\Phi}d_1}+\frac{28\sqrt{B_\Phi}}{b_\Phi}\right) \sqrt{2} \left(\|a\|_{W^{1,\infty}}+\|\rho\|_{L^\infty} + 1\right) +\sqrt{2}(1+\rho).
\end{align*}

When $\rho < 1$, the bound \eqref{eq:CFC_H2_bound} can be obtained by invoking Lemma~\ref{lem:H2_norm_equivalence} again and observing that, for every $v\in H^2(\mathbb{T}^d)$,
$$
\|(\Delta-\rho)v\|_{L^2} + \|v\|_{L^2}
\geq \|\Delta v\|_{L^2} + (1-\rho) \|v\|_{L^2}
\geq (1-\rho) \vertiii{v}
\geq (1-\rho) \sqrt{\frac{2}{3}} \|v\|_{H^2}.
$$
Combining this inequality with \eqref{eq:CFC_proof_first_error_bound}, we obtain \eqref{eq:CFC_H2_bound}, where 
$$
C^{(3)}_{a,\rho} = \frac{C^{(1)}_{a,\rho}}{\sqrt{2/3}(1-\rho)}
\quad \text{and}\quad
C^{(4)}_{a,d,\rho} = \frac{C^{(2)}_{a,d,\rho}}{\sqrt{2/3}(1-\rho)}.
$$
This concludes the proof.
\end{proof}

\subsection{Proof of Theorem~\ref{thm:PET}}
\label{sec:proof_PET}

We proceed with a proof of the main result. For a summary of the general proof strategy, we refer to the proof sketch after the statement of Theorem~\ref{thm:PET}. Most of the technical efforts will be devoted to constructing the class of periodic PINNs, $\mathcal{N}$ and bounding the depth and width of its neural networks. This will allow us to leverage the CFC convergence theory (Theorem~\ref{thm:CFC}) to derive the desired error bounds \eqref{eq:PET_first_error_bound}--\eqref{eq:PET_H2_bound}.

\paragraph{Construction of the network class $\mathcal{N}$.} As explained in the proof sketch, we will construct networks $\psi \in\mathcal{N}$ able to produce linear combinations of the form \eqref{eq:def_u_Lambda}. Letting $N = |\Lambda|$, this class is of the form  
\begin{equation}
\label{eq:def_class_N}
\mathcal{N} = \left\{\psi : \mathbb{R}^d \to \mathbb{C} : \psi(\bm{x}) = \bm{z}^\top\psi_{\Lambda}(\bm{x}), \bm{z} \in \mathbb{C}^N\right\},
\end{equation}
where $\psi_{\Lambda}(\bm{x}) = (\psi_{\bm{\nu}}(\bm{x}))_{\bm{\nu} \in \Lambda}$ and $\psi_{\bm{\nu}}$ is a network replicating the $\bm{\nu}$th (rescaled) Fourier function, i.e., $\psi_{\bm{\nu}} \equiv \Psi_{\bm{\nu}}$, with $\Psi_{\bm{\nu}}$ defined as in \eqref{eq:def_Psi}. The main task is therefore to explicitly construct the networks $\psi_{\bm{\nu}}$. We will do so by composing different network modules, aimed at producing more and more complex functions of the input variables $\bm{x} = (x_j)_{j\in [d]}$, as follows:
\begin{enumerate}
    \item First, using a periodic layer $\bm{v}^{(2)}\circ \bm{q}^{(1)}$ with $\bm{v}^{(2)}(\bm{x}) = \bm{x}$ (see \S\ref{sec:PINNs}), we will generate univariate trigonometric functions $\sin(2\pi x_j)$ and $\cos(2\pi x_j)$, for $j\in [d]$.
    \item Then, through a network module $\bm{v}_{\text{trig} \to \text{multi-freq}}$, we will compute univariate trignometric functions corresponding to all the frequencies we need to generate functions in the system $\{\Psi_{\bm{\nu}}\}_{\bm{\nu}\in \Lambda}$. These are given by $\cos(2 \pi \nu x_j)$ and $\sin(2\pi \nu x_j)$, for $\nu \in \{ -\nu_{\max}, \ldots, \nu_{\max}\}$ and $j \in [d]$, where
    $$
    \nu_{\max} := \max \{ |\nu_j| : \bm{\nu} = (\nu_j)_{j=1}^d \in \Lambda\} = n-1,
    $$
    is the maximum absolute frequency of multi-indices in the hyperbolc cross $\Lambda = \Lambda^{\text{HC}}_{d,n}$. 
    
    \item Finally, using a module $\bm{v}_{\text{multi-freq} \to \Lambda}$, we will generate the rescaled Fourier functions $\Psi_{\bm{\nu}}(\bm{x})$ through tensorization.
\end{enumerate}
In summary, we have
$
\psi_{\Lambda} =  \bm{v}_{\text{multi-freq} \to \Lambda} \circ \bm{v}_{\text{trig} \to \text{multi-freq}} \circ \bm{q}^{(1)}.
$
After constructing $\psi_{\Lambda}$, we add a last linear layer with weights $\bm{z}^\top \in \mathbb{R}^{1\times N}$ to generate linear combinations of the form \eqref{eq:def_u_Lambda} (with $c_{\bm{\nu}} = z_{\bm{\nu}}$). Fig.~\ref{Fig:Summary_NN} summarizes our network construction. 
\begin{figure}[!t]
\centering
\includegraphics[width=\linewidth]{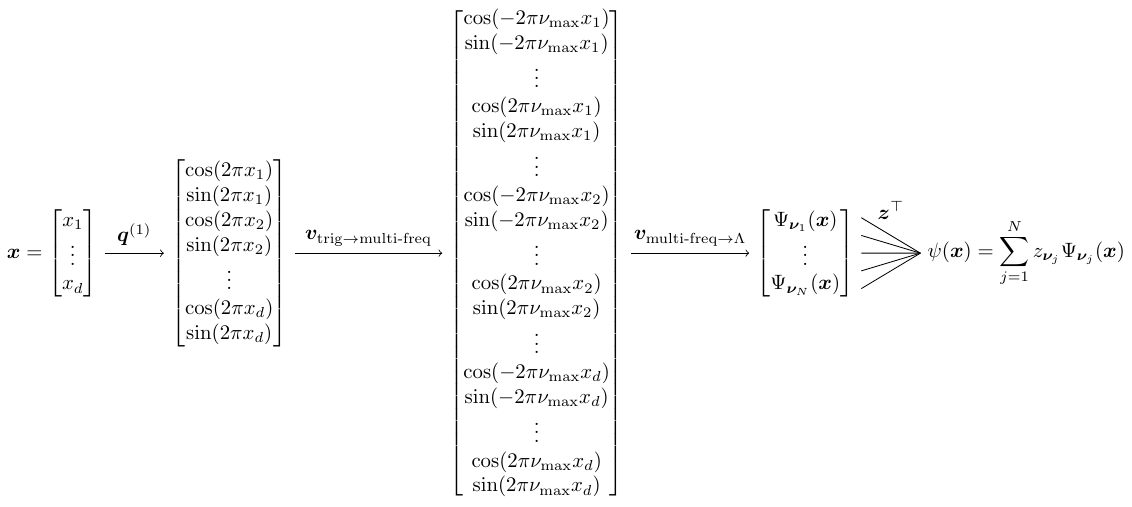}
\caption{\label{Fig:Summary_NN}Summary of a generic network $\psi\in \mathcal{N}$. }
\end{figure}

A key ingredient of the proof is the fact that neural networks with RePU activation can exactly replicate products of real numbers. Specifically, for any $\ell, k \in \mathbb{N}$ with $\ell\geq2$, \cite[Lemma 7.3]{adcock2023near}, which, in turn relies on techniques from \citep{opschoor2022exponential,schwab2019deep}, establishes the existence (along with an explicit construction) of a feedforward neural network $P^{(k
)}:\mathbb{R}^k \to \mathbb{R}$ with $\mathrm{RePU}_{\ell}$ activation able to exactly reproduce the product of $k$ numbers, i.e.,
$$
P^{(k)}(x_1, \ldots, x_k) = \prod_{j = 1}^k x_j,
\quad \forall \bm{x} = (x_j)_{j=1}^k \in \mathbb{R}^k.
$$
Moreover, the width and depth of this network are such that 
\begin{equation}
\label{eq:depth_width_bounds_product_net}
\text{width}(P^{(k)}) \leq p_{\ell}^{(1)} \cdot k \quad \text{and} \quad \text{depth}(P^{(k)}) \leq p^{(2)} \cdot \log_2(k),
\end{equation}
where $p_{\ell}^{(1)}>0$ depends on $\ell$ only and $p^{(2)}>0$ is a universal constant. Note that we will not track the dependence of $P^{(k)}$ on $\ell$ since this parameter is assumed to be fixed throughout the proof. Before showing in detail how to construct the various modules composing $\psi_{\Lambda}$, we establish a connection between the training of periodic PINNs in $\mathcal{N}$ and CFC approximation, which will yield the desired recovery guarantees.

\paragraph{CFC convergence theory $\Rightarrow$ error bounds \eqref{eq:PET_first_error_bound}--\eqref{eq:PET_H2_bound}.} To apply Theorem~\ref{thm:CFC}, we leverage two important facts: (i) the spectral basis \eqref{eq:def_Psi} is exactly replicated by the network $\psi_{\Lambda}$ and (ii) the weights of the last layer $\bm{z}^\top \in \mathbb{R}^{1\times N}$, corresponding to the coefficients of linear combinations of the form \eqref{eq:def_u_Lambda}, are the only trainable parameters of $\psi$. In this setting, training the network $\psi$ by minimizing the regularized RMSE loss in \eqref{eq:MSE+regularization} with regularization term
\begin{equation}
\label{eq:regulariz_term}
\mathcal{R}(\psi) = \|\bm{z}\|_1,
\end{equation}
is equivalent to solving the SR-LASSO problem \eqref{eq:SRLasso} with $A$ and $\bm{b}$ as in \eqref{eq:def_A_b}. Hence, we can simply apply Theorem~\ref{thm:CFC} to obtain the desired conclusion. \\

The rest of the proof is devoted to illustrating the construction of the periodic layer $\bm{q}^{(1)} \circ \bm{v}^{(2)}$ and the network modules $\bm{v}_{\text{trig} \to \text{multi-freq}}$ and $\bm{v}_{\text{multi-freq} \to \Lambda}$ in full detail.

\paragraph{Construction of $\psi_\Lambda$ (Step 1): the periodic layer.} Recalling the notation introduced in \eqref{eq:def_periodic_layer}, we let $l=2$, with $\phi_{i1} = 0$ and $\phi_{i1} = - \pi/2$, for $i\in [d]$. This leads to 
$$
\bm{x} \in \mathbb{R}^d \mapsto 
\bm{q}^{(1)}(\bm{x}) = 
\begin{bmatrix} \cos(2 \pi x_1)\\ 
\cos(2 \pi x_1 - \pi/2)\\ 
\vdots\\ 
\cos(2\pi x_d)\\
\cos(2 \pi x_d - \pi/2) 
\end{bmatrix} 
=
\begin{bmatrix} \cos(2 \pi x_1)\\ 
\sin(2 \pi x_1)\\ 
\vdots\\ 
\cos(2\pi x_d)\\
\sin(2 \pi x_d) 
\end{bmatrix}
=:
\begin{bmatrix}
    c_1\\
    s_1\\
    \vdots\\
    c_d\\
    s_d
\end{bmatrix}
\in \mathbb{R}^{2d},
$$
where we have used the short-hand notation $c_j = \cos(2 \pi x_j)$ and $s_j = \sin(2 \pi x_j)$. The second layer within the periodic layer is a linear (identity) layer, i.e. $\bm{v}^{(2)}(\bm{x}) = \bm{x}$. Hence, in summary, 
\begin{equation}
\label{eq:width_depth_periodic_layer}
\text{depth}(\bm{v}^{(2)} \circ \bm{q}^{(1)}) = 3 
\quad \text{and} \quad
\text{width}(\bm{v}^{(2)} \circ \bm{q}^{(1)}) = 2d.
\end{equation}

\paragraph{Construction of $\psi_\Lambda$ (Step 2): the module $\bm{v}_{\textnormal{trig} \to \textnormal{multi-freq}}$.} To generate sine and cosine functions at multiple frequencies given $c_i$, $s_i$, we first generate powers of $c_i$ via an inner network submodule module $\bm{v}_{\text{trig} \to \text{pow}_i}$ and then apply multi-angle trigonometric formulas (see, e.g., \cite{MultipleAngleFormulae}).

We start by presenting the inner module $\bm{v}_{\text{trig}\rightarrow\text{pow}_i}$, which constructs the powers of cosine for a fixed dimension $i$. This is given by
\begin{equation}
\label{eq:def_trig_pow_i}
\begin{bmatrix} 
    c_1 \\
    s_1\\
    \vdots\\ 
    c_d\\ 
    s_d
\end{bmatrix} 
\xrightarrow[\text{restriction}]{\text{linear}}
\begin{bmatrix} 
    c_i\\
    s_i
\end{bmatrix} 
\xrightarrow[\text{$\bm{x} \mapsto W\bm{x} + \bm{b}$}]{\text{affine layer}}
\begin{bmatrix}
    s_i\\
    1\\
    c_i\\
    c_i\\
    \vdots\\
    c_i
\end{bmatrix}
\xrightarrow[]{\text{powers}}
\begin{bmatrix}
    s_i\\
    1\\
    c_i\\
    c_i^2\\
    \vdots\\
    c_i^{\nu_{\text{max}}}
\end{bmatrix}
:= \text{pow}_i,
\end{equation}
where the weights and biases in $W \in \mathbb{R}^{(\nu_{\max}+2) \times 2}$ and $\bm{b}\in \mathbb{R}^{\nu_{\max}+2}$ and the network corresponding to $\xrightarrow{\text{powers}}$ are illustrated in Fig.~\ref{Fig:trig->pow}. Note that this network contains both RePU and linear activations, in accordance with \eqref{eq:def_hidden_layer}. 
\begin{figure}[!t]
\centering
\includegraphics[width=0.9\linewidth]{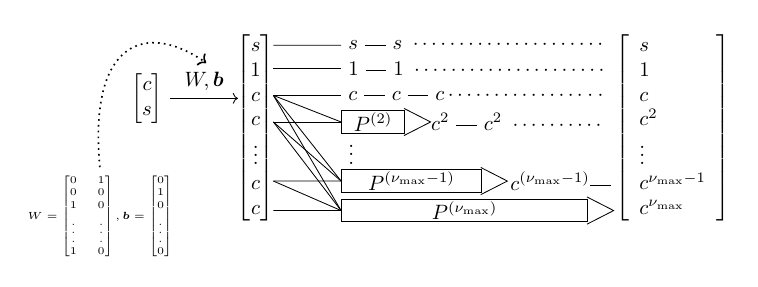}
\caption{\label{Fig:trig->pow} Part of the network submodule $\bm{v}_{\text{trig} \to \text{pow}_i}$ defined in \eqref{eq:def_trig_pow_i}, used to generate powers of cosine at dimension $i$.}
\end{figure}
Recalling \eqref{eq:depth_width_bounds_product_net} and observing that consecutive affine layers can be combined together without altering depth (since the composition of affine maps is an affine map), the depth and width of this module are bounded by 
\begin{align*}
    \text{width}(\bm{v}_{\text{trig} \to \text{pow}_i}) & 
    \leq   3 + \sum_{k=2}^{\nu_{\text{max}}}\text{width}\left(P^{(k)}\right)
    \leq C_1 \cdot \nu_{\text{max}}^2\cdot p_{\ell}^{(1)},\\
    \text{depth}(\bm{v}_{\text{trig} \to \text{pow}_i}) & \leq
    \text{depth}\left(P^{(\nu_{\text{max}})}\right) \leq p^{(2)}\cdot \log_2(\nu_{\text{max}}),
\end{align*}
where $C_1>0$ is a universal constant (recall that $p^{(1)}_\ell$ depends on the degree $\ell$ of the RePU activation, whereas $p^{(2)}$ does not).

Now, the output $\text{pow}_i$ of $\bm{v}_{\text{trig} \to \text{pow}_i}(\bm{x})$ is transformed using multi-angle formulae (see, e.g., \cite{MultipleAngleFormulae}) to obtain univariate sine and cosine functions at multiple frequencies. This corresponds to a linear layer
$$
\label{fig:powi->}
\text{pow}_i = \begin{bmatrix}
    s_i\\
    1\\
    c_i\\
    \vdots\\
    c_i^{\nu_{\text{max}}}
\end{bmatrix}
\xrightarrow[\text{formulae}]{\text{multi-angle}}
\begin{bmatrix}
    \cos(-\nu_{\text{max}}x_i)\\
    \sin(-\nu_{\text{max}}x_i)\\
    \vdots\\
    \cos(\nu_{\text{max}}x_i)\\
    \sin(\nu_{\text{max}}x_i)
\end{bmatrix}.
$$
For the sake of completeness, we quickly review how to apply the multi-angle formulae in this context. These formulae allow us to generate functions of the form $\cos(2 \pi \nu x)$ and $\sin(2 \pi \nu x)$ from linear combinations of $\sin(2\pi x)$ and powers of $\cos(2\pi x)$. In fact, for any $\nu \in \{-\nu_{\max}, \ldots, \nu_{\max}\}$ and $x \in \mathbb{R}$, we have 
\begin{equation}
\label{eq:multiple_angle_cosines}
\cos(2 \pi\nu x) = \sum_{i=0}^{\lfloor 
\nu/2 \rfloor} \sum_{j=0}^i (-1)^{i-j}\binom{\nu}{2i} \binom{i}{j} [\cos(2 \pi x)]^{\nu - 2(i-j)}
= \sum_{k=\nu - \lfloor 
\nu/2 \rfloor}^{\nu} a_k^{(\nu)} \cos^k(2 \pi x),
\end{equation}
for suitable values of $a_k^{(\nu)}$ and where we have used the reindexing $k=\nu - 2(i-j)$ in the second inequality and $\lfloor\cdot\rfloor$ denotes the floor function. 
We can use the following analogous formula for the construction of sine functions at multiple frequencies: 
$$
\sin(2\nu\pi x) = \sin(2\pi x) \cdot \sum_{i=0}^{\lfloor 
(\nu+1)/2 \rfloor} \sum_{j=0}^i (-1)^{i-j} \binom{\nu}{2i+1} \binom{i}{j} [\cos(2 \pi x)]^{\nu - 2(i-j)-1}.
$$
Note that the multi-angle formula for sines does not need powers of sines but only powers of cosines. The above formula can be replicated by combining a linear layer with a network $P^{(2)}$ performing the product.

Taking into account the linear layer corresponding to the multi-angle formulae and the fact that the generation of functions $\cos(2\pi \nu x_i)$ and $\sin(2\pi \nu x_i)$ must be repeated for each dimension $i\in [d]$, we obtain
\begin{align*}
    \text{width}(\bm{v}_{\text{trig} \to \text{multi-freq}}) & 
    = d \cdot \text{width}(\bm{v}_{\text{trig} \to \text{pow}_i})  
    \leq C_1\cdot p_{\ell}^{(1)} \cdot d \cdot \nu_{\text{max}}^2,\\
    \text{depth}(\bm{v}_{\text{trig} \to \text{multi-freq}}) & =
    \text{depth}(\bm{v}_{\text{trig} \to \text{pow}_i}) + \text{depth}(P^{(2)}) \leq C_2 \cdot   p^{(2)} \cdot \log_2(\nu_{\text{max}})
\end{align*}
for a universal constant $C_2 > 0$.

\paragraph{Construction of $\psi_\Lambda$ (Step 3): the module $\bm{v}_{\textnormal{multi-freq} \to \Lambda}$.}
The final module's objective is to generate the rescaled Fourier functions $\Psi_{\bm{\nu}}(\bm{x})$ in \eqref{eq:def_Psi} given the output of $\bm{v}_{\text{trig} \to \text{multi-freq}}(\bm{x})$. Recall that each Fourier function is a product of complex numbers: 
$$
F_{\bm{\nu}}(\bm{x}) 
= \exp(2 \pi i \bm{\nu} \cdot \bm{x})
= \prod_{j \in \text{supp}(\bm{\nu})} \exp(2\pi i \nu_j x_j)
= \prod_{j \in \text{supp}(\bm{\nu})} (\cos(2\pi i \nu_j x_j) + i \sin(2\pi i \nu_j x_j)). 
$$
In particular, the generation of $F_{\bm{\nu}}(\bm{x})$ involves the product of $\|\bm{\nu}\|_0$ complex numbers whose real and imaginary parts are given by the outputs of $\bm{v}_{\text{trig} \to \text{multi-freq}}(\bm{x})$. Moreover, since $\bm{\nu} \in \Lambda = \Lambda^{\text{HC}}_{d,n}$, we have
\begin{equation}
\label{eq:bound_on_norm_0_nu}
n \geq \prod_{j = 1}^d(|\nu_j| + 1) = \prod_{j \in \text{supp}(\bm{\nu})} (|\nu_j| + 1) \geq 2^{\|\bm{\nu}\|_0} \quad \Longrightarrow \quad \|\bm{\nu}\|_0 \leq \log_2 n.
\end{equation}
Therefore, computing each $F_{\bm{\nu}}(\bm{x})$ requires at most $\min\{d, \log_2 n\}$ products.

Now, we describe how to compute products of complex numbers using RePU networks. We would like to compute $\prod_{j=1}^k z_j$ for $z_j = x_j + i y_j\in \mathbb{C}$ for generic values $x_j, y_j \in \mathbb{R}$. To do so, we first find explicit formulas for the real and imaginary parts of this product:
\begin{align*}
\prod_{j=1}^k z_j 
& = \prod_{j=1}^k (x_j + i y_j) 
= \sum_{\bm{j} \in \{0,1\}^k}  \prod_{t=1}^k x_t^{j_t} (iy_t)^{1-j_t} 
= \sum_{\bm{j} \in \{0,1\}^k}  i^{k-\|\bm{j}\|_1} \prod_{t=1}^k x_t^{j_t} y_t^{1-j_t} \\
& = \underbrace{\left(\sum_{\substack{\bm{j} \in \{0,1\}^k\\ k-\|\bm{j}\|_1 \text{ even}}} (-1)^{\frac{k-\|\bm{j}\|_1}{2}} \prod_{t=1}^k x_t^{j_t} y_t^{1-j_t}\right)}_{=\Real\left(\prod_{j=1}^k z_j \right)} 
+ i \cdot \underbrace{\left(\sum_{\substack{\bm{j} \in \{0,1\}^k\\ k-\|\bm{j}\|_1 \text{ odd}}} (-1)^{\frac{k-\|\bm{j}\|_1-1}{2}} \prod_{t=1}^k x_t^{j_t} y_t^{1-j_t}\right)}_{=\Imag\left(\prod_{j=1}^k z_j \right)} .
\end{align*}
This shows that $\Real(\prod_{j=1}^k z_j )$ and $\Imag(\prod_{j=1}^k z_j)$ can be computed as linear combinations of $2^{k}$ products of $k$ real numbers. These correspond to the terms $\prod_{t=1}^k x_t^{j_t} y_t^{1-j_t}$, where we observe that 
$$
x_t^{j_t} y_t^{1-j_t} 
=
\begin{cases}
x_t & \text{if } j_t = 1,\\
y_t & \text{if } j_t = 0.
\end{cases}
$$
Hence, we can compute products of $k$ complex numbers $z_1,\ldots, z_k$ via a RePU network $P^{(k)}_{\mathbb{C}} : \mathbb{R}^{2k}\to \mathbb{C}$ that takes the real and imaginary parts of the complex numbers $z_j$ as inputs and constructed by stacking $2^k$ copies of $P^{(k)}$ atop, adding a linear layer realizing the linear combinations above, and a  final linear layer with complex weights $\begin{bmatrix}1 & i\end{bmatrix}$:
$$
\begin{bmatrix}
x_1\\ y_1 \\  \vdots \\ x_k \\ y_k
\end{bmatrix}
\xrightarrow[\text{of $P^{(k)}$}]{\text{$2^k$ copies}}
\left[
\prod_{t=1}^k x_t^{j_t} y_t^{1-j_t}
\right]_{\bm{j} \in \{0,1\}^k}
\xrightarrow[\text{layer}]{\text{linear}}
\begin{bmatrix}
\Real\left(\prod_{j=1}^k z_j \right) \\
\Imag\left(\prod_{j=1}^k z_j \right)
\end{bmatrix}
\xrightarrow[]{\begin{bmatrix}1 & i\end{bmatrix}}
\Real\left(\prod_{j=1}^k z_j \right) + i 
\Imag\left(\prod_{j=1}^k z_j \right).
$$
Therefore, recalling \eqref{eq:depth_width_bounds_product_net}, we obtain 
\begin{align*}
\text{width}(P^{(k)}_{\mathbb{C}}) & = 2^k \cdot \text{width}(P^{(k)}) \leq C_3 \cdot p^{(1)}_\ell \cdot  2^k \cdot k  \\
\text{depth}(P^{(k)}_{\mathbb{C}}) & = \text{depth}(P^{(k)}) + 1 \leq C_4 \cdot p^{(2)} \cdot \log_2{k},
\end{align*}
for come universal constants $C_3, C_4 >0$. To conclude, the module $\bm{v}_{\textnormal{multi-freq} \to \Lambda}$ is such that 
\begin{align*}
\bm{v}_{\textnormal{multi-freq} \to \Lambda}\left(\begin{bmatrix}\cos(2\pi\nu x_j)\\ \sin(2\pi\nu x_j)\end{bmatrix}_{j \in [d], \nu \in \{-\nu_{\max},\ldots, \nu_{\max}\}}\right) 
& = \left[r_{\bm{\nu}}\cdot P^{(\|\bm{\nu}\|_0)}_{\mathbb{C}}\left(\begin{bmatrix}\cos(2\pi\nu_j x_j)\\ \sin(2\pi\nu_j x_j)\end{bmatrix}_{j \in \text{supp}(\bm{\nu})}\right)\right]_{\bm{\nu}\in \Lambda}\\
& = r_{\bm{\nu}}F_{\bm{\nu}}(\bm{x}),
\end{align*}
where $r_{\bm{\nu}} = 1/(4\pi^2\|\bm{\nu}\|_2^2 + \rho/a_{\bm{0}})$ as desired. Note that the networks $P^{(\|\bm{\nu}\|_0)}_{\mathbb{C}}$,  with $\bm{\nu} \in \Lambda$, can be stacked using a construction analogous to that of Fig.~\ref{Fig:trig->pow}. Also observe that for $\bm{\nu}=\bm{0}$ we simply let $P^{(0)}_{\mathbb{C}} \equiv 1$. In addition, the multiplication by the rescaling factor $r_{\bm{\nu}}$ can be realized by modifying weights in the last layer (we don't need to add a $P^{(2)}$ module because $r_{\bm{\nu}}$ is independent of $\bm{x}$). In summary, recalling that $N = |\Lambda|$ and using \eqref{eq:bound_on_norm_0_nu} and the depth and width bounds for $P^{(n)}_{\mathbb{C}}$, the architecture bounds for the whole network module are
\begin{align*}
\text{width}(\bm{v}_{\textnormal{multi-freq} \to \Lambda}) & = \sum_{\bm{\nu} \in \Lambda} \text{width}(P^{(\|\bm{\nu}\|_0)}_{\mathbb{C}})  \leq N \cdot \text{width}(P^{(\min\{d, \log_2 n\})}_{\mathbb{C}}) \\
&\leq p^{(1)}_{\ell} \cdot C_3 \cdot N \cdot \min\{2^d, n\} \cdot \min\{d, \log_2 n\} \\
\text{depth}(\bm{v}_{\textnormal{multi-freq} \to \Lambda}) & = \max_{\bm{\nu} \in \Lambda} \text{depth}(P^{(\|\bm{\nu}\|_0)}_{\mathbb{C}})
= \text{depth}(P^{(\min\{d,2^n\})}_{\mathbb{C}})
\leq C_4 \cdot p^{(2)} \cdot \log_2(\min\{d,2^n\}).
\end{align*}

\paragraph{Final architecture bounds for $\psi$.} Combining the architecture bounds obtained for each network module in Steps 1, 2 and 3,  and recalling that $\psi(\bm{x}) = \bm{z}^\top \psi_{\Lambda}(\bm{x})$, we obtain
\begin{align*}
    \text{width}\left(\psi\right)
    &=\max\left\{\text{width}(\bm{v}^{(2)} \circ \bm{q}^{(1)}),\text{width}\left(\bm{v}_{\text{trig}\rightarrow\text{multi-freq}}\right),\text{width}\left(\bm{v}_{\text{multi-freq}\rightarrow\Lambda}\right)\right\}\\
    &\leq\max\left\{
    2d, \;
    C_1\cdot p_{\ell}^{(1)} \cdot d \cdot \nu_{\text{max}}^2, \;
    p^{(1)}_{\ell} \cdot C_3 \cdot N \cdot \min\{2^d, n\} \cdot \min\{d, \log_2 n\}
    \right\}\\
    &\leq \min\left\{4n^516^d,e^2n^{2+\log_2d}\right\}\cdot c_{(\ell)}^{(1)}\cdot d.
\end{align*}
Using the hyperbolic cross cardinality bound \eqref{eq:card_bound_HC}, we see that
$$
\text{width}\left(\psi\right)
\leq C_5 \cdot p_{\ell}^{(1)} \cdot \min\left\{4n^516^d,e^2n^{2+\log_2d}\right\} \cdot d \cdot \min\{2^d, n\},
$$
for some universal constant $C_5>0$. Moreover, 
\begin{align*}
    \text{depth}\left(\psi\right)
    & = \text{depth}(\bm{v}^{(2)} \circ \bm{q}^{(1)}) + \text{depth}\left(\bm{v}_{\text{trig}\rightarrow\text{multi-freq}}\right) + \text{depth}\left(\bm{v}_{\text{multi-freq}\rightarrow\Lambda}\right) + 1\\
    & \leq C_6 \cdot p^{(2)} \cdot \left(\log_2(n)+\min\{\log_2 d, n\}\right),
\end{align*}
where $C_6$ is a universal constant. Letting $c_{\ell}^{(1)} = C_5 \cdot p_{\ell}^{(1)}$ and $c^{(2)} = C_6 \cdot p^{(2)}$ yields \eqref{eq:PET_width_bound} and \eqref {eq:PET_depth_bound} and concludes the proof. \hfill $\blacksquare$

\section{Conclusions and open problems}
\label{sec:conclusion}

We have shown a new convergence result for PINNs (Theorem~\ref{thm:PET}) in the form of a practical existence theorem for the numerical solution of high-dimensional, periodic diffusion-reaction problems. This result establishes the existence of a class of periodic PINNs able to achieve the same accuracy as a sparse approximation method (namely, CFC) and using a number of training samples that scales only logarithmically or, at worst, linearly with the PDE domain's dimension $d$. The mild scaling of the sample complexity with respect to $d$ is numerically confirmed through experimentation (see Fig.~\ref{Fig:PINN_sample_dimension}). Our practical existence theorem relies on a new CFC convergence result for diffusion-reaction problems (Theorem~\ref{thm:CFC}) and an explicit construction of periodic PINNs able to replicate the Fourier basis. We have also experimentally confirmed the robustness of periodic PINNs to solve high-dimensional PDEs with respect to the network hyperparameters. Finally, we compared periodic PINNs with CFC, showing that the latter could achieve much higher (but, sometimes, worse) accuracy than the former depending on the sparsity properties of the PDE solution. On the other hand, the performance of periodic PINNs is numerically observed to be consistent across the three examples considered (see Fig.~\ref{Fig:Comparison}).

We conclude by mentioning some gaps between theory and practice and open problems for future research. First, Theorem~\ref{thm:PET} relies on RePU or linear activations and not on more standard ones such as ReLU or tanh. The main reason to work with RePU activations is that they allow to exactly replicate products and, hence, using the construction illustrated in \S\ref{sec:proof_PET}, Fourier functions. The argument of Theorem~\ref{thm:PET} could be generalized to more general activations, such as ReLU or tanh. In that case, however, products (and, hence, Fourier basis functions) could only be approximated and not exactly replicated. This issue could be handled, see \citep{adcock2024learning} and references therein, but it will introduce nontrivial technical difficulties due to the fact that one would have to deal with an approximate CFC matrix, whose error with respect to the true CFC matrix should be carefully controlled in the analysis.

Another important aspect is the presence of sufficient conditions \eqref{eq:diff_expansion} and \eqref{eq:suff_cond_PET} on the PDE coefficients $a$ and $\rho$ in Theorem~\ref{thm:PET}. These are inherited by the CFC convergence analysis (Theorem~\ref{thm:CFC}) and are an artifact of its proof. It was shown in the diffusion equation case that these conditions are sufficient but far from being necessary, see \citep[\S4.4]{wang2022compressive}. This can also be seen by observing that, for $\rho \neq 0$, problem \eqref{eq:diffusion_eq_periodic_1} with coefficients $(a, \rho, f)$ is equivalent to the same PDE with coefficients $(\rho^{-1}a, 1, \rho^{-1}f)$. This invariance to rescaling by $\rho$ is not inherited by condition \eqref{eq:suff_cond_PET}. This is likely to be the case in the diffusion-reaction case as well since the convergence analysis is based on the same argument.

The optimal network $\hat{\psi}$ of Theorem~\ref{thm:PET} is assumed to be trained by \emph{exactly} minimizing a regularized RMSE loss. Of course, this is not what happens in practice, where the loss is usually only approximately minimized using a stochastic gradient descent method. Taking into account the error introduced by the training algorithm in the analysis is an important open question. Moreover, only the last layer of $\hat{\psi}$ is trained, whereas the previous layers are explicitly constructed. This gap between theory and practice is an intrinsic limitation of the argument that Theorem~\ref{thm:PET} relies on. Other types of training loss could also be considered. The regularized loss \eqref{eq:def_loss} considered in this paper is of the square-root LASSO type, which has the advantage of having an optimal choice of tuning parameter $\lambda$ independent of the noise corrupting the samples. However, finding this optimal value in practice through e.g., cross-validation could be more challenging than using a more standard LASSO-type loss (obtained by replacing the RMSE data-fidelity term with an MSE one in \eqref{eq:def_loss})---see \cite{adcock2019correcting, berk2024square}. The theory could be extended to a LASSO-type loss, but the corresponding restrictions on $\lambda$ would be depending on $u$ and $\Lambda$ since the noise corrupting the samples contains the truncation error (recall Step~4 in the proof of Theorem~\ref{thm:CFC}).

Hence, differences exist between the theoretical setting of Theorem~\ref{thm:PET} and the practical implementation of periodical PINNs in \S\ref{sec:numerics}, for which we have followed a setup closer to what is commonly employed in the literature. These differences include the choice of activation, the network's width and depth (which do not saturate the bounds \eqref{eq:PET_width_bound}--\eqref{eq:PET_depth_bound} of Theorem~\ref{thm:PET}), the use of complex-valued weights and the presence of a regularization term in the loss function. It would be interesting to study whether a numerical implementation that follows the setting of Theorem~\ref{thm:PET} more closely  would give any practical benefit over the setup considered in \S\ref{sec:numerics}---see Appendix~\ref{sec:additional_numerics} for some initial results in this direction. On this note, it is also worth observing that the optimal network $\hat{\psi}$ of Theorem~\ref{thm:PET} is sparsely connected and its last layer is approximately sparse due to the presence of $\ell^1$ regularization. Understanding the potential practical benefits of sparsely-connected PINNs is also an interesting avenue of future work.

An important direction of future investigation aimed at bridging the gap with real-world applications is the study of time-dependent PDEs. In order to solve a time-dependent diffusion-reaction equation, a natural option would be to employ an implicit Euler discretization in time. This would, in turn, lead to solving a steady-state diffusion equation of the form \eqref{eq:diffusion_eq_periodic_1} at each time step. In the case of CFC, this scheme could be studied using tools from spectral methods analysis (see, e.g., \cite{canuto2006spectral}). Yet, building a rigorous connection between this approach and a PINN-based approximation of a time-dependent diffusion-reaction equation where the loss involves both space and time variables (as it is typically done) does not appear to be a straightforward task and deserves further research. Moving beyond reaction-diffusion problems is also a possible direction of future work. One could consider, e.g., more general advection-diffusion-reaction operators of the form 
$\mathcal{L}[u](\bm{x}) = - \nabla \cdot (a(\bm{x}) \nabla u (\bm{x})) + \bm{b}(\bm{x}) \cdot \nabla u(\bm{x}) + \rho(\bm{x}) u(\bm{x})$, 
with suitable restrictions on $a(\bm{x})$, $\bm{b}(\bm{x})$, and $\rho(\bm{x})$ to ensure ellipticity. The main task required to extend the theory to this case would be the estimation of the smallest and largest eigenvalues of the Gram matrix $G$ (see Step~1 in the proof of Theorem~\ref{thm:CFC}). While this can be done using the same tools employed in this paper, this is likely to lead to substantially more technically involved computations and to the introduction of extra assumptions on $a(\bm{x})$, $\bm{b}(\bm{x})$, and $\rho(\bm{x})$ for the Riesz property to hold.

An additional limitation of our setting is that it only covers periodic boundary conditions. From a practical viewpoint, PINNs can handle different types of boundary conditions in at least two ways: (i) through the addition of a penalty term in the loss function aimed at enforcing the specific boundary conditions considered; (ii) in the case of homogeneous Dirichlet boundary conditions, by considering modified network ansatzes of the form, e.g., $\psi(\bm{x}) = N(\bm{x}) \cdot \prod_{i=1}^d x_i(1-x_i)$, where $N(\bm{x})$ is a trainable network and the extra factor enforces zero boundary conditions on $[0,1]^d$. 
Compressive spectral collocation methods can also be implemented for non-periodic boundary conditions---see  \cite{brugiapaglia2020compressivespectral} for numerical and theoretical results on steady-state diffusion equations over $[0,1]^d$. Note, however, that the approach in \cite{brugiapaglia2020compressivespectral} is heavily affected by the curse of dimensionality. Similarly to the PINNs case, one could add a penalty term to enforce boundary conditions or modify the spectral basis functions via multiplication by an extra term. Although this is feasible in practice, the corresponding theoretical analysis is expected to be considerably harder than the Fourier case due to the lack of nice algebraic properties of the spectral basis. In summary, extending CFC to non-periodic high dimensional problems and establishing a rigorous link between CFC and PINNs in that setting would be a natural open question stemming from this work.

In summary, there are still important gaps between theory and practice and the convergence theory of PINNs remains a key area of active research. Nonetheless, Theorem~\ref{thm:PET} builds a first important bridge between PINNs' convergence analysis and sparse approximation theory. We hope it will enable further research advances in the theoretical analysis and numerical implementation of physics-informed deep learning in the coming years.

\section*{Acknowledgements}

SB acknowledges the support of the Natural Sciences and Engineering Research Council of Canada (NSERC) through grant RGPIN-2020-06766, the Fonds de Recherche du Qu\'ebec Nature et Technologies (FRQNT) through grants 313276 and 359708, the Faculty of Arts and Science of Concordia University, and Applied Math Lab of the Centre de Recherches Math\'ematiques (CRM). ND acknowledges the support of Florida State University through the CRC 2022-2023 FYAP grant program. WW acknowledges the support of Concordia University through the Horizon Postdoctoral Fellowship program (2021-2023). SB thanks Prof.~Alexander Shnirelman (Concordia University) for helpful discussions about norm equivalences in $H^2(\mathbb{T}^d)$. The authors acknowledge the Digital Research Alliance of Canada for providing the computational resources needed to run the numerical experiments. The authors would also like to acknowledge the anonymous reviewers, whose feedback substantially improved the paper's quality.

\appendix

\section{Additional numerical experiments}

\label{sec:additional_numerics}

This appendix contains additional numerical experiments aimed at testing the numerical impact of some of the constructions adopted in Theorem~\ref{thm:PET}. In particular, we investigate the impact of explicit sparse regularization on the last layer during training and of the use of Fourier features (as opposed to a trainable periodic layer).

\paragraph{The impact of sparse regularization.}

Theorem~\ref{thm:PET} shows the existence of a neural network with an approximately sparse final layer due to the regularization term $\mathcal{R}(\psi)$ employed during training (recall \eqref{eq:regulariz_term}). On the other hand, in \S\ref{sec:numerics}, we trained periodic PINNs without enforcing any sparsity and observed that this tends to produce solutions with relative $L^2$-error between $10^{-2}$ and $10^{-3}$ for all the examples considered (assuming to use enough collocation points). In the experiment of Fig.~\ref{Fig:impact_sparse_regularization}, we test whether explicitly promoting sparsity can further improve accuracy.
\begin{figure}[t]
\centering
  \includegraphics[width=.32\textwidth]{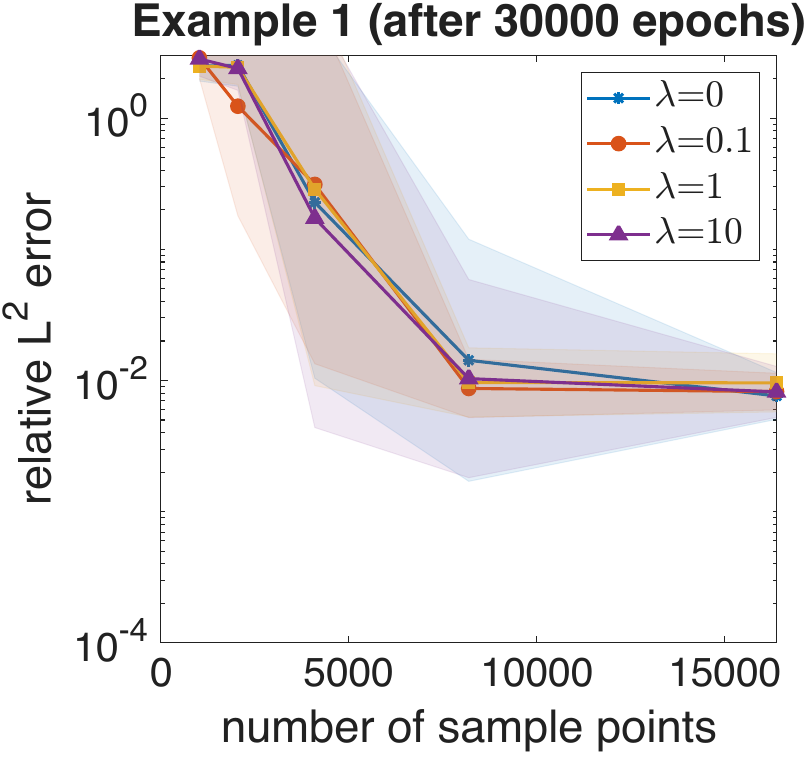}
  \includegraphics[width=.32\textwidth]{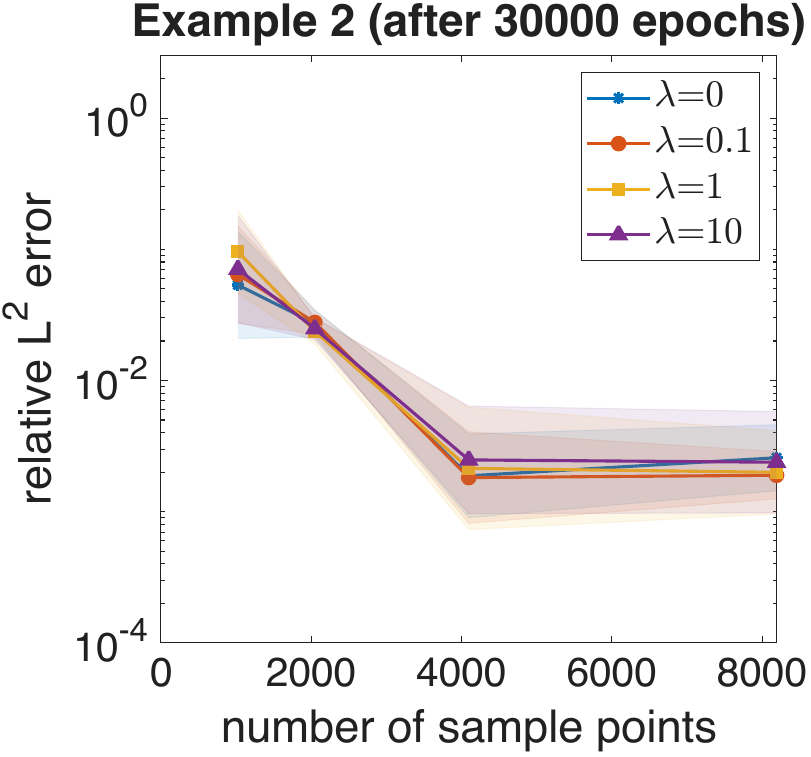}
  \includegraphics[width=.32\textwidth]{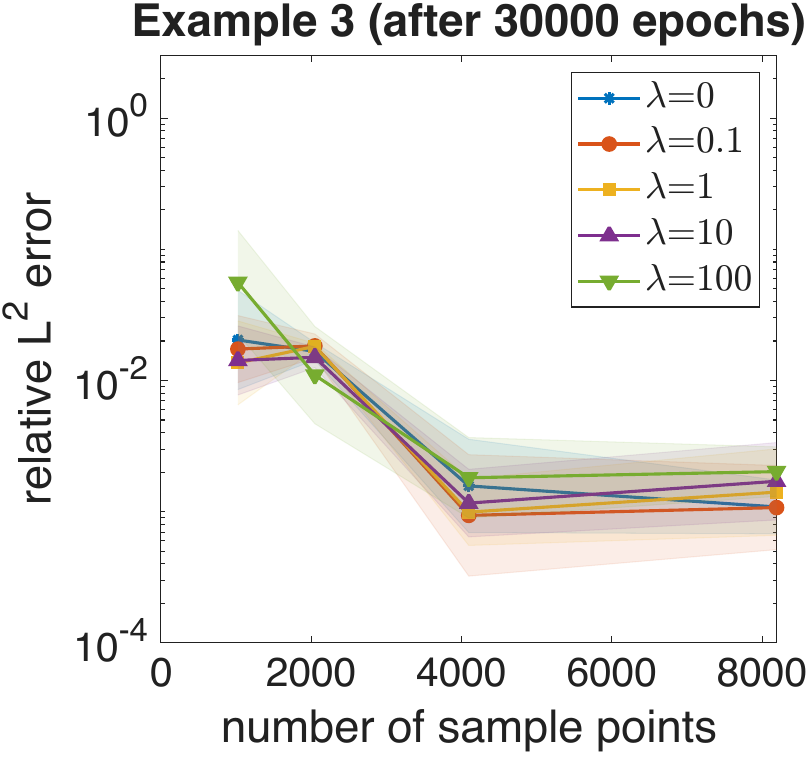}
\caption{(Impact of sparse regularization) Relative $L^2$-error versus training sample count for PINNs trained with with last layer $\ell^1$-regularization and tuning parameter of the form $\lambda=10^{k}$ or without regularization ($\lambda  =0$). Plots from left to right refer to Examples 1--3 ($d=10$), respectively, defined in \eqref{eq:exact1}--\eqref{eq:exact3}.} \label{Fig:impact_sparse_regularization}
\end{figure}

We take the same periodic PINN architecture and training procedure described in \S\ref{sec:numerics} and compare (i) a baseline model with no sparsity penalty, and (ii) the same model with an additional $\ell^1$ penalty term applied to the weights of the last layer. We consider values of the tuning parameter spanning several orders of magnitude, of the form $\lambda = 10^{k}$ for different integer values of $k$. For each $\lambda$, and for each number $m$ of collocation points, we train for a fixed budget of $30000$ epochs and show the corresponding relative $L^2$-error.

We observe two consistent trends. In the low-sample regime (smaller $m$), adding an $\ell_1$ penalty on the last layer yields accuracy comparable to unregularized baseline when the penalty weight $\lambda$ is appropriately tuned, suggesting that sparse regularization does not harm (and could help) data efficiency when the number of sample points is small. As $m$ increases, both the unregularized and $\ell_1$-regularized models converge toward a similar error floor, so the effect of regularization becomes negligible. Overall, across all tested settings the $\ell_1$ penalty with optimal choice of tuning parameter is never worse than the baseline for Examples 1-3.

\paragraph{Trainable periodic layer vs.\ Fourier features.}

Theorem~\ref{thm:PET} constructs a periodic PINN whose first few layers map the input $\bm{x} \in \mathbb{R}^d$ to fixed trigonometric features of the form $(\cos(2\pi \nu x_i), \sin(2\pi \nu x_i))$, for $i\in [d]$ and $\nu \in \{ -\nu_{\max}, \dots, \nu_{\max}\}$ (see Figure~\ref{Fig:NN_architecture}). This construction is the first step towards showing the existence of a class $\mathcal{N}$ of trainable networks with controlled width and depth that can accurately approximate the PDE solution from limited samples. 
A natural question is whether these Fourier features are only a theoretical device or whether it can also be numerically effective when implemented. To address this, we use a frozen set of Fourier features in the first layer instead of learning the periodic layer’s phase parameters as in the PINN tests in \S\ref{sec:numerics}, where the periodic layer in \eqref{eq:def_periodic_layer} was considered. 

The results are shown in Fig.~\ref{fig:non_trainable_periodic_layer}.
\begin{figure}[t]
\centering
  \includegraphics[width=.32\textwidth]{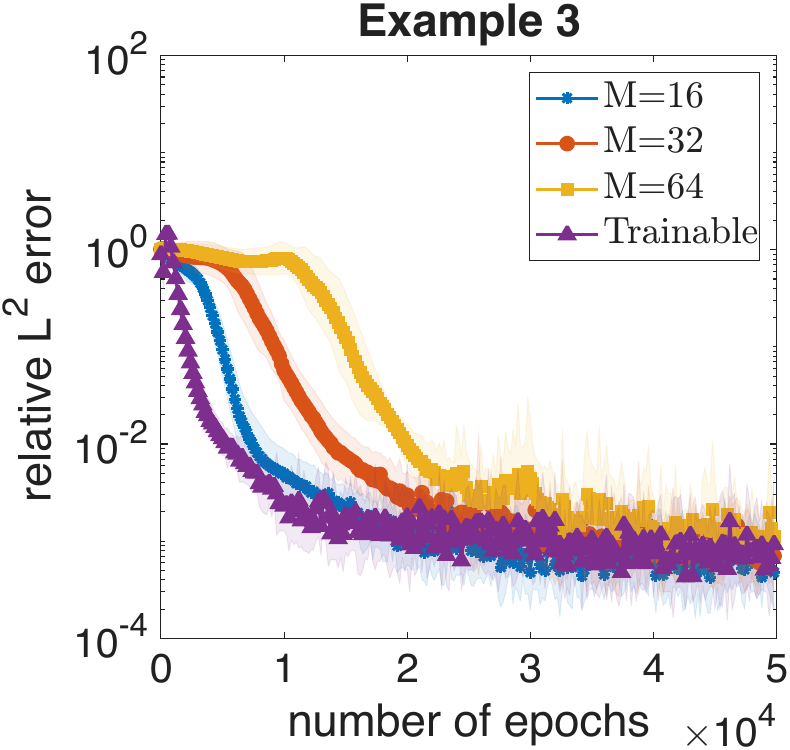}
\caption{(Trainable vs.\ non-trainable periodic layer) Comparison of the standard trainable periodic layer versus non-trainable first Fourier features layer $(\cos(2 \pi \nu x_i), \sin(2 \pi \nu x_i)))$ with $i \in [d]$ and $\nu \in [M]$ and frequency cap $M=\{16, 32, 64\}$. The plot shows the relative $L^2$-error as a function of the number of training epochs.} \label{fig:non_trainable_periodic_layer}
\end{figure}
We fix the first layer to a predetermined set of trigonometric features of the form $(\cos(2 \pi \nu x_i), \sin(2 \pi \nu x_i)))$ with integer frequencies up to a chosen cutoff $M$, i.e., $\nu \in [M]$. The subsequent layers and training are the same as in \S\ref{sec:numerics}. We then compare this ``frozen periodic layer'' model with the standard model where the first periodic layer is trainable by plotting the relative $L^2$ error versus training epochs. All runs use the same settings (exact solution from Example 3, $d=6$, number of sample points $m=10000$).

For a sufficiently large number of epochs, the non-trainable periodic layer model reaches essentially the same final relative $L^2$ error as the PINN with the trainable first periodic layer. We observe that architectures with larger $M$ are harder to train. This is likely due to the fact that the first layer has width equal to $2M$. The final error is slightly better than the baseline for $M = 16, 32$ and slightly worse than the baseline for $M=64$ (although the relative $L^2$-error seems to still be decreasing). This demonstrates that the explicit Fourier basis used in Theorem~\ref{thm:PET} is numerically effective and its performance is comparable to that of PINNs with a trainable periodic layer.

\bibliography{biblio}

\end{document}